\documentclass[11pt]{article}
\usepackage[margin=1in]{geometry}
\usepackage{amsmath,amssymb,amsthm,mathtools}
\usepackage{booktabs}
\usepackage{pgfplots}
\pgfplotsset{compat=1.17}
\usepackage[hidelinks]{hyperref}
\usepackage{enumitem}

\newtheorem{theorem}{Theorem}
\newtheorem{proposition}[theorem]{Proposition}
\newtheorem{corollary}[theorem]{Corollary}

\theoremstyle{definition}
\newtheorem{example}[theorem]{Example}
\newtheorem{remark}[theorem]{Remark}

\newcommand{\R}{\mathbb{R}}
\newcommand{\E}{\mathbb{E}}
\newcommand{\Xh}{\widehat{X}}
\newcommand{\Wh}{\widehat{W}}
\newcommand{\Hh}{\widehat{H}}
\newcommand{\fro}[1]{\lVert #1\rVert_F}
\newcommand{\ip}[2]{\langle #1,#2\rangle_F}
\newcommand{\base}{\mathrm{BASE}}
\newcommand{\qep}{\mathrm{QEP}}
\DeclareMathOperator{\clip}{clip}

\title{When Does Compensating Inherited Quantization Error Help?\\
Mean-Square Guarantees for Layer-Wise Quantization and Their Limits}
\author{Tianhao Guo\thanks{Tianhao Guo is with the College of Physics and Electronic Engineering, Shanxi University, Taiyuan 030006, China (e-mail: \href{mailto:tianhao_guo@sxu.edu.cn}{tianhao\_guo@sxu.edu.cn}). ORCID: \href{https://orcid.org/0000-0002-1317-868X}{0000-0002-1317-868X}.\newline Code: \url{https://github.com/tellsgo/quantization-compensation}.}}
\date{12 October 2026}

\begin{document}
\maketitle

\begin{abstract}
Layer-wise post-training quantization methods let each layer compensate for the quantization error inherited from earlier layers. We ask under what assumptions such compensation guarantees a network output error no larger than without it. For linear networks of any depth, with the required calibration Gram matrices invertible, if the rounding error of each layer has zero conditional mean and a deterministic second-moment map common to both schemes, as under subtractive dither on an unbounded grid, then compensation with strengths in $[0,1]$ fixed in advance gives a mean-square output error no larger than no compensation, each scheme being followed along its own trajectory. The exact gain separates error removed directly from reduced excitation of later rounding noise. Outside this setting the comparison can fail. With rectified linear activations it fails for two independent reasons: noise changes activation states, and, even when no state changes, the correction moves error from samples on which a unit is inactive to samples on which it is active. With stable activations, commutators between the calibration projection and the activation patterns bound the excess. With a bounded codebook, even unique exact layer-wise minimizers with no overload can give a larger output error. For fixed rounding errors it already fails for two scalar linear layers, refuting a published universal guarantee. An exact residual identity gives the necessary and sufficient condition on a common layer input. All these comparisons hold the calibration input fixed. On two 8B models at 4 bits, fixed half-strength compensation beat uncompensated rounding in every model, text and draw tested, yet no rule for choosing the strength beat a layer-wise rule consistently, and lowering the full-network loss on the data used to choose did not reliably lower it on new text.
\end{abstract}

\section{Introduction}

Layer-wise post-training quantization processes the linear maps of a network one at a time, choosing for each a quantized weight matrix that reproduces the layer output on calibration data \cite{gptq,awq,quip,quipsharp}. Because every layer is treated separately, the error made in early layers reaches later layers through their inputs. Several recent methods change the layer objective so that a layer matches the output of the \emph{original} network on the \emph{original} input, while being fed the quantized input; the layer then compensates for the inherited error. The same objective appears under the names quantization error propagation \cite{qep}, asymmetric calibration \cite{gptaq} and activation drift correction \cite{watersic}, and in \cite{qronos,nestquant}.

These methods share the motivation of reducing errors inherited from earlier quantized layers. This raises a stronger question: under what assumptions does layer-wise compensation guarantee that the network output error is no larger than without compensation? The answer is not obvious. Each layer solves a least-squares problem, and the solution is closer to the target than the uncorrected one \emph{before} rounding. After rounding this need not be so, and a comparison on a common input says nothing yet about two schemes that, from the second layer on, see different inputs. That is the question of this paper. The comparison requires us to follow both the inherited error and the activation that multiplies the next rounding error. Reducing the first is not enough; the second must be controlled along the two trajectories as well. Throughout, the calibration input is fixed and the randomness is that of rounding; whether an improvement on calibration data carries over to other inputs is a separate question, on which Section~\ref{sec:empirical} reports experiments.

Related analyses exist. SPFQ \cite{spfq} gives high-probability full-network error bounds for a stochastic path-following quantizer. The analysis of OPTQ and Qronos \cite{qronostheory} gives layer-wise bounds for both algorithms and compares their error decompositions and upper bounds. These results do not establish an ordering of the realized total errors of a compensated and an uncompensated scheme.

We establish a network-level comparison under explicit noise assumptions, give the gain exactly, and determine where the comparison fails. The fixed-error guarantee of \cite{qep} is one such failure. Its mechanism is simple: the inherited error and the new rounding error can cancel, and removing the first leaves the second without its counterpart. Whether compensation helps is therefore a question about the correlation between the two errors, and the positive results below are obtained by controlling it. We use the formulation of \cite{qep} to study a family of compensated layer objectives parameterized by the strength. The related methods motivate the analysis; their quantizers, solvers and regularization choices require separate treatment. QEP denotes the compensated scheme and BASE the uncompensated one in the model studied here.

\paragraph{Contributions.}
\begin{enumerate}[leftmargin=*,itemsep=2pt]
\item \textbf{A comparison along two trajectories, and the gain it yields} (Section~\ref{sec:network}). For linear networks of any depth, with rectangular layers allowed, Theorem~\ref{thm:linear} gives mean-square output error no larger with compensation than without it, for strengths in $[0,1]$ fixed in advance, under the required Gram condition, zero conditional mean and a deterministic second-moment map common to both schemes. The proof carries two moments through the layers, that of the error and that of the quantized activation; the second governs how much the next rounding error is amplified. Proposition~\ref{prop:gain} separates the gain into error removed directly and reduced excitation of later noise. These compare compensation with no compensation. Ordering two non-zero strength schedules requires additional row-space conditions (Corollary~\ref{cor:sched}); the closed form of Example~\ref{ex:iso} gives a ratio of expected errors in the specified random-teacher model, not a prediction for every fixed network.
\item \textbf{Two obstructions for ReLU networks, and their geometric cost} (Section~\ref{sec:relu}). The comparison fails under the same dither when noise changes activation states (Example~\ref{ex:relu}), and also when no state changes (Example~\ref{ex:noflip}). In the latter case the correction mixes calibration samples across activation patterns. Proposition~\ref{prop:commute} characterizes when this mixing is harmless for all inherited errors. Under stable activations, Theorem~\ref{thm:relu} bounds the excess by a term controlled by commutators of the calibration projection with the sample groups; Corollary~\ref{cor:hr} gives its small-error order.
\item \textbf{The local condition and the limits of the network comparison} (Sections~\ref{sec:onelayer}, \ref{sec:counter} and~\ref{sec:actual}). An exact residual identity gives the necessary and sufficient condition for compensation to help on a common input. Its expectation covers subtractive dither with fixed inter-column feedback, without giving a network guarantee for GPTQ or QuIP. Fixed perturbations can instead cancel inherited error, invalidating the published universal guarantee and strength monotonicity. For an actual four-level codebook, an open family has a larger output error despite unique exact layer-wise minimizers and interior targets, with no overload (Theorem~\ref{thm:actual}). Synthetic computations illustrate the positive results, and a scoped Lean formalization provides a complementary check (Sections~\ref{sec:numerics} and~\ref{sec:lean}). Experiments on two small and two 8B models address the question the theorems leave open, whether gains on calibration data carry over to new inputs (Section~\ref{sec:empirical}).
\end{enumerate}

The paper goes from one layer to the network: the per-layer identity and its expectation (Section~\ref{sec:onelayer}), linear networks (Section~\ref{sec:network}), ReLU networks (Section~\ref{sec:relu}), then fixed rounding errors and the published guarantee (Section~\ref{sec:counter}) and actual quantizers (Section~\ref{sec:actual}).

\paragraph{Scope.} The tools are standard: orthogonal projections, dithered rounding and order relations between second-moment matrices. The contribution is the comparison they yield and its limits. We propose no new quantizer and do not dispute reported empirical gains. We have not established stable activations in trained networks. The experiments of Section~\ref{sec:empirical} do not satisfy our hypotheses and are not tests of them. The projection contraction, continuous optimum and each scheme's own bound in \cite{qep} remain valid; Section~\ref{sec:counter} isolates the comparisons that fail.

\section{Setting}\label{sec:setting}

Consider a feedforward network, with the notation of \cite[Appendix~B.3]{qep},
\[
X_1=X,\qquad X_{l+1}=\sigma_l(W_lX_l),\qquad l=1,\dots,L,
\]
where each $\sigma_l$ is $\gamma_l$-Lipschitz in the Frobenius norm and $\sigma_l(0)=0$. A quantized forward pass uses weights $\Wh_l$: $\Xh_1=X$, $\Xh_{l+1}=\sigma_l(\Wh_l\Xh_l)$. Write $\delta_l=X_l-\Xh_l$ and $\Hh_l=\Xh_l\Xh_l^\top$. We require $\Hh_l$ to be invertible only at layers of positive strength: a layer with $\alpha_l=0$ is not corrected, its target is $W_l$, and no Gram matrix is inverted there. The weights may be rectangular; invertibility of quantized weights is an additional assumption of the published fixed-error guarantee examined in Section~\ref{sec:counter}, not of the positive results.

The matrices $E_l$ represent rounding errors. Sections~\ref{sec:network} and~\ref{sec:relu} impose conditional moment assumptions on random errors; Section~\ref{sec:counter} instead fixes the same arbitrary matrices in both schemes. The uncorrected scheme uses $\Wh_l^{\base}=W_l+E_l$. QEP uses
\[
W_l^*(\alpha_l)=W_l+\alpha_lW_l\delta_l\Xh_l^\top\Hh_l^{-1},\qquad
\Wh_l^{\qep}=W_l^*(\alpha_l)+E_l,\qquad \alpha_l\in[0,1].
\]
For $M\in\{\base,\qep\}$ define the residuals and the bound
\[
R_l^M=W_lX_l-\Wh_l^M\Xh_l^M,\qquad
U^M=\sum_{k=1}^{L}\Bigl(\prod_{s=k+1}^{L}\gamma_s\lVert W_s\rVert_2\Bigr)\gamma_k\fro{R_k^M}.
\]

\paragraph{Summary of the results.} Several different comparisons arise, and their answers differ according to whether they are read exactly, to first order in a fixed shared perturbation, or in mean square for errors with zero conditional mean and a common second-moment map, of which subtractive dither is the main instance. Table~\ref{tab:map} records what this paper establishes; ``open'' means not settled here.

\begin{table}[h]
\centering\small
\begin{tabular}{@{}p{4.3cm}p{3.2cm}p{3.2cm}p{4.0cm}@{}}
\toprule
Statement & Exact, shared $E$ & First order, shared $E$ & Mean square, zero conditional mean\\
\midrule
$U^{\qep}\le U^{\base}$ & fails (Thm.~\ref{thm:counter}) & fails (Thm.~\ref{thm:counter}) & fails for some error laws allowed by Thm.~\ref{thm:linear} (Sec.~\ref{sec:network}); open for uniform dither\\
Output error of QEP $\le$ that of BASE & fails (Thm.~\ref{thm:counter}); for deterministic codebooks see Thm.~\ref{thm:actual} & fails (Thm.~\ref{thm:counter}) & linear networks: holds (Thm.~\ref{thm:linear}). ReLU: fails in general (Ex.~\ref{ex:relu}, \ref{ex:noflip}); holds in the commuting case of Thm.~\ref{thm:relu}, otherwise up to the excess bounded there\\
$\fro{\delta_{L+1}^{\qep}}\le U^{\base}$ & fails (Ex.~\ref{ex:composite}) & holds, in the repaired form of App.~\ref{app:more} & holds for linear networks (Thm.~\ref{thm:linear}) and in the commuting case of Thm.~\ref{thm:relu}; otherwise open\\
Output error non-increasing in $\alpha$ & fails (Thm.~\ref{thm:alpha}) & fails (Thm.~\ref{thm:alpha}) & holds for two linear scalar layers (Prop.~\ref{prop:dither}); fails between non-zero schedules for sign-valued errors (Sec.~\ref{sec:network})\\
Layer residual on a common input, QEP $\le$ BASE & for $\alpha>0$, iff $2c\le(2-\alpha)d$ (Cor.~\ref{cor:iff}) & same condition & holds for fixed-step dither on an unbounded grid (Thm.~\ref{thm:meansq}); can fail under overload (Sec.~\ref{sec:dither})\\
\bottomrule
\end{tabular}
\caption{What holds and what fails. The first four rows compare the two schemes on their own trajectories; the last row is a per-layer comparison on a common input. In the last column the bounds are compared through $\E(U^M)^2$, and entries established only for sign-valued errors say so. The network comparisons require the common deterministic second-moment map as well as zero conditional mean, and the additional hypotheses of the cited results.}\label{tab:map}
\end{table}

\section{One layer: when the correction helps}\label{sec:onelayer}

We begin with a single layer on a common input. An exact identity decides whether the correction helps for a given rounding error; taking expectations gives the conditions under which it helps on average, and subtractive dither supplies them. The network-level results of Sections~\ref{sec:network} and~\ref{sec:relu} build on this section.

\subsection{An exact identity}\label{sec:identity}

Fix one layer and one quantized input $\Xh$, and drop the layer index. Write
\[
A=W\delta,\qquad B=E\Xh,\qquad P=\Xh^\top\Hh^{-1}\Xh,\qquad R_\alpha=A(I-\alpha P)-B .
\]
Thus $R_0$ is the BASE residual on this input and $R_\alpha$ the QEP residual with strength $\alpha$. Let
\[
d=\fro{AP}^2,\qquad c=\ip{AP}{B}.
\]

\begin{proposition}[Residual identity]\label{prop:identity}
Let $A,B$ be real matrices of the same shape and $P$ a symmetric idempotent matrix. For all real $\alpha,\beta$,
\begin{align}
\fro{R_\alpha}^2-\fro{R_0}^2&=-\alpha(2-\alpha)\,d+2\alpha\,c,\label{eq:gap}\\
\fro{R_\beta}^2-\fro{R_\alpha}^2&=(\beta-\alpha)\bigl[(\alpha+\beta-2)\,d+2c\bigr].\label{eq:pair}
\end{align}
\end{proposition}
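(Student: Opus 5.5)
The plan is to expand $\fro{R_\alpha}^2$ as a quadratic polynomial in $\alpha$ and read off both identities. Write $R_\alpha=R_0-\alpha AP$, where $R_0=A-B$. Then
\[
\fro{R_\alpha}^2=\fro{R_0}^2-2\alpha\,\ip{R_0}{AP}+\alpha^2\fro{AP}^2 .
\]
The last coefficient is $d$ by definition.

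The only step needing the hypotheses on $P$ is the cross term, so I will do it next. We have $\ip{R_0}{AP}=\ip{A}{AP}-\ip{B}{AP}=\ip{A}{AP}-c$. Since $P$ is symmetric and idempotent,
\[
\ip{A}{AP}=\operatorname{tr}(A^\top AP)=\operatorname{tr}(A^\top A P^2)=\operatorname{tr}\bigl((AP)^\top AP\bigr)=\fro{AP}^2=d ,
\]
where the middle step moves one factor $P$ across the trace using $P=P^\top$. Hence $\ip{R_0}{AP}=d-c$.

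Substituting gives
\[
\fro{R_\alpha}^2-\fro{R_0}^2=-2\alpha(d-c)+\alpha^2 d=-\alpha(2-\alpha)d+2\alpha c,
\]
which is \eqref{eq:gap}.

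For \eqref{eq:pair}, subtract \eqref{eq:gap} at $\alpha$ from \eqref{eq:gap} at $\beta$:
\[
(\beta^2-\alpha^2)d-2(\beta-\alpha)d+2(\beta-\alpha)c .
\]
Factoring out $\beta-\alpha$ gives $(\beta-\alpha)\bigl[(\alpha+\beta-2)d+2c\bigr]$.

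Nothing in the argument restricts $\alpha$ and $\beta$ to $[0,1]$; they may be any real numbers. The only point requiring care is the trace manipulation giving $\ip{A}{AP}=\fro{AP}^2$. This is exactly where the symmetry and idempotence of $P$ enter, and no invertibility or shape condition beyond conformability is needed. I expect no real obstacle; the rest is routine bookkeeping.
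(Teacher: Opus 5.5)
Your proposal is correct and follows the paper's proof essentially line for line. Like the paper, you write $R_\alpha=R_0-\alpha AP$, use symmetry and idempotence of $P$ to get $\ip{A}{AP}=d$ and hence $\ip{R_0}{AP}=d-c$, and then obtain \eqref{eq:pair} by subtracting two instances of \eqref{eq:gap}.
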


\begin{proof}
$R_\alpha=R_0-\alpha AP$, so $\fro{R_\alpha}^2=\fro{R_0}^2-2\alpha\ip{R_0}{AP}+\alpha^2d$. Because $P$ is a symmetric idempotent, $\ip{A}{AP}=\fro{AP}^2=d$, hence $\ip{R_0}{AP}=d-c$. This gives \eqref{eq:gap}; \eqref{eq:pair} follows by subtraction.
\end{proof}

\begin{corollary}\label{cor:iff}
With the notation above:
\begin{enumerate}[label=(\roman*),leftmargin=*,itemsep=2pt]
\item For $\alpha>0$: $\fro{R_\alpha}\le\fro{R_0}$ if and only if $2c\le(2-\alpha)d$.
\item $\alpha\mapsto\fro{R_\alpha}$ is non-increasing on $[0,1]$ if and only if $c\le0$.
\item $\fro{R_\alpha}\le\fro{R_0}$ for all $\alpha\in[0,1]$ if and only if $2c\le d$.
\item If $d>0$, the minimizer of $\fro{R_\alpha}$ over $[0,1]$ is $\alpha^\star=\clip(1-c/d,\,0,\,1)$.
\end{enumerate}
\end{corollary}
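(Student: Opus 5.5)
All four parts follow from Proposition~\ref{prop:identity} by elementary analysis of the quadratic
\[
f(\alpha)=\fro{R_\alpha}^2=\fro{R_0}^2-\alpha(2-\alpha)d+2\alpha c .
\]
The plan is to reduce each claim to a sign condition on a polynomial in $\alpha$ and read that condition off directly. Throughout we use $d\ge0$, and we note that $d=0$ forces $AP=0$, hence $c=0$; this degenerate case is checked separately in each part.

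\textbf{Part (i).} For $\alpha>0$, equation \eqref{eq:gap} gives
\[
f(\alpha)-f(0)=\alpha\bigl[2c-(2-\alpha)d\bigr].
\]
Dividing by $\alpha>0$ yields the stated equivalence at once. Norms and squared norms order identically, so the claim transfers from $f$ to $\fro{R_\alpha}$.

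\textbf{Part (ii).} Monotonicity of $\fro{R_\alpha}$ is the same as monotonicity of $f$, and $f'(\alpha)=-2(1-\alpha)d+2c$. This derivative is affine and non-decreasing in $\alpha$ since $d\ge0$, so its maximum on $[0,1]$ is attained at $\alpha=1$, where $f'(1)=2c$. Hence $f'\le0$ on $[0,1]$ exactly when $c\le0$. For the converse direction, if $c>0$ then $f'(1)>0$, so $f$ increases on a left neighbourhood of $1$ and monotonicity fails. Alternatively, \eqref{eq:pair} gives the same conclusion directly: for $\alpha<\beta$ the bracket $(\alpha+\beta-2)d+2c$ must be nonpositive for all such pairs in $[0,1]$, and its supremum as $\alpha,\beta\to1$ is $2c$.

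\textbf{Part (iii).} By part (i), the claim holds for all $\alpha\in(0,1]$ iff $2c\le(2-\alpha)d$ for all $\alpha\in(0,1]$. The right-hand side is non-increasing in $\alpha$ and is smallest at $\alpha=1$, where it equals $d$. So the condition is $2c\le d$. The case $\alpha=0$ is trivial.

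\textbf{Part (iv).} If $d>0$, then $f$ is a strictly convex quadratic in $\alpha$ with unconstrained minimizer given by $f'(\alpha)=0$, namely $\alpha=1-c/d$. A strictly convex function on an interval attains its minimum at the projection of the unconstrained minimizer onto that interval. This gives $\alpha^\star=\clip(1-c/d,0,1)$, and the minimizer is unique.

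\textbf{Main obstacle.} The only delicate point is the "only if" direction of part (ii) together with the degenerate case $d=0$. When $d=0$ we have $c=0$, so $f$ is constant and every claim holds trivially. The derivative argument at $\alpha=1$ settles the remaining direction.
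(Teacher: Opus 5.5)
Your proof is correct and takes the paper's route: every part is read off from the quadratic identities \eqref{eq:gap} and \eqref{eq:pair}, and (i), (iii) and (iv) are handled exactly as in the paper. The only difference is the converse in (ii), where you use $f'(1)=2c>0$ in place of the paper's explicit finite step $h=\min(1,c/d)$ with $\fro{R_1}^2-\fro{R_{1-h}}^2=h(2c-hd)>0$; either argument works.
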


\begin{proof}
(i) is \eqref{eq:gap} divided by $\alpha>0$. For (ii), if $c\le0$ the bracket in \eqref{eq:pair} is non-positive whenever $\alpha+\beta\le2$. Conversely, if $c>0$ then $AP\neq0$, so $d>0$; take $h=\min(1,c/d)$, and then $\fro{R_1}^2-\fro{R_{1-h}}^2=h(2c-hd)>0$. (iii) follows from (i) since $(2-\alpha)d\ge d$ on $[0,1]$, with equality at $\alpha=1$. (iv): $\fro{R_\alpha}^2$ is a convex quadratic in $\alpha$ with vertex $1-c/d$.
\end{proof}

When $2c\le d$ but $c>0$, full correction is no worse than no correction, yet the residual first decreases and then increases as $\alpha\to1$. ``No worse at $\alpha=1$'' and ``monotone in $\alpha$'' are different properties.

\paragraph{Geometric reading.} At $\alpha=1$ the condition $2c\le d$ is equivalent to $\fro{B-AP}\ge\fro{B}$. For $B=E\Xh$ we have $BP=B$, so $B$ and $AP$ lie in the same subspace, and the component $A(I-P)$ is common to both residuals. The correction is harmful exactly when the new perturbation term $B$ is closer to the inherited term $AP$ than to the origin, that is, when the two would largely cancel. In the counterexample of Theorem~\ref{thm:counter} in Section~\ref{sec:counter}, $AP=-\varepsilon$ and $B=-\varepsilon-\varepsilon^2$.

\begin{remark}[Scope of $\alpha^\star$]\label{rem:alphastar}
The formula for $\alpha^\star$ holds $B$ fixed. In an actual quantizer $B$ depends on $\alpha$ through the rounding of $W^*(\alpha)$ (Section~\ref{sec:actual}), so $\alpha^\star$ is not by itself a rule for choosing the propagation strength.
\end{remark}

\subsection{In expectation}\label{sec:meansq}

\begin{theorem}[Per-layer mean-square statement]\label{thm:meansq}
Let $A,B,P$ be random, with $P$ almost surely a symmetric idempotent and $A,B$ square integrable. Put $D=\E\fro{AP}^2$ and $C=\E\ip{AP}{B}$. For every fixed $\alpha$,
\[
\E\fro{R_\alpha}^2-\E\fro{R_0}^2=-\alpha(2-\alpha)D+2\alpha C .
\]
In particular, if $\E[B\mid A,P]=0$ then $C=0$ and $\E\fro{R_\alpha}^2=\E\fro{R_0}^2-\alpha(2-\alpha)D\le\E\fro{R_0}^2$ for $\alpha\in[0,2]$, with $\E\fro{R_\alpha}^2$ non-increasing on $[0,1]$.
\end{theorem}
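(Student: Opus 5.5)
The plan is to apply Proposition~\ref{prop:identity} pointwise and then take expectations. For almost every realization, $P$ is a symmetric idempotent and $A,B$ have matching shapes, so \eqref{eq:gap} holds sample by sample:
\[
\fro{R_\alpha}^2-\fro{R_0}^2=-\alpha(2-\alpha)\fro{AP}^2+2\alpha\ip{AP}{B}.
\]
Before taking expectations of both sides, I check that every term is integrable. Since $P$ is an orthogonal projection, $\fro{AP}\le\fro{A}$. Hence $\E\fro{AP}^2\le\E\fro{A}^2<\infty$. Cauchy--Schwarz then gives $\E|\ip{AP}{B}|\le(\E\fro{A}^2)^{1/2}(\E\fro{B}^2)^{1/2}<\infty$. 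Similarly, $\fro{R_\alpha}^2\le 2(1+|\alpha|)^2\fro{A}^2+2\fro{B}^2$ is integrable. Linearity of expectation then yields the displayed identity with $D$ and $C$, for each fixed deterministic $\alpha$. The fact that $\alpha$ is fixed in advance is what lets it come outside the expectation.

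For the second claim, I condition on $(A,P)$. The matrix $AP$ is $\sigma(A,P)$-measurable, and each entry of $AP$ times the corresponding entry of $B$ is integrable by the bound above. Therefore
\[
C=\E\,\E[\ip{AP}{B}\mid A,P]=\E\,\ip{AP}{\E[B\mid A,P]}=0.
\]
This is the tower property, together with pulling out what is known; it is entrywise because the Frobenius inner product is a finite sum.

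With $C=0$, the identity becomes $\E\fro{R_\alpha}^2=\E\fro{R_0}^2-\alpha(2-\alpha)D$. Since $D\ge0$ and $\alpha(2-\alpha)\ge0$ on $[0,2]$, this gives the inequality on $[0,2]$. Monotonicity on $[0,1]$ follows because $\alpha\mapsto\alpha(2-\alpha)$ is non-decreasing there, with derivative $2-2\alpha\ge0$. Alternatively, one can take expectations in \eqref{eq:pair} with $c$ replaced by $C=0$.

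The only delicate point is the integrability and measurability justification for pulling $AP$ out of the conditional expectation. The bound $\fro{AP}\le\fro{A}$, which holds because $P$ is a contraction, handles it. Everything else is direct from Proposition~\ref{prop:identity}.
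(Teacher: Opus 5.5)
Your proof is correct and follows the paper's route: apply \eqref{eq:gap} pointwise, take expectations with $\alpha$ fixed, and obtain $C=0$ by conditioning on $(A,P)$ and pulling out $AP$. The integrability checks you add, via $\fro{AP}\le\fro{A}$ and Cauchy--Schwarz, supply details that the paper's two-line proof leaves implicit.
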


\begin{proof}
Take expectations in \eqref{eq:gap}. Under the conditional-mean assumption, $C=\E\,\ip{AP}{\E[B\mid A,P]}=0$.
\end{proof}

\begin{remark}[A zero marginal mean is not enough]
Let all quantities be scalars, $P=1$ and $A=B=\pm1$ with equal probability. Then $\E B=0$, but $R_0=0$ and $R_\alpha=-\alpha A$, so $\E R_\alpha^2=\alpha^2>0=\E R_0^2$ for every $\alpha\neq0$.
\end{remark}

\begin{remark}[The perturbation may depend on $\alpha$]\label{rem:energy}
If the perturbation is $B_\alpha$ and $\E[B_\alpha\mid A,P]=0$ for each $\alpha$, the same computation gives
\[
\E\fro{R_\alpha}^2-\E\fro{R_0}^2=-\alpha(2-\alpha)D+\E\fro{B_\alpha}^2-\E\fro{B_0}^2 .
\]
Zero conditional mean alone therefore does not suffice. Improvement holds exactly when $\E\fro{B_\alpha}^2-\E\fro{B_0}^2\le\alpha(2-\alpha)D$; a perturbation energy that does not grow with the correction is a sufficient condition.
\end{remark}

\subsection{Subtractive dither supplies both conditions}\label{sec:dither}

\paragraph{Scope.} The statements in this subsection concern rounding to an unbounded uniform grid with a step size fixed in advance. They apply to a bounded codebook only when no target falls outside its range; when targets do fall outside, the per-layer statement can fail, as shown at the end of the subsection. Section~\ref{sec:empirical} reports how often corrected weights leave the range of the original row on two small models; we have no estimate beyond those, and we do not attribute the unfavourable 2-bit cases reported in \cite{qep} to overload.

Let $Q_\Delta$ denote rounding to the nearest point of $\Delta\mathbb{Z}$. The subtractively dithered quantizer maps $t$ to $Q_\Delta(t+u)-u$, where $u$ is uniform on $[-\Delta/2,\Delta/2)$, independent of $t$, and known to the decoder. Its error $Q_\Delta(t+u)-u-t$ is uniform on $[-\Delta/2,\Delta/2)$ and independent of $t$ \cite{schuchman,graystockham}; the vector analogue for lattices is in \cite{zamirfeder}. This holds for an unbounded grid; a bounded codebook requires in addition that no overload occurs.

Suppose each entry of $W_l^*(\alpha_l)$ is quantized in this way, with mutually independent fresh dithers that are independent of all earlier layers, and a step size that is fixed in advance and does not depend on the correction strength. Then $E_l$ has independent entries, uniform with variance $\sigma^2=\Delta^2/12$, and is independent of $(W_l\delta_l,\Xh_l)$, which are functions of earlier dither only. Hence $\E[E_l\Xh_l\mid W_l\delta_l,\Xh_l]=0$, and $\E\fro{E_l\Xh_l}^2=\sigma^2n_l\,\E\fro{\Xh_l}^2$ does not depend on $\alpha_l$, where $n_l$ is the number of rows of $W_l$. Both conditions of Theorem~\ref{thm:meansq} and Remark~\ref{rem:energy} hold, and on a common input
\[
\E\fro{R_l^{\qep}}^2=\E\fro{\widetilde R_l^{\base}}^2-\alpha_l(2-\alpha_l)\,\E\fro{W_l\delta_lP_l}^2 .
\]

This is a statement about one layer on a common input. It does not by itself give a network-level comparison, because the two schemes produce different inputs for later layers. Section~\ref{sec:network} treats the network level; we first compute a two-layer example exactly.

\begin{proposition}[Two-layer linear network with dither]\label{prop:dither}
Let $L=2$, all matrices $1\times1$, identity activations, teacher weights $W_1,W_2$, input $x$, and subtractively dithered quantization with error variance $\sigma^2$ in both layers, with $\Xh_2\neq0$ almost surely. With $\alpha_1$ arbitrary and $\alpha_2=\alpha$, the mean-square output error of QEP is
\[
x^2\sigma^2\bigl[(1-\alpha)^2W_2^2+W_1^2+\sigma^2\bigr].
\]
It is non-increasing in $\alpha$ on $[0,1]$, and strictly decreasing when $xW_2\neq0$; $\alpha=0$ gives the uncorrected scheme, and $\alpha=1$ removes the contribution $x^2\sigma^2W_2^2$ of the first-layer error.
\end{proposition}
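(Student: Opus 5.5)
Since the first layer sees the exact input, we have $\delta_1=0$. The first-layer target is therefore $W_1^*(\alpha_1)=W_1$ for every $\alpha_1$, which is why $\alpha_1$ is arbitrary. The plan is to write the output error explicitly as a combination of the two rounding errors $E_1,E_2$ and then take expectations, using only the dither facts recorded before Theorem~\ref{thm:meansq}. Concretely, $\Wh_1=W_1+E_1$, $\Xh_2=(W_1+E_1)x$, $X_2=W_1x$, and so $\delta_2=-E_1x$. With $\Xh_2\neq0$ almost surely, the scalar projection is $P_2=\Xh_2\Hh_2^{-1}\Xh_2=1$. The second-layer target is then $W_2^*(\alpha)=W_2+\alpha W_2\delta_2/\Xh_2$, and $\Wh_2=W_2^*(\alpha)+E_2$.

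Next I expand the output error:
\[
\delta_3=W_2X_2-\Wh_2\Xh_2=W_2\delta_2-\alpha W_2\delta_2-E_2\Xh_2=-(1-\alpha)W_2E_1x-E_2(W_1+E_1)x .
\]
This is exactly the residual $R_\alpha=A(I-\alpha P)-B$ of Section~\ref{sec:identity}, with $A=W_2\delta_2$, $P=1$ and $B=E_2\Xh_2$. Squaring and taking expectations gives three pieces:
\begin{itemize}
\item $(1-\alpha)^2W_2^2x^2\E E_1^2=(1-\alpha)^2W_2^2x^2\sigma^2$;
\item $x^2\,\E\bigl[E_2^2(W_1+E_1)^2\bigr]$;
\item a cross term $2(1-\alpha)W_2x^2\,\E\bigl[E_1E_2(W_1+E_1)\bigr]$.
\end{itemize}

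The key input, and the only delicate step, is the claim that $E_2$ is uniform with variance $\sigma^2$ and independent of $E_1$. This holds even though the quantity being rounded, $W_2^*(\alpha)$, depends on $E_1$ and on $\alpha$. It follows from the subtractive-dither property in Section~\ref{sec:dither}: the error is independent of the input being quantized, the dither at layer 2 is fresh and independent of layer 1, and the step $\Delta$ is fixed in advance, independent of $\alpha$.

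Given that claim, the remaining terms follow directly. Since $\E E_2=0$ and $E_2$ is independent of $E_1$, the cross term vanishes; this is the mechanism $C=0$ of Theorem~\ref{thm:meansq}. The middle term factors as $\sigma^2\,\E(W_1+E_1)^2=\sigma^2(W_1^2+\sigma^2)$, using $\E E_1=0$. Summing the three pieces gives $x^2\sigma^2[(1-\alpha)^2W_2^2+W_1^2+\sigma^2]$.

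The remaining claims are read off from this formula. The factor $(1-\alpha)^2$ is non-increasing on $[0,1]$, and strictly decreasing when its coefficient $x^2\sigma^2W_2^2$ is non-zero, i.e.\ when $xW_2\neq0$ (with $\sigma>0$). At $\alpha=0$ the target is $W_2$, so the scheme coincides with BASE. At $\alpha=1$ the term $x^2\sigma^2W_2^2$, which is the propagated first-layer error, disappears.
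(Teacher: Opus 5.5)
Your proposal is correct and follows essentially the same route as the paper. You write the output error as $-(1-\alpha)W_2E_1x-E_2(W_1+E_1)x$, and use that the fresh second-layer dither error is independent of $E_1$ whatever the target. That independence kills the cross term and gives $\E[E_2^2(W_1+E_1)^2]=\sigma^2(W_1^2+\sigma^2)$; your explicit note that strict decrease needs $\sigma>0$ is a harmless addition.
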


\begin{proof}
Let $r_1,r_2$ be the quantization errors. They are independent, zero mean, with variance $\sigma^2$, and $r_2$ is independent of $r_1$ whatever the second-layer target is. The first-layer correction vanishes, so $\Xh_2=(W_1+r_1)x$. The second-layer target is $W_2\bigl(1-\alpha r_1/(W_1+r_1)\bigr)$, hence the output is $W_2W_1x+(1-\alpha)W_2r_1x+r_2(W_1+r_1)x$. The output error is $(1-\alpha)W_2r_1x+r_2(W_1+r_1)x$; its two terms are uncorrelated and $\E[r_2^2(W_1+r_1)^2]=\sigma^2(W_1^2+\sigma^2)$.
\end{proof}

In a Monte Carlo check with $\Delta=1$, two parameter settings, five values of $\alpha$ and $3\times10^5$ samples each, the estimates differ from this formula by at most $0.0017$. This is a sanity check, not a proof.

\paragraph{Rounding with feedback.} Methods such as GPTQ and QuIP round the entries of a weight row in sequence, adding to each target a fixed linear combination of the errors already made: $z_j=w_j+\sum_{i<j}(w_i-q_i)U_{ij}$, with $U$ strictly upper triangular and determined by $\Xh$ \cite{quip,quipsharp}. Suppose each $q_j$ is obtained from $z_j$ by subtractive dither on an unbounded grid, with fresh independent dither and fixed step sizes. Each innovation $n_j=q_j-z_j$ is then uniform and independent of the past, including its own target $z_j$, and the row error $e=q-w$ satisfies $e(I+U)=n$, that is, $e=n(I+U)^{-1}$. Hence $\E[E\Xh\mid W\delta,\Xh]=0$, and $\E\fro{E\Xh}^2$ does not depend on $\alpha$ provided $U$, the column order and the step sizes are not re-estimated when $\alpha$ changes. The per-layer statement therefore extends to dithered rounding with fixed feedback; the entries of $E$ are now correlated, which the statement does not require them not to be. It does not extend to ordinary stochastic rounding, whose error variance depends on the position of the target within its cell.

\paragraph{Bounded codebooks.} For a bounded uniform codebook $\{a,a+\Delta,\dots,b\}$ with nearest-point saturation, the dithered reconstruction of a target $t$ is distributed as $\clip(t,a,b)+V$ with $V$ uniform on $[-\Delta/2,\Delta/2)$. Overload thus introduces a bias $\clip(t,a,b)-t$ and leaves the variance unchanged, and the bias can break the per-layer statement. Take the codebook $\{0,1,2,3\}$ with $\Delta=1$ and
\[
Z=\begin{pmatrix}1&0\\1&\tfrac12\end{pmatrix},\qquad
W=\begin{pmatrix}4&2\\0&1\end{pmatrix},\qquad
T=\begin{pmatrix}4&1\\0&1\end{pmatrix},\qquad X=W^{-1}TZ ,
\]
so that $\Xh=Z$, $P=I$ and $W^*(\alpha)=W+\alpha(T-W)$. With entrywise dithered rounding,
\[
\E\fro{WX-\Wh(\alpha)Z}^2=\tfrac58-\tfrac{\alpha}{2}+\tfrac54\alpha^2 ,
\]
which equals $\tfrac58$ at $\alpha=0$ and $\tfrac{11}{8}$ at $\alpha=1$; the minimum is at $\alpha=\tfrac15$. The entry $4$ is clipped to $3$ at every $\alpha$, and correcting the neighbouring entry removes a term that partly cancelled this bias through the correlated inputs. Overload of the uncorrected targets is not needed: with the same $T$, $W=\bigl(\begin{smallmatrix}2&2\\0&1\end{smallmatrix}\bigr)$, $Z=\bigl(\begin{smallmatrix}1&0\\2&1/2\end{smallmatrix}\bigr)$ and $X=W^{-1}TZ$, every entry of $W$ lies in the codebook, only the corrected entry $2+2\alpha$ overloads, for $\alpha>\tfrac12$, and the loss $\tfrac78+\tfrac14(1-\alpha)^2+\max(0,2\alpha-1)^2$ falls to $\tfrac{15}{16}$ at $\alpha=\tfrac12$ and then rises to $\tfrac{15}{8}$ at $\alpha=1$, above its value $\tfrac98$ at $\alpha=0$. These examples use entrywise rounding with a non-diagonal input Gram matrix; they do not contradict Remark~\ref{rem:local}, which concerns exact minimization over the codebook.

\paragraph{Cost.} The reconstruction $Q_\Delta(t+u)-u$ is not a codebook point. The decoder must regenerate the dither, for instance from a seed, and subtract it. There is also an accuracy cost. On an unbounded grid the dithered error has mean square $\Delta^2/12$ whatever the input. For inputs that vary smoothly on the scale of $\Delta$, deterministic rounding has the same mean-square error; on a coarse grid, deterministic rounding of concentrated inputs can have a smaller one. We have not measured either cost. The diagnostic of Section~\ref{sec:empirical} includes dithered rounding of single layers; we have not evaluated dithered propagation end to end on any trained model; Section~\ref{sec:numerics} does so on small synthetic networks.

\section{Linear networks}\label{sec:network}

The per-layer statements compare the two schemes on a common input. In a network the two schemes produce different inputs from the second layer on, and each must be followed along its own trajectory. For linear networks the mean-square comparison holds for the whole network.

\begin{theorem}[Linear networks]\label{thm:linear}
Let all activations be the identity and let the strengths $\alpha_l\in[0,1]$ be fixed in advance. Let $\mathcal F_l$ denote the history before layer $l$ is quantized. Suppose that, for both schemes and every $l$,
\[
\E[E_l\mid\mathcal F_l]=0,\qquad \E[E_lTE_l^\top\mid\mathcal F_l]=\Psi_l(T)
\]
for every bounded $\mathcal F_l$-measurable positive semidefinite $T$, where $\Psi_l$ is a deterministic positive linear map, the same for both schemes. The network, the input $X_1$ and the depth are fixed. Suppose also that, for every $l$ with $\alpha_l>0$, the Gram matrix $\Hh_l$ along the QEP trajectory is almost surely invertible. Write $C_l^M=\E\bigl[\delta_l^M(\delta_l^M)^\top\bigr]$ and $G_l^M=\E\bigl[\Xh_l^M(\Xh_l^M)^\top\bigr]$. Then these moments are finite, and for every $l$,
\[
C_l^{\qep}\preceq C_l^{\base},\qquad G_l^{\qep}\preceq G_l^{\base},
\]
and in particular $\E\fro{\delta_{L+1}^{\qep}}^2\le\E\fro{\delta_{L+1}^{\base}}^2$.
\end{theorem}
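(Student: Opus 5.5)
The plan is to write one-step recursions for the error and the quantized activation in each scheme and to propagate a positive-semidefinite order through the layers by induction on $l$. With identity activations, $X_{l+1}=W_lX_l$. Since $\Xh_l P_l=\Xh_l$, where $P_l=\Xh_l^\top\Hh_l^{-1}\Xh_l$, the QEP error is
\[
\delta_{l+1}^{\qep}=W_l\delta_l(I-\alpha_lP_l)-E_l\Xh_l,
\]
and the BASE error is
\[
\delta_{l+1}^{\base}=W_l\delta_l-E_l\Xh_l .
\]
The quantities $\delta_l,\Xh_l,P_l$ are $\mathcal F_l$-measurable. The assumption $\E[E_l\mid\mathcal F_l]=0$ therefore kills all cross terms, and the second-moment map gives
\[
\E\bigl[E_l\Xh_l\Xh_l^\top E_l^\top\bigr]=\Psi_l(G_l).
\]
(To apply the hypothesis, which is stated for bounded $T$, I truncate $T$ and pass to the limit by monotone convergence. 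The same induction also shows all moments are finite, since $\Psi_l$ is a deterministic positive linear map.)

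For BASE I use the recursion $\E\delta_{l+1}^{\base}=W_l\E\delta_l^{\base}$ with $\delta_1=0$. This shows BASE is unbiased at every layer, so $G_l^{\base}=X_lX_l^\top+C_l^{\base}$. Its recursion is then
\[
C_{l+1}^{\base}=W_lC_l^{\base}W_l^\top+\Psi_l(G_l^{\base}).
\]
For QEP, $(I-\alpha P)(I-\alpha P)^\top=I-\alpha(2-\alpha)P\preceq I$ for $\alpha\in[0,1]$. This gives
\[
C_{l+1}^{\qep}\preceq W_lC_l^{\qep}W_l^\top+\Psi_l(G_l^{\qep}),
\]
so $C^{\qep}\preceq C^{\base}$ propagates provided $G_l^{\qep}\preceq G_l^{\base}$ at the same layer. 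The congruence by $W_l$ and the positivity of $\Psi_l$ both preserve $\preceq$.

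The main obstacle is the activation moment $G^{\qep}$. QEP need not be unbiased, because $P_l$ is correlated with $\delta_l$. So $G^{\qep}$ cannot be written as $XX^\top+C^{\qep}$, and it cannot be controlled by $C^{\qep}$ alone. I would therefore induct on a one-parameter family that interpolates between the two moments. The claim is
\[
K_l(t):=\E\bigl[(X_l-t\delta_l^{\qep})(X_l-t\delta_l^{\qep})^\top\bigr]\preceq X_lX_l^\top+t^2C_l^{\base},\qquad t\in[0,1].
\]
At $t=1$ this is exactly $G_l^{\qep}\preceq G_l^{\base}$.

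For the induction step, write
\[
X_{l+1}-t\delta_{l+1}=W_l\bigl(X_l-t\delta_l+t\alpha_l\delta_lP_l\bigr)+tE_l\Xh_l .
\]
I split the bracket along $P$ and $I-P$. Since $\Xh(I-P)=0$, we have $\delta(I-P)=X(I-P)$. The bracket therefore equals
\[
(1-t)X(I-P)+(X-s\delta)P,\qquad s=t(1-\alpha_l)\in[0,t],
\]
and the two pieces are orthogonal because $P(I-P)=0$. Write $X-s\delta=(1-\lambda)X+\lambda(X-t\delta)$ with $\lambda=s/t$. The map $Y\mapsto YPY^\top$ is matrix-convex, so $(X-s\delta)P(X-s\delta)^\top$ is dominated by the corresponding convex combination. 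Combining this with $XPX^\top+(1-t)^2X(I-P)X^\top\preceq XX^\top$, the bracket's second moment is at most
\[
(1-\lambda)XX^\top+\lambda K_l(t)\preceq XX^\top+t^2C_l^{\base}.
\]
The last inequality is the induction hypothesis. Adding the noise term $t^2\Psi_l(G_l^{\qep})\preceq t^2\Psi_l(G_l^{\base})$ gives
\[
K_{l+1}(t)\preceq X_{l+1}X_{l+1}^\top+t^2C_{l+1}^{\base}.
\]
Layers with $\alpha_l=0$ need no $P_l$: there $s=t$ and the step is immediate. The base case is trivial since $\delta_1=0$.

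Running the two inductions jointly gives both orderings at every $l$. Taking traces of $C_{L+1}^{\qep}\preceq C_{L+1}^{\base}$ gives the final inequality $\E\fro{\delta_{L+1}^{\qep}}^2\le\E\fro{\delta_{L+1}^{\base}}^2$.
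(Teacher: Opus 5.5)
Your proof is correct and is essentially the paper's. At $t=1$ your bracket is the paper's corrected target $T=(1-\alpha_l)\Xh_l+\alpha_lX_lP_l$. Your orthogonal split along $P_l$ followed by matrix convexity reproduces the paper's identity $(1-\alpha)ZZ^\top+\alpha XX^\top-TT^\top=\alpha X(I-P)X^\top+\alpha(1-\alpha)(Z-XP)(Z-XP)^\top\succeq0$, after which the joint induction on $(C_l,G_l)$ using unbiasedness of BASE is the same.

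The family $K_l(t)$ with $t<1$ is never actually needed. The step at a given $t$ uses the hypothesis only at that same $t$ and at $t=1$ (through $\Psi_l(G_l^{\qep})$), so you can simply fix $t=1$ throughout.
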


Entrywise subtractive dither on an unbounded grid, with independent entries and a step fixed in advance, satisfies the hypotheses with $\Psi_l(T)=\sigma_l^2\,\mathrm{tr}(T)\,I$; with entry-dependent steps, $\Psi_l$ is the corresponding fixed weighted trace map. The weight matrices may be rectangular, subject to the rank restriction discussed after the proof.

\begin{proof}
Fix a layer of the QEP trajectory and write $Z=\Xh_l$, $X=X_l$, $P=P_l$, $W=W_l$, $\alpha=\alpha_l$. If $\alpha=0$ the correction is skipped, $T=Z$, and the two identities below hold trivially. Otherwise, since $ZP=Z$, the next pre-activation is $\Wh Z=WT+E_lZ$ with $T=(1-\alpha)Z+\alpha XP$, and $T-X=(Z-X)(I-\alpha P)$. Two pointwise identities follow from $P=P^\top=P^2$:
\begin{align}
(Z-X)(Z-X)^\top-(T-X)(T-X)^\top&=\alpha(2-\alpha)\,(Z-X)P(Z-X)^\top\succeq0,\label{eq:psd1}\\
(1-\alpha)ZZ^\top+\alpha XX^\top-TT^\top&=\alpha\,X(I-P)X^\top+\alpha(1-\alpha)(Z-XP)(Z-XP)^\top\succeq0.\label{eq:psd2}
\end{align}
The second-moment relation extends from bounded $T$ to every $\mathcal F_l$-measurable positive semidefinite $T$, as an identity in $[0,\infty]$, by truncating $T$ on $\{\lVert T\rVert\le n\}$ and monotone convergence. Taking the trace in \eqref{eq:psd2} gives $\fro{T}^2\le(1-\alpha)\fro{Z}^2+\alpha\fro{X}^2$, and with $\kappa_l=\lVert\Psi_l^*(I)\rVert_2$ we have $\E[\fro{E_lZ}^2\mid\mathcal F_l]=\mathrm{tr}\,\Psi_l(ZZ^\top)\le\kappa_l\fro{Z}^2$. Hence $\E\fro{\Xh_{l+1}}^2\le2\lVert W\rVert_2^2\bigl[(1-\alpha)\E\fro{\Xh_l}^2+\alpha\fro{X_l}^2\bigr]+2\kappa_l\E\fro{\Xh_l}^2$, so all second moments of the activations and of the errors are finite by induction, for both schemes. The corrected weight itself need not be square integrable, since it contains $\Hh_l^{-1}$; the output involves only the projection $P$.

The two terms $WT$ and $E_lZ$ are now square integrable, so the conditional-mean assumption removes their cross terms, and
\[
C_{l+1}^{\qep}\preceq W\,C_l^{\qep}\,W^\top+\Psi_l(G_l^{\qep}),\qquad
G_{l+1}^{\qep}\preceq W\bigl[(1-\alpha)G_l^{\qep}+\alpha\,X_lX_l^\top\bigr]W^\top+\Psi_l(G_l^{\qep}),
\]
using \eqref{eq:psd1} and \eqref{eq:psd2}. For BASE, $\Xh_{l+1}=(W+E_l)\Xh_l$ gives the exact recursions $C_{l+1}^{\base}=WC_l^{\base}W^\top\allowbreak+\Psi_l(G_l^{\base})$ and $G_{l+1}^{\base}=WG_l^{\base}W^\top\allowbreak+\Psi_l(G_l^{\base})$. BASE is unbiased, $\E\Xh_l^{\base}=X_l$, hence $G_l^{\base}=X_lX_l^\top+C_l^{\base}\allowbreak\succeq X_lX_l^\top$.

Both orderings hold with equality at $l=1$. If they hold at $l$, then, $\Psi_l$ being positive, $C_{l+1}^{\qep}\preceq WC_l^{\base}W^\top+\Psi_l(G_l^{\base})=C_{l+1}^{\base}$, and, using $X_lX_l^\top\preceq G_l^{\base}$, $G_{l+1}^{\qep}\preceq WG_l^{\base}W^\top+\Psi_l(G_l^{\base})=G_{l+1}^{\base}$. Taking traces gives the last claim.
\end{proof}

The induction carries the pair $(C_l,G_l)$; the uncentred moment is needed because the new perturbation enters multiplied by the input. No inclusion between the row spaces of $X_l$ and $\Xh_l$ is assumed: the term $X(I-P)X^\top$ in \eqref{eq:psd2} is exactly what the missing inclusion costs, and it has the right sign. At a layer of positive strength, invertibility of $\Hh_l$ requires $\Xh_l$ to have full row rank; layers of zero strength are not restricted. A linear map cannot increase rank, so a layer whose output feeds a later corrected layer may be narrower than its predecessors but not wider than the rank of its input; the last layer is not restricted. If $\Hh_l^{-1}$ is replaced by the Moore--Penrose pseudo-inverse, $P_l$ is still the orthogonal projection onto the row space of $\Xh_l$, the proof is unchanged and no restriction on widths remains; that is a different algorithm from the one in \cite{qep}.

The two inequalities in the proof discard non-negative terms. Keeping them gives the gain exactly.

\begin{proposition}[Exact gain]\label{prop:gain}
Assume the hypotheses of Theorem~\ref{thm:linear}. Along the QEP trajectory put, for layers with $\alpha_l>0$,
\begin{align*}
S_l&=\alpha_l(2-\alpha_l)\,W_l\,\E\bigl[\delta_lP_l\delta_l^\top\bigr]W_l^\top,\\
B_l&=W_l\,\E\bigl[\alpha_l\,X_l(I-P_l)X_l^\top+\alpha_l(1-\alpha_l)(\Xh_l-X_lP_l)(\Xh_l-X_lP_l)^\top\bigr]W_l^\top,
\end{align*}
and $S_l=B_l=0$ when $\alpha_l=0$. Then $\Delta_l=C_l^{\base}-C_l^{\qep}$ and $\Lambda_l=G_l^{\base}-G_l^{\qep}$ satisfy $\Delta_1=\Lambda_1=0$ and
\begin{align}
\Delta_{l+1}&=W_l\Delta_lW_l^\top+\Psi_l(\Lambda_l)+S_l,\label{eq:gainD}\\
\Lambda_{l+1}&=(1-\alpha_l)\,W_l\Lambda_lW_l^\top+\Psi_l(\Lambda_l)+\alpha_l\,W_lC_l^{\base}W_l^\top+B_l,\label{eq:gainL}
\end{align}
in which every term is positive semidefinite. In particular
\begin{equation}\label{eq:gainlb}
\E\fro{\delta_{L+1}^{\base}}^2-\E\fro{\delta_{L+1}^{\qep}}^2\ \ge\ \sum_{l:\,\alpha_l>0}\alpha_l(2-\alpha_l)\,\E\fro{W_L\cdots W_l\,\delta_l^{\qep}P_l}^2,
\end{equation}
where the sum runs over the layers of positive strength only, the projection $P_l$ not being defined elsewhere; the comparison of Theorem~\ref{thm:linear} is strict as soon as one term on the right is non-zero.
\end{proposition}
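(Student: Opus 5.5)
The plan is to rerun the induction in the proof of Theorem~\ref{thm:linear}, but to keep every term instead of discarding the non-negative ones in \eqref{eq:psd1} and \eqref{eq:psd2}. This turns the two inequalities for QEP into identities, and subtracting them from the exact BASE recursions gives \eqref{eq:gainD} and \eqref{eq:gainL}.

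\textbf{Step 1: exact QEP recursions.} Fix a layer $l$ with $\alpha_l>0$ on the QEP trajectory, and use the notation $Z,X,P,T$ of the proof of Theorem~\ref{thm:linear}. The next pre-activation is $WT+E_lZ$, and $\delta_{l+1}=W(X-T)-E_lZ$. The hypotheses of the theorem give what is needed to take expectations:
\begin{itemize}[leftmargin=*,itemsep=2pt]
\item The theorem gives finiteness of second moments, and $\fro{T}^2\le(1-\alpha)\fro{Z}^2+\alpha\fro{X}^2$, so $WT$ and $E_lZ$ are square integrable.
\item Zero conditional mean of $E_l$ given $\mathcal F_l$ therefore kills the cross terms.
\item The second-moment map contributes exactly $\Psi_l(G_l^{\qep})$.
\end{itemize}
Taking expectations in \eqref{eq:psd1}, now read as an equality, with $Z-X=-\delta_l$, gives
\[
C_{l+1}^{\qep}=WC_l^{\qep}W^\top-S_l+\Psi_l(G_l^{\qep}).
\]
Likewise \eqref{eq:psd2} gives
\[
G_{l+1}^{\qep}=W\bigl[(1-\alpha)G_l^{\qep}+\alpha X_lX_l^\top\bigr]W^\top-B_l+\Psi_l(G_l^{\qep}).
\]
For $\alpha_l=0$ both reduce to the BASE form with $S_l=B_l=0$. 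The one point needing care, which I expect to be the only real obstacle, is that each expectation defining $S_l$ and $B_l$ is finite and may be split off linearly. For this I would dominate each piece in the Loewner order:
\begin{itemize}[leftmargin=*,itemsep=2pt]
\item $\delta P\delta^\top\preceq\delta\delta^\top$;
\item $X(I-P)X^\top\preceq XX^\top$;
\item $(Z-XP)(Z-XP)^\top\preceq 2ZZ^\top+2XX^\top$, since $\fro{XP}\le\fro{X}$.
\end{itemize}
All of these are integrable by Theorem~\ref{thm:linear}.

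\textbf{Step 2: subtract.} Subtract these from the exact BASE recursions $C_{l+1}^{\base}=WC_l^{\base}W^\top+\Psi_l(G_l^{\base})$ and $G_{l+1}^{\base}=WG_l^{\base}W^\top+\Psi_l(G_l^{\base})$. Linearity of $\Psi_l$ then yields \eqref{eq:gainD} directly. For \eqref{eq:gainL} the only manipulation is
\[
G_l^{\base}-(1-\alpha)G_l^{\qep}-\alpha X_lX_l^\top=(1-\alpha)\Lambda_l+\alpha\bigl(G_l^{\base}-X_lX_l^\top\bigr)=(1-\alpha)\Lambda_l+\alpha C_l^{\base}.
\]
Here $G_l^{\base}=X_lX_l^\top+C_l^{\base}$ by the unbiasedness of BASE, as noted in the proof of Theorem~\ref{thm:linear}. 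The initial values $\Delta_1=\Lambda_1=0$ hold because both schemes start at $\Xh_1=X$.

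\textbf{Step 3: positivity and the lower bound.} Each term is positive semidefinite for the following reasons:
\begin{itemize}[leftmargin=*,itemsep=2pt]
\item $\Delta_l,\Lambda_l\succeq0$ by Theorem~\ref{thm:linear}, and $\Psi_l$ is positive, so $W_l\Delta_lW_l^\top$, $(1-\alpha_l)W_l\Lambda_lW_l^\top$ and $\Psi_l(\Lambda_l)$ are positive semidefinite.
\item $C_l^{\base}\succeq0$, so $\alpha_lW_lC_l^{\base}W_l^\top\succeq0$.
\item $S_l$ and $B_l$ are expectations of positive semidefinite matrices, by \eqref{eq:psd1} and \eqref{eq:psd2}.
\end{itemize}
To get \eqref{eq:gainlb}, drop $\Psi_l(\Lambda_l)\succeq0$ in \eqref{eq:gainD} and unroll from $\Delta_1=0$. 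This gives
\[
\Delta_{L+1}\succeq\sum_{l}W_L\cdots W_{l+1}\,S_l\,(W_L\cdots W_{l+1})^\top .
\]
Take traces, using
\[
\mathrm{tr}\bigl(M\,\E[\delta P\delta^\top]M^\top\bigr)=\E\fro{M\delta P}^2 \quad\text{with } P=P^2=P^\top .
\]
This gives the stated sum over layers with $\alpha_l>0$. Strictness follows because the left side of \eqref{eq:gainlb} is then positive whenever any summand is.
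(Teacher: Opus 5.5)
Your proposal is correct and follows the paper's proof essentially step for step. You read \eqref{eq:psd1} and \eqref{eq:psd2} as identities to obtain exact QEP recursions with sources $-S_l$ and $-B_l$, subtract them from the exact BASE recursions using $G_l^{\base}=X_lX_l^\top+C_l^{\base}$, and get \eqref{eq:gainlb} by dropping $\Psi_l(\Lambda_l)$, iterating and taking traces; your Loewner domination of the pieces of $S_l$ and $B_l$ spells out the paper's brief remark that these expectations are finite because $P_l$ is a projection kept together with $\delta_l$.
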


\begin{proof}
The relations \eqref{eq:psd1} and \eqref{eq:psd2} are identities. Keeping their right-hand sides in the proof of Theorem~\ref{thm:linear} turns the two inequalities for QEP into $C_{l+1}^{\qep}=W_lC_l^{\qep}W_l^\top+\Psi_l(G_l^{\qep})-S_l$ and $G_{l+1}^{\qep}=W_l[(1-\alpha_l)G_l^{\qep}+\alpha_lX_lX_l^\top]W_l^\top+\Psi_l(G_l^{\qep})-B_l$; the expectations are finite because $P_l$ is a projection, and $P_l$ is not separated from $\delta_l$. Subtracting from the exact recursions for BASE gives \eqref{eq:gainD} at once, and \eqref{eq:gainL} after using $G_l^{\base}=X_lX_l^\top+C_l^{\base}$, which yields $G_l^{\base}-(1-\alpha_l)G_l^{\qep}-\alpha_lX_lX_l^\top=(1-\alpha_l)\Lambda_l+\alpha_lC_l^{\base}$. Positivity of $\Delta_l$ and $\Lambda_l$ is Theorem~\ref{thm:linear}. Dropping $\Psi_l(\Lambda_l)\succeq0$ in \eqref{eq:gainD} and iterating gives $\Delta_{L+1}\succeq\sum_lW_L\cdots W_{l+1}S_lW_{l+1}^\top\cdots W_L^\top$, whose trace is the right-hand side of \eqref{eq:gainlb}, the terms with $\alpha_l=0$ being zero.
\end{proof}

The two recursions separate two effects. $S_l$ is the error removed at layer $l$. The source in \eqref{eq:gainL} is a reduction of the energy that excites later rounding errors; it reaches the output only through the maps $\Psi$ of later layers, and is not seen by a per-layer comparison. Since the recursions are linear with positive coefficients, the gain is a sum of non-negative contributions of the sources $S_l$ and $\alpha_lW_lC_l^{\base}W_l^\top+B_l$, each weighted by a positive semidefinite matrix that depends only on the teacher, the maps $\Psi_l$ and the strengths; the bound \eqref{eq:gainlb} keeps the first kind only. A gain at an intermediate layer need not reach the output: later weights and noise maps may ignore the directions in which it occurs.

\begin{corollary}[Schedules with a common row space]\label{cor:sched}
Assume the hypotheses of Theorem~\ref{thm:linear} and let $a$ and $b$ be two schedules fixed in advance with $0\le a_l\le b_l\le1$. Suppose that along both trajectories $X_lP_l=X_l$ almost surely at every layer of positive strength; this is the case when $X_1$ has full row rank and all teacher and quantized weight matrices are square and invertible. Then $C_l^{b}\preceq C_l^{a}$ and $G_l^{b}\preceq G_l^{a}$ for every $l$. For each such schedule,
\[
C_{l+1}=(1-\alpha_l)^2\,W_lC_lW_l^\top+\Psi_l(C_l)+\Psi_l(X_lX_l^\top).
\]
In any class of fixed schedules that satisfy these hypotheses with the same maps $\Psi_l$ and that contains the full-strength schedule, full strength minimizes the mean-square output error.
\end{corollary}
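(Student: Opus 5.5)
The plan is to exploit the row-space hypothesis $X_lP_l=X_l$ so that the QEP update becomes an exact scalar shrinkage of the inherited error. Then every schedule obeys a closed linear recursion for $C_l$ alone, and the recursions for two schedules can be compared termwise by induction.

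\emph{Step 1: the error recursion.} Fix a schedule $\alpha$ and a layer of positive strength, with the notation of the proof of Theorem~\ref{thm:linear}: $Z=\Xh_l$, $X=X_l$, $P=P_l$. We have $ZP=Z$ always, and $XP=X$ by hypothesis. Hence $(Z-X)P=Z-X$, and the identity $T-X=(Z-X)(I-\alpha P)$ reduces to $T-X=(1-\alpha)(Z-X)$. At layers with $\alpha_l=0$ this holds trivially. Since $\Xh_{l+1}=WT+E_lZ$, this gives
\[
\delta_{l+1}=(1-\alpha_l)\,W_l\delta_l-E_l\Xh_l .
\]
Theorem~\ref{thm:linear} already gives square integrability of both terms. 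The conditional-mean hypothesis then removes the cross term, and the second-moment hypothesis gives
\[
C_{l+1}=(1-\alpha_l)^2W_lC_lW_l^\top+\Psi_l(G_l).
\]

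\emph{Step 2: unbiasedness.} To reach the stated form I need $G_l=X_lX_l^\top+C_l$, which holds if $\E\delta_l=0$. Taking expectations in the recursion of Step 1 and using $\E[E_l\Xh_l]=\E\bigl[\E[E_l\mid\mathcal F_l]\Xh_l\bigr]=0$ gives $\E\delta_{l+1}=(1-\alpha_l)W_l\E\delta_l$. Since $\delta_1=0$, induction gives $\E\delta_l=0$ for all $l$. Therefore $\E\bigl[\Xh_l\Xh_l^\top\bigr]=X_lX_l^\top+C_l$, and linearity of $\Psi_l$ gives the displayed recursion
\[
C_{l+1}=(1-\alpha_l)^2W_lC_lW_l^\top+\Psi_l(C_l)+\Psi_l(X_lX_l^\top).
\]
This unbiasedness step is the one I expect to need most care. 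It is what replaces the inequality \eqref{eq:psd2} of the theorem by an equality, and it uses the row-space hypothesis along the whole trajectory, not only at the current layer.

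\emph{Step 3: ordering.} I induct on $l$ with $C_1^a=C_1^b=0$. Suppose $C_l^b\preceq C_l^a$. Since $0\le1-b_l\le1-a_l$, we get
\[
(1-b_l)^2W_lC_l^bW_l^\top\preceq(1-b_l)^2W_lC_l^aW_l^\top\preceq(1-a_l)^2W_lC_l^aW_l^\top .
\]
Positivity of $\Psi_l$ gives $\Psi_l(C_l^b)\preceq\Psi_l(C_l^a)$. The term $\Psi_l(X_lX_l^\top)$ is common to both schedules, because the teacher trajectory is the same. Hence $C_{l+1}^b\preceq C_{l+1}^a$, and $G_l^b\preceq G_l^a$ follows from $G=XX^\top+C$. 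Taking $b$ to be the full-strength schedule, which dominates every schedule entrywise, and taking traces at $l=L+1$ gives the minimality claim.

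\emph{Step 4: sufficient condition.} If $X_1$ has full row rank and all weights are square and invertible, then every $X_l$ has full row rank. Thus $\Hh_l$ is invertible wherever it is needed, since $\Xh_l$ also has full row rank when the quantized weights are invertible. Moreover $X_l$ and $\Xh_l$ have the same full row space, so $P_l$ is the projection onto that common space and $X_lP_l=X_l$.
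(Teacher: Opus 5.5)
Your proof is correct and follows the paper's argument. The row-space hypothesis turns the target into $(1-\alpha_l)\Xh_l+\alpha_lX_l$, which gives $\delta_{l+1}=(1-\alpha_l)W_l\delta_l-E_l\Xh_l$; unbiasedness then follows by induction, so $G_l=X_lX_l^\top+C_l$, and the ordering follows from positivity of $\Psi_l$ together with $(1-b_l)^2\le(1-a_l)^2$.

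Your Step~4 goes slightly beyond the paper, which only asserts the sufficient condition. There, $X_l$ and $\Xh_l$ share a row space because multiplying by an invertible square matrix preserves row spaces, starting from $\Xh_1=X_1$. Both matrices having full row rank is not enough on its own.
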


\begin{proof}
With $X_lP_l=X_l$ the target is $T=(1-\alpha_l)\Xh_l+\alpha_lX_l$, so $\delta_{l+1}=(1-\alpha_l)W_l\delta_l-E_l\Xh_l$. By induction every schedule is unbiased, $\E\delta_l=0$, hence $G_l=X_lX_l^\top+C_l$ and the displayed recursion holds. If $C_l^{b}\preceq C_l^{a}$, positivity of the maps and $(1-b_l)^2\le(1-a_l)^2$ give $C_{l+1}^{b}\preceq C_{l+1}^{a}$.
\end{proof}

\begin{example}[A closed form for the gain]\label{ex:iso}
Fix a calibration input $X_1$ with $d$ rows and full row rank, and a schedule. Let all layers be square of size $d$ and let the teacher weights be independent random matrices, almost surely invertible, with $\E[W_l^\top W_l]=\gamma_lI$ for deterministic constants $0<\gamma_l<\infty$; for instance $W_l=I+V_l$ with the entries of $V_l$ independent and uniform on $[-a_l,a_l]$, $a_l<1/(2d)$, where $\gamma_l=1+da_l^2/3$. Suppose that, conditionally on all teacher weights and on the past, $E_l$ has zero mean and $\E[E_lME_l^\top]=v_l\,\mathrm{tr}(M)\,I$ with deterministic constants $0\le v_l<\infty$ common to the two schemes, and that the quantized weights are almost surely invertible. Put $q_l=\E\fro{\delta_{l+1}}^2/\E\fro{X_{l+1}}^2$, with expectations over the teacher as well, and $\nu_l=dv_l/\gamma_l$. Taking traces and the expectation over the teacher in Corollary~\ref{cor:sched} gives $q_l=[(1-\alpha_l)^2+\nu_l]\,q_{l-1}+\nu_l$ with $q_0=0$, hence
\[
q_L^{\base}=\prod_{l=1}^{L}(1+\nu_l)-1,\qquad q_L^{\qep}=\sum_{j=1}^{L}\prod_{i=j}^{L}\nu_i\quad\text{at full strength}.
\]
For $\nu_l=\nu>0$ the ratio $q_L^{\base}/q_L^{\qep}=\sum_{k<L}(1+\nu)^k\big/\sum_{k<L}\nu^k$ increases with the depth and tends to $L$ as $\nu\downarrow0$ (for $\nu=0$ both errors vanish): at fine resolution full compensation leaves the rounding noise of the last layer only, a gain of $10\log_{10}L$~dB. This is an average over random teachers and a ratio of expectations, in a class where all row spaces coincide; it is not a statement about a fixed network or about bottlenecks.
\end{example}

Three limits should be stated. First, Theorem~\ref{thm:linear} compares any fixed schedule with the uncorrected scheme. Outside the setting of Corollary~\ref{cor:sched}, monotonicity between two non-zero schedules can fail; the condition of that corollary is empty for the zero schedule and so does not exclude this. Take $X=I_4$, $W_1=[\,I_2\mid0\,]$, $W_2=(1,0)$, $W_3=1$, with errors $E_1=[\,0\mid\mathrm{diag}(u_1,u_2)\,]$, $E_2=(0,\eta)$, $E_3=0$, where $u_1,u_2=\pm1$ and $\eta=\pm\tfrac12$ are independent fair signs, and $\alpha_3=1$. The mean-square output error is $\tfrac32$ without correction, $\tfrac{17}{26}$ at $\alpha_2=\tfrac12$ and $\tfrac34$ at $\alpha_2=1$. Second, the theorem does not cover feedback coefficients recomputed along each trajectory, which change $\Psi_l$; strengths chosen after seeing the rounding, which break the fixed schedule and the conditional mean; or overload, which introduces a conditional bias. Third, it does not extend to the bounds $U^M$. For three scalar linear layers with unit weights and input, strengths $(0,1,0)$, and independent errors $\pm t$, $\pm4$, $\pm\tfrac43$ with fair signs and $0<t<\tfrac{6}{67}$, one finds $\E(U^{\qep})^2-\E(U^{\base})^2=t(6-67t)/9>0$. These errors satisfy the hypotheses of the theorem but are not a uniform dither; for uniform dither the comparison of the bounds is not settled here.

\section{ReLU networks: two obstructions and a conditional guarantee}\label{sec:relu}

The comparison of Theorem~\ref{thm:linear} does not survive a nonlinearity. We give the two ways in which it fails and a condition under which it is recovered up to an explicit excess.

\begin{example}[The mean-square comparison fails for ReLU]\label{ex:relu}
Let $L=2$, all quantities scalar, both activations ReLU, $X=1$, $W_1=1$, $W_2=-\tfrac14$, and let both layers use subtractive dither on $\mathbb Z$, with errors $u,v$ independent and uniform on $[-\tfrac12,\tfrac12)$. The teacher output is $0$. With $z=1+u\in[\tfrac12,\tfrac32)$, the uncorrected output is $z\,(v-\tfrac14)_+$; at $\alpha_2=1$ the QEP target is $-1/(4z)$ and the QEP output is $(vz-\tfrac14)_+$. Then
\[
\E\bigl[(\Xh_3^{\base})^2\bigr]=\frac{13}{2304}\approx0.00564,\qquad
\E\bigl[(\Xh_3^{\qep})^2\bigr]=\frac1{72}-\frac{\log3}{192}\approx0.00817 .
\]
No target is out of range and all Gram matrices are positive. Given the first layer, the correction restores the conditional mean of the pre-activation to its teacher value $-\tfrac14$ (without it the conditional mean is $-z/4$). The conditional variance, $z^2/12$, is the same with and without the correction; the unconditional variance of the pre-activation drops from $\tfrac{55}{576}$ to $\tfrac{13}{144}$. The teacher's ReLU is inactive there, so what reaches the output is the upper tail of the noise. The corrected output is positive less often, with probability $\tfrac12-\tfrac{\log3}{4}<\tfrac14$, but its positive tail carries more energy. A zero conditional mean before the activation does not survive the activation.
\end{example}

Example~\ref{ex:relu} has noise crossing the activation boundary. That is not the only obstruction. A unit with a fixed 0/1 activation pattern across the calibration samples reads its pre-activation error through a diagonal mask $D$, while the correction mixes samples through $P$. The next proposition says when the correction is compatible with the mask for every possible inherited error.

\begin{proposition}[Masked contraction]\label{prop:commute}
Let $P$ be an orthogonal projection, $D$ a diagonal matrix with entries in $\{0,1\}$, and $0<\alpha\le1$. Then $\lVert a(I-\alpha P)D\rVert_2\le\lVert aD\rVert_2$ for every row vector $a$ if and only if $PD=DP$.
\end{proposition}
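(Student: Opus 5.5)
The plan is to turn the norm inequality for every row vector into an order relation between two symmetric matrices. The ``if'' direction then reduces to the fact that $I-\alpha P$ is a contraction. For ``only if'', the useful fact is that a positive semidefinite matrix whose quadratic form vanishes on a subspace must annihilate that subspace.

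\emph{Sufficiency.} Suppose $PD=DP$. Then $(I-\alpha P)D=D(I-\alpha P)$, so $a(I-\alpha P)D=(aD)(I-\alpha P)$. Since $P$ is an orthogonal projection, $I-\alpha P$ is symmetric with eigenvalues $1$ and $1-\alpha\in[0,1)$, so its operator norm is at most $1$. Hence $\lVert a(I-\alpha P)D\rVert_2\le\lVert aD\rVert_2$ for every row vector $a$.

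\emph{Necessity: reduction to a matrix inequality.} Square the inequality and write both sides as quadratic forms in $a^\top$. Using $D^\top=D=D^2$ and $P^\top=P$, the inequality for all $a$ is equivalent to
\[
(I-\alpha P)D(I-\alpha P)\preceq D .
\]
Expanding the left-hand side gives $D-\alpha(PD+DP)+\alpha^2PDP$. After cancelling $D$ and dividing by $\alpha>0$, the condition becomes $N:=PD+DP-\alpha PDP\succeq0$.

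\emph{Necessity: the key step.} This is the only non-routine point. Take any $x$ with $Px=0$. Then $x^\top N x=x^\top PDx+x^\top DPx-\alpha\,x^\top PDPx=0$, because $x^\top P=0$ and $Px=0$. For a positive semidefinite $N$, a vanishing quadratic form forces $Nx=0$. Computing directly, $Nx=PDx+DPx-\alpha PDPx=PDx$, so $PDx=0$.

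Applying this to $x=(I-P)y$ for arbitrary $y$ gives $PD(I-P)=0$, that is, $PD=PDP$. The right-hand side $PDP$ is symmetric, so $PD=(PD)^\top=DP$. This proves the converse.

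The argument uses only $\alpha>0$, namely to divide by $\alpha$. The upper bound $\alpha\le1$ is used only in the sufficiency direction.
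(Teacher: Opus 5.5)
Your proof is correct, and your converse takes a different route from the paper's. The paper never passes to a matrix inequality. It tests the norm inequality on row vectors $a$ with $aD=0$. There the right-hand side vanishes, so $a(I-\alpha P)D=-\alpha\,aPD$ must vanish as well. This gives $(I-D)PD=0$, i.e.\ $PD=DPD$, and transposing gives $DP=DPD$.

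You instead rewrite the hypothesis as the Loewner inequality $N=PD+DP-\alpha PDP\succeq0$ and test it on $\ker P$. There the quadratic form of $N$ vanishes, so $N$ annihilates $\ker P$, which gives $PD=PDP$; symmetry of $PDP$ then yields $PD=DP$.

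The paper's argument is shorter and exploits that $D$ is a projection, since $I-D$ parametrizes the test vectors. Yours uses idempotence of $D$ only to turn the norm into the quadratic form $a(I-\alpha P)D(I-\alpha P)a^\top\le aDa^\top$. After that, the key step works verbatim with $D$ replaced by any symmetric $M$. It therefore proves the necessity half of the weighted form stated just after the proposition. In fact it is the basis-free version of the paper's block computation there, where the zero $(2,2)$ block of $M-(I-\alpha P)M(I-\alpha P)$ forces $M_{12}=0$. The paper's kernel-of-$D$ test cannot give that generalization: for invertible $M$ there is no non-zero $a$ with $aM=0$.

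A minor point: your sufficiency step only needs $\lvert 1-\alpha\rvert\le1$, so it holds for $0<\alpha\le2$.
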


\begin{proof}
If $PD=DP$ then $a(I-\alpha P)D=aD(I-\alpha P)$ and $I-\alpha P$ is a contraction. Conversely, for every $a$ with $aD=0$ the inequality forces $aPD=0$, that is, $(I-D)PD=0$; transposing, $DP(I-D)=0$, hence $PD=DPD=DP$.
\end{proof}

The proof uses only that $D$ is a symmetric idempotent. The condition characterizes contraction for all inherited errors; in a given network the inherited error is constrained by the teacher and the input, so commutation is not necessary for the mean-square comparison in that network. The same computation gives a weighted form: for any symmetric positive semidefinite $M$, the inequality $r(I-\alpha P)M(I-\alpha P)r^\top\le rMr^\top$ holds for every row vector $r$ if and only if $PM=MP$; in an orthonormal basis adapted to $P$, with $M=\bigl(\begin{smallmatrix}M_{11}&M_{12}\\M_{12}^\top&M_{22}\end{smallmatrix}\bigr)$, the difference $M-(I-\alpha P)M(I-\alpha P)$ has blocks $\alpha(2-\alpha)M_{11}$, $\alpha M_{12}$ and $0$, and is positive semidefinite exactly when $M_{12}=0$. The activation mask is one instance of a metric on the sample space.

The defect of commutation can be computed from the calibration data: $\fro{PD-DP}^2=2\sum_{i:\,d_i=1}\sum_{j:\,d_j=0}P_{ij}^2$, the squared coupling, through the fit, between samples on which the unit is active and samples on which it is inactive. Here $P$ acts on the index space of the calibration samples; the coupling comes from fitting one weight matrix to all samples, not from any mixing of inputs at inference. Theorem~\ref{thm:relu} below uses the spectral norm of the commutator, which this Frobenius norm bounds from above.

\begin{example}[Failure without any change of activation]\label{ex:noflip}
Let $L=2$ with both activations ReLU,
\[
X=\begin{pmatrix}1&-1&1\\-1&1&1\end{pmatrix},\qquad W_1=I_2,\qquad W_2=\begin{pmatrix}1&-2\\-1&-1\end{pmatrix},
\]
and let every weight of both layers be quantized with subtractive dither of the same step, with errors independent and uniform on $[-t,t]$; $\alpha_2=1$. The teacher pre-activations of the second layer are bounded away from zero by $1$. For $0<t\le\tfrac1{10}$, in both schemes and for every realization of the errors, every unit has the teacher's activation state on every sample, and all Gram matrices and quantized weights are invertible. Nevertheless
\[
\E\fro{\delta_3^{\qep}}^2-\E\fro{\delta_3^{\base}}^2\ \ge\ t^2\Bigl(\tfrac{2}{27}-42\,t(1+t)\Bigr)>0\qquad\text{for }0<t\le\tfrac1{600},
\]
and the difference equals $\tfrac{2}{27}t^2+O(t^3)$; the proof is in Appendix~\ref{app:noflip}.
Only the first output unit on the first sample is active. On that entry, with $h_{ij}$ the normalized first-layer errors, the error inherited from the first layer is $t\,(h_{11}-h_{12})$ to first order without correction and $t\,(-2h_{12}+4h_{21})/3$ with it; the second-layer error contributes the same term to both. The correction has moved error from samples where the unit is closed onto the sample where it is open. Here $P$ does not commute with the activation masks.
\end{example}

When the projections respect the activation patterns the linear argument goes through, and when they nearly do the comparison holds up to an explicit cost.

Partition the calibration samples into cells, with $J_c$ the diagonal 0/1 projection onto cell $c$, such that within each cell all samples have the same teacher activation pattern at every layer. Let $D_{l,c}$ be the diagonal 0/1 pattern of cell $c$ at the output of layer $l$, and put $F_{l,c}=D_{l,c}W_l$ and $\Psi_{l,c}(T)=D_{l,c}\Psi_l(T)D_{l,c}$. Along the QEP trajectory let $k_{l,c}=\lVert P_lJ_c-J_cP_l\rVert_2$ if $l\ge2$ and $\alpha_l>0$, and $k_{l,c}=0$ otherwise. Define deterministic positive semidefinite matrices by $K_{1,c}=V_{1,c}=0$ and
\begin{align}
K_{l+1,c}&=F_{l,c}K_{l,c}F_{l,c}^\top+\Psi_{l,c}(V_{l,c})+\alpha_l\,F_{l,c}\,\E\bigl[k_{l,c}\,\delta_l^{\qep}(\delta_l^{\qep})^\top\bigr]F_{l,c}^\top,\label{eq:defK}\\
V_{l+1,c}&=(1-\alpha_l)F_{l,c}V_{l,c}F_{l,c}^\top+\Psi_{l,c}(V_{l,c})+\alpha_l\,\E[k_{l,c}]\,F_{l,c}X_lX_l^\top F_{l,c}^\top.\label{eq:defV}
\end{align}

\begin{theorem}[ReLU networks with stable activations]\label{thm:relu}
Let all activations be ReLU and let the strengths be fixed in advance. Assume:
\begin{enumerate}[label=(\alph*),leftmargin=*,itemsep=1pt]
\item almost surely both quantized trajectories have the teacher's activation pattern at every layer and on every sample;
\item the hypotheses of Theorem~\ref{thm:linear} on $E_l$ and $\Hh_l$.
\end{enumerate}
Then, for every $l$ and $c$,
\begin{align*}
\E\bigl[\delta_l^{\qep}J_c(\delta_l^{\qep})^\top\bigr]&\preceq\E\bigl[\delta_l^{\base}J_c(\delta_l^{\base})^\top\bigr]+K_{l,c},\\
\E\bigl[\Xh_l^{\qep}J_c(\Xh_l^{\qep})^\top\bigr]&\preceq\E\bigl[\Xh_l^{\base}J_c(\Xh_l^{\base})^\top\bigr]+V_{l,c},
\end{align*}
and therefore
\[
\E\fro{\delta_{L+1}^{\qep}}^2\le\E\fro{\delta_{L+1}^{\base}}^2+\sum_c\mathrm{tr}\,K_{L+1,c}.
\]
In particular, if $P_lJ_c=J_cP_l$ almost surely for every $c$ and every $l\ge2$ with $\alpha_l>0$, then $\E\fro{\delta_{L+1}^{\qep}}^2\le\E\fro{\delta_{L+1}^{\base}}^2$.
\end{theorem}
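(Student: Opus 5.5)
Under assumption (a), ReLU acts at every layer as a fixed column-wise mask on every trajectory. On cell $c$ we get $X_{l+1}J_c=D_{l,c}W_lX_lJ_c$, and the same identity holds with $X$ replaced by $\Xh^{\base}$ or $\Xh^{\qep}$, with the quantized weights in place of $W_l$. So on each cell the network is linear with layer maps $F_{l,c}=D_{l,c}W_l$. The plan is to repeat the two-moment induction of Theorem~\ref{thm:linear} cell by cell, carrying the pair of $J_c$-weighted moments
\[
C_{l,c}^M=\E[\delta_l^MJ_c(\delta_l^M)^\top],\qquad G_{l,c}^M=\E[\Xh_l^MJ_c(\Xh_l^M)^\top],
\]
and absorbing the failure of commutation into $K_{l,c}$ and $V_{l,c}$. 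Since $\sum_cJ_c=I$, summing traces over $c$ at the end gives the displayed output bound. The commuting case follows because then every $k_{l,c}=0$, hence $K=V=0$.

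For BASE, $\delta_{l+1}J_c=F_{l,c}\delta_lJ_c-D_{l,c}E_l\Xh_lJ_c$. The cross terms vanish by the conditional-mean hypothesis. Because $J_c$ is a symmetric idempotent, $\Xh_lJ_c\Xh_l^\top$ is an admissible positive semidefinite argument, so the noise term has expectation $\Psi_{l,c}(G_{l,c})$. This gives exact recursions
\[
C_{l+1,c}^{\base}=F_{l,c}C^{\base}_{l,c}F_{l,c}^\top+\Psi_{l,c}(G^{\base}_{l,c}),
\]
and the same form for $G$. With masks fixed, BASE remains unbiased by induction, so $G^{\base}_{l,c}=X_lJ_cX_l^\top+C^{\base}_{l,c}\succeq X_lJ_cX_l^\top$. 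Finiteness of all moments follows exactly as in the linear proof, since masks only shrink norms.

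For QEP, with $Z=\Xh_l$, $T=(1-\alpha)Z+\alpha XP$ and $y=Z-X=-\delta_l$, the masked pre-activation error on the cell is $F_{l,c}(T-X)J_c$ plus the noise term. The key step is the $J_c$-weighted version of \eqref{eq:psd1} and \eqref{eq:psd2}. Expanding gives
\[
J-(I-\alpha P)J(I-\alpha P)=\alpha(2-\alpha)PJP+\alpha N,\qquad N=PJ(I-P)+(I-P)JP,
\]
and, for the uncentred moment,
\[
(1-\alpha)ZJZ^\top+\alpha XJX^\top-TJT^\top=\alpha X(J-PJP)X^\top+\alpha(1-\alpha)(Z-XP)J(Z-XP)^\top,
\]
where $J-PJP=(I-P)J(I-P)+N$. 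Everything except $N$ is positive semidefinite. The crucial bound is $N\succeq-k\,I$ with $k=\lVert PJ-JP\rVert_2$. The reason is that $N=S+S^\top$ with $S=(I-P)JP=(I-P)(JP-PJ)P$ mapping $\mathrm{range}(P)$ into its complement. So $N$ is block off-diagonal in a basis adapted to $P$; its eigenvalues are $\pm$ the singular values of $S$, which are at most $k$. This yields two inequalities:
\[
(T-X)J_c(T-X)^\top\preceq \delta_lJ_c\delta_l^\top+\alpha k_{l,c}\delta_l\delta_l^\top,\qquad
TJ_cT^\top\preceq(1-\alpha)ZJ_cZ^\top+\alpha XJ_cX^\top+\alpha k_{l,c}XX^\top.
\]
For $l=1$ we have $Z=X$, so no commutator is needed, consistent with $k_{1,c}=0$. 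Taking expectations, $k_{l,c}$ is random and must stay inside the expectation in the $\delta$ term. Since $X_l$ is deterministic, $\E[k_{l,c}X_lX_l^\top]=\E[k_{l,c}]X_lX_l^\top$. These are exactly the source terms in \eqref{eq:defK} and \eqref{eq:defV}.

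The induction then runs as in Theorem~\ref{thm:linear}. Assume $C^{\qep}_{l,c}\preceq C^{\base}_{l,c}+K_{l,c}$ and $G^{\qep}_{l,c}\preceq G^{\base}_{l,c}+V_{l,c}$. Positivity of $\Psi_{l,c}$ and of congruence by $F_{l,c}$ carries both inequalities forward. In the $G$ step we use $X_lJ_cX_l^\top\preceq G^{\base}_{l,c}$ to merge $(1-\alpha)G^{\base}+\alpha XJX^\top$ into $G^{\base}$, leaving $(1-\alpha)F V F^\top$, which matches \eqref{eq:defV}. I expect the main obstacle to be the spectral bound $N\succeq-kI$, with constant $k$ rather than $2k$. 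A crude triangle inequality gives $2k$, which does not match the stated recursions. The block off-diagonal structure of $N$ relative to $P$ is what I would rely on to get the constant $k$.
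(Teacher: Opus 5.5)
Your proposal is correct and follows the paper's proof step for step: the same cell-wise moments $C_{l,c}^M$ and $G_{l,c}^M$, the exact BASE recursions with cell-wise unbiasedness, the same two commutator-defect inequalities for $(T-X)J_c(T-X)^\top$ and $TJ_cT^\top$, and the same induction producing \eqref{eq:defK} and \eqref{eq:defV}. Your block off-diagonal argument for $N\succeq -kI$, whose eigenvalues are $\pm$ the singular values of $(I-P)(JP-PJ)P$, is the same estimate the paper gets from the quadratic form: writing $x=p+q$ with $p=Px$, it uses $\lvert p^\top Jq\rvert\le k\lVert p\rVert\lVert q\rVert\le\tfrac12k\lVert x\rVert^2$, so both routes give the constant $k$ rather than $2k$.
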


\begin{proof}
Let $P$ and $J$ be orthogonal projections, $0\le\alpha\le1$ and $k=\lVert PJ-JP\rVert_2$. Then
\begin{equation}\label{eq:comm}
(I-\alpha P)J(I-\alpha P)\preceq J+\alpha k\,I,\qquad PJP\preceq J+k\,I .
\end{equation}
Indeed, write $x=p+q$ with $p=Px$. Since $PJ(I-P)=P(PJ-JP)(I-P)$, we have $\lvert p^\top Jq\rvert\le k\lVert p\rVert\lVert q\rVert\le\tfrac12k\lVert x\rVert^2$, and the two quadratic forms to be bounded equal $-\alpha(2-\alpha)\,p^\top Jp-2\alpha\,p^\top Jq$ and $-q^\top Jq-2\,p^\top Jq$.

Fix a layer and a cell, with the notation of the proof of Theorem~\ref{thm:linear} and $J=J_c$, $k=k_{l,c}$. From $T-X=(Z-X)(I-\alpha P)$ and \eqref{eq:comm},
\[
(T-X)J(T-X)^\top\preceq(Z-X)J(Z-X)^\top+\alpha k\,(Z-X)(Z-X)^\top .
\]
Since $(1-\alpha)ZJZ^\top+\alpha\,XPJPX^\top-TJT^\top=\alpha(1-\alpha)(Z-XP)J(Z-XP)^\top\succeq0$, \eqref{eq:comm} also gives
\[
TJT^\top\preceq(1-\alpha)ZJZ^\top+\alpha\,XJX^\top+\alpha k\,XX^\top .
\]
At the first layer and at layers with $\alpha=0$ we have $T=Z$ and both hold with $k=0$. No Gram matrix of $ZJ_c$ is inverted.

By (a), on the samples of cell $c$ the next activations are $F_{l,c}T+D_{l,c}E_lZ$ for QEP, $F_{l,c}Z+D_{l,c}E_lZ$ for BASE and $F_{l,c}X$ for the teacher. Write $C_{l,c}^M$ and $G_{l,c}^M$ for the two cell moments in the statement. Integrability is obtained as in Theorem~\ref{thm:linear}, since ReLU does not increase norms. The conditional-mean assumption removes the cross terms, so
\begin{align*}
C_{l+1,c}^{\qep}&\preceq F_{l,c}C_{l,c}^{\qep}F_{l,c}^\top+\Psi_{l,c}(G_{l,c}^{\qep})+\alpha_lF_{l,c}\E\bigl[k_{l,c}\delta_l^{\qep}(\delta_l^{\qep})^\top\bigr]F_{l,c}^\top,\\
G_{l+1,c}^{\qep}&\preceq F_{l,c}\bigl[(1-\alpha_l)G_{l,c}^{\qep}+\alpha_lX_lJ_cX_l^\top\bigr]F_{l,c}^\top+\Psi_{l,c}(G_{l,c}^{\qep})+\alpha_l\E[k_{l,c}]F_{l,c}X_lX_l^\top F_{l,c}^\top .
\end{align*}
The commutator norm $k_{l,c}$ is random and correlated with $\delta_l^{\qep}$, so the expectation of their product is not factored. For BASE the recursions are exact, $C_{l+1,c}^{\base}=F_{l,c}C_{l,c}^{\base}F_{l,c}^\top+\Psi_{l,c}(G_{l,c}^{\base})$ and $G_{l+1,c}^{\base}=F_{l,c}G_{l,c}^{\base}F_{l,c}^\top+\Psi_{l,c}(G_{l,c}^{\base})$, and by (a) BASE is unbiased on the cell, so $G_{l,c}^{\base}\succeq X_lJ_cX_l^\top$. Both claims hold with equality at $l=1$. If they hold at $l$, inserting them in the two inequalities and using the positivity of $\Psi_{l,c}$ gives them at $l+1$, with \eqref{eq:defK} and \eqref{eq:defV}. The squared Frobenius error is the sum over cells of $\mathrm{tr}\,C_{L+1,c}$.
\end{proof}

Condition (a) must hold almost surely, not on an event selected afterwards, since conditioning on the absence of activation changes does not in general preserve the zero conditional mean; it can be certified from the margins of the teacher pre-activations and a bound on the errors. Commutation says that the correction does not project samples of one activation pattern onto another. Stability and commutation are independent: with a single sample commutation is automatic, and Example~\ref{ex:relu} still fails because (a) does; Example~\ref{ex:noflip} satisfies (a) and has non-commuting projections.

Exact commutation is restrictive. Together with the invertibility of $\Hh_l$ it is equivalent to $\sum_c\mathrm{rank}(\Xh_lJ_c)=d_l$, where $d_l$ is the number of rows of $\Xh_l$: the activations of the different cells must span complementary subspaces, so the number of cells with non-zero activations at that layer is at most $d_l$. It holds when all samples share one pattern, and it fails as soon as two cells have non-zero activations and one of them spans the whole space. It is not vacuous. With
\[
X=\begin{pmatrix}1&2&-1&-1\\-1&-1&1&2\end{pmatrix},\quad W_1=I_2,\quad W_2=\begin{pmatrix}1&1\\1&-1\end{pmatrix},\quad W_3=\begin{pmatrix}1&-3\\1&1\end{pmatrix},
\]
and cells $\{1,2\}$ and $\{3,4\}$, the two cells have different activation patterns at every layer, and all hypotheses hold with commuting projections for independent errors $\pm\tfrac1{20}$ on every weight. Exact enumeration of the $4096$ realizations gives mean-square output error $3287607/16000000\approx0.2055$ without correction and $\approx0.0532$ with $\alpha=(0,1,1)$.

The defect term makes the commuting case stable under perturbation. If $Z_0$ has full row rank with smallest singular value at least $s$, $P_0$ is its row-space projection and $\lVert Z-Z_0\rVert_2\le\eta<s$, then the row-space projection $P$ of $Z$ satisfies $\lVert P-P_0\rVert_2\le2\eta/(s-\eta)$, and hence $\lVert PJ-JP\rVert_2\le\lVert P_0J-JP_0\rVert_2+2\lVert P-P_0\rVert_2$ for every projection $J$. A lower bound on the singular values is needed: the row-space projection of $\bigl(\begin{smallmatrix}1&0&0\\0&\eta&\eta\end{smallmatrix}\bigr)$ has commutator norm $\tfrac12$ with $\mathrm{diag}(1,1,0)$ for every $\eta>0$.

\begin{corollary}[Small errors]\label{cor:hr}
Fix a ReLU network, the calibration input, the cells and the strengths. Suppose that all teacher pre-activations are non-zero, that $X_l$ has full row rank whenever $\alpha_l>0$, and that the teacher projections $P_l^0$ satisfy $\lVert P_l^0J_c-J_cP_l^0\rVert_2\le\varepsilon_0$ for $l\ge2$ with $\alpha_l>0$. Let $E_l=tN_l$ with $\lVert N_l\rVert_2$ bounded, $\E[N_l\mid\mathcal F_l]=0$ and $\E[N_lTN_l^\top\mid\mathcal F_l]=\Psi_l^0(T)$ for a deterministic positive linear map common to both schemes. Then there are $t_0>0$ and a constant $\kappa$, depending on the network, the input and the bound on $N_l$, such that for $0<t<t_0$
\[
\E\fro{\delta_{L+1}^{\qep}}^2\le\E\fro{\delta_{L+1}^{\base}}^2+\kappa\,(\varepsilon_0t^2+t^3).
\]
\end{corollary}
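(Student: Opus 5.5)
The plan is to reduce Corollary~\ref{cor:hr} to Theorem~\ref{thm:relu}. We verify its hypotheses for small $t$ and then bound $\sum_c\mathrm{tr}\,K_{L+1,c}$ by $\kappa(\varepsilon_0t^2+t^3)$.

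\textbf{Step 1: stability of activations.} Let $m>0$ be the smallest absolute value of a teacher pre-activation, over all layers and samples. Since $\lVert E_l\rVert_2\le t\,\sup\lVert N_l\rVert_2$ almost surely, an induction on $l$ gives deterministic bounds $\lVert\delta_l^M\rVert_2\le c_lt$ for both schemes, provided two things hold. First, the activation patterns agree with the teacher's up to layer $l-1$. Second, the QEP target obeys $\lVert T-X\rVert\le\lVert Z-X\rVert$, because $I-\alpha P$ is a contraction, so $\lVert T-Z\rVert\le 2\lVert Z-X\rVert$. For $t<t_0$ with $c_lt$ small against $m$, the pre-activations of both trajectories then keep the teacher's signs, and this holds almost surely. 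So hypothesis (a) holds. The linear-algebra part of (b) also holds: $X_l$ has full row rank, and $\lVert\Xh_l-X_l\rVert_2\le c_lt<s_l/2$, where $s_l$ is the smallest singular value of $X_l$. Hence $\Hh_l$ is invertible almost surely whenever $\alpha_l>0$. The moment assumptions transfer with $\Psi_l=t^2\Psi_l^0$.

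\textbf{Step 2: bounding the commutator.} The perturbation bound stated just before the corollary applies with $Z_0=X_l$, $\eta=c_lt$ and $s=s_l$. It gives, almost surely,
\[
k_{l,c}\le\varepsilon_0+\frac{4c_lt}{s_l-c_lt}\le\varepsilon_0+c'_lt .
\]

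\textbf{Step 3: solving the recursions.} Insert this into \eqref{eq:defK} and \eqref{eq:defV}. The source term of $V_{l+1,c}$ is $\alpha_l\E[k_{l,c}]F_{l,c}X_lX_l^\top F_{l,c}^\top$, which is $O(\varepsilon_0+t)$, and $V$ is otherwise propagated linearly. The map $\Psi_{l,c}=t^2D\Psi^0_lD$ carries an explicit factor $t^2$, so by induction $\lVert V_{l,c}\rVert=O(\varepsilon_0+t)$. In \eqref{eq:defK} there are two source terms. The first is $\Psi_{l,c}(V_{l,c})=O(t^2(\varepsilon_0+t))$. The second is $\alpha_lF_{l,c}\E[k_{l,c}\delta_l\delta_l^\top]F_{l,c}^\top$; here $\E[k\,\delta\delta^\top]\preceq(\varepsilon_0+c'_lt)\E[\delta\delta^\top]$, and $\E\lVert\delta_l\rVert^2=O(t^2)$ by Step 1. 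So this term is $O(\varepsilon_0t^2+t^3)$. The homogeneous part $F K F^\top$ is linear with fixed coefficients, so over finitely many layers $\mathrm{tr}\,K_{L+1,c}\le\kappa_c(\varepsilon_0t^2+t^3)$. Summing over the finitely many cells and applying Theorem~\ref{thm:relu} finishes the proof.

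\textbf{Main obstacle.} I expect Step 1 to be the hardest. The error bounds must be almost sure and uniform, not in mean, because condition (a) has to hold almost surely. They must also be proved jointly with the pattern agreement, since ReLU and the projection depend on the trajectory. I would first establish the deterministic contraction $\lVert\delta_{l+1}\rVert\le\lVert W_l\rVert(\lVert\delta_l\rVert+\lVert E_l\rVert\lVert\Xh_l\rVert)$, which is valid while the patterns match. Then I would choose $t_0$ by a finite induction that at each layer keeps $c_lt_0\le\min(m/2,s_l/2)$. The constants $\kappa$ and $t_0$ then depend only on the network, the input and $\sup\lVert N_l\rVert_2$, as the statement requires.
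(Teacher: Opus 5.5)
Your proposal is correct and follows essentially the paper's own argument. It has the same four steps: a deterministic $O(t)$ bound on $\delta_l$ for both schemes from the contraction $I-\alpha P$; the teacher's margins for (a) and Weyl's inequality for the Gram matrices; the projection perturbation bound giving $k_{l,c}\le\varepsilon_0+O(t)$; and then sources of order $O(\varepsilon_0+t)$ in \eqref{eq:defV} and $O((\varepsilon_0+t)t^2)$ in \eqref{eq:defK}.

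One repair is needed. Entrywise ReLU, and even a fixed $0/1$ mask, is $1$-Lipschitz in $\fro{\cdot}$ but not in $\lVert\cdot\rVert_2$. So run the induction as $\fro{\delta_{l+1}}\le\lVert W_l\rVert_2\fro{\delta_l}+\lVert E_l\rVert_2\fro{\Xh_l}$; your version wrongly puts $\lVert W_l\rVert$ in front of the second term as well. Pass to $\lVert\delta_l\rVert_2\le\fro{\delta_l}$ only afterwards. This bound holds on every realization without any pattern agreement, so the joint induction you flag as the main obstacle is unnecessary.
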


\begin{proof}
For both schemes $\fro{\delta_{l+1}}\le(\lVert W_l\rVert_2+t\lVert N_l\rVert_2)\fro{\delta_l}+t\lVert N_l\rVert_2\fro{X_l}$ on every realization, since the entrywise ReLU is $1$-Lipschitz in the Frobenius norm and $I-\alpha P$ is a contraction; hence $\fro{\delta_l}=O(t)$ uniformly, and $\lVert\delta_l\rVert_2\le\fro{\delta_l}$. For small $t$ the margins of the teacher give (a), the quantized activations entering layers of positive strength have full row rank with singular values bounded below, and by the perturbation bound above $k_{l,c}\le\varepsilon_0+O(t)$. In \eqref{eq:defV} the source is $O(\varepsilon_0+t)$, so $V_{l,c}=O(\varepsilon_0+t)$. In \eqref{eq:defK} the source is $O((\varepsilon_0+t)t^2)$ and $\Psi_{l,c}(V_{l,c})=t^2\,D_{l,c}\Psi_l^0(V_{l,c})D_{l,c}=O((\varepsilon_0+t)t^2)$. The depth is fixed, so $\mathrm{tr}\,K_{L+1,c}=O(\varepsilon_0t^2+t^3)$.
\end{proof}

When the teacher projections commute exactly, the excess is $O(t^3)$, an upper bound, while the errors themselves are in general of order $t^2$, although the projections along the quantized trajectory do not commute in general. An instance: $X_1$ with rows $(2,2,1,1)$, $(1,1,2,2)$, $(0,0,\eta,-\eta)$, $W_1=[\,I_2\mid0\,]$, $W_2=\bigl(\begin{smallmatrix}1&-1\\-1&3/2\end{smallmatrix}\bigr)$, cells $\{1,2\}$ and $\{3,4\}$, $\alpha=(0,1)$, $0<\eta\le\tfrac1{10}$ and independent errors uniform on $[-t,t]$ with $t\le10^{-3}$: the two cells have different patterns, (a) holds on the whole support, and $0<k_{2,c}\le8\eta t/(1-2\eta t)$ outside a null set.

The bound of Theorem~\ref{thm:relu} can be nearly attained and can be loose. Three saved instances with independent sign-valued additive errors have certificates combining exact rational output risks with rational intervals for the commutator norms. The stored binary parameters are embedded exactly as rationals. Every sign realization is checked for agreement with the teacher's masks along both trajectories and for the required Gram inverses at corrected layers. The certificates use the stated recursion, with zero first-layer commutator contribution. They support a ratio of actual excess to $\sum_c\mathrm{tr}\,K_{L+1,c}$ of approximately $0.999994$, a three-layer case in which dropping $\Psi_{l,c}(V_{l,c})$ makes the bound fail, and another ratio of approximately $0.74$.

\begingroup\emergencystretch=2em
The accompanying code repository's \nolinkurl{numerics/saved_witnesses/} directory contains the configurations. The near-attainment case is \nolinkurl{checkG_climb_3_1.json}; the omitted-term case is \nolinkurl{checkG_climb_variant_a_1.json}; the third is \nolinkurl{checkG_search_seed5.json}. Their rational risks and interval endpoints are indexed by the same filenames in \nolinkurl{SAVED_WITNESS_REPLAY.json}; \nolinkurl{replay_fixed_witnesses.py} replays only these saved inputs. These instances support near attainment and the need for the later-noise term, not optimality of the constant or a guarantee for actual quantizers. In Example~\ref{ex:noflip} the bound exceeds the actual excess by more than an order of magnitude. Neither result removes condition (a), and we do not claim that trained networks satisfy it.\par\endgroup

\section{Fixed rounding errors and the published guarantee}\label{sec:counter}

Sections~\ref{sec:network} and~\ref{sec:relu} establish stochastic network comparisons under explicit assumptions. In contrast, the guarantee published with QEP is asserted for arbitrary fixed perturbation matrices shared by the two schemes. Under this deterministic convention, the comparison fails already for two scalar linear layers. Here the corrected scheme denotes the idealized correction in that guarantee; actual codebook minimizers are treated separately in Section~\ref{sec:actual}.

To our knowledge the only published statement of this kind is the guarantee given with Quantization Error Propagation (QEP) \cite{qep}, which adds a closed-form correction to the weights before quantizing them, scaled by a propagation strength $\alpha_l\in[0,1]$. Theorem~5.2 of \cite{qep} states that the output error of the propagated scheme is bounded by that of the uncorrected scheme; its formal version, Theorem~B.5, states that a Lipschitz upper bound $U^{\qep}$ on the output error satisfies $U^{\qep}\le U^{\base}$ for any fixed perturbation matrices and every choice of strengths. Proposition~5.4 adds that the output error decreases monotonically as each $\alpha_l$ approaches one. In the published discussion of the paper the authors describe Theorem~5.2 as ``a rigorous guarantee that solving the layer-wise local optimization problem \ldots{} reduces the final quantization error''\footnote{Authors' rebuttal of 30 July 2025 in the public discussion of \cite{qep}, OpenReview forum \texttt{a3l3K9khbL}, under ``Weakness and Question 2''; archived copy consulted on 4 October 2026.} and use it to distinguish QEP from a concurrent method with the same local objective. 

\paragraph{Claims under examination.}
\begin{itemize}[leftmargin=*,itemsep=2pt]
\item \cite[Theorem~B.5]{qep}: for every choice of $\{\alpha_l\}\subset[0,1]$, $U^{\qep}\le U^{\base}$, via the per-layer inequality $\fro{R_l^{\qep}}\le\fro{R_l^{\base}}$ for all $l$.
\item \cite[Theorem~5.2, informal]{qep}: $\fro{f_\theta(X)-f_{\widehat\theta_{\qep}}(X)}\le\fro{f_\theta(X)-f_{\widehat\theta_{\base}}(X)}$ for the quantizers defined by the two layer objectives.
\item \cite[Proposition~5.4]{qep}: the output error of QEP decreases monotonically as each $\alpha_l$ approaches $1$.
\end{itemize}
The appendix of \cite{qep} works to first order in $\{E_l\}$ and writes the same symbol $E_l$ for the perturbation in both schemes. The counterexample of Theorem~\ref{thm:counter} respects both conventions.

\begin{theorem}[Counterexample to Theorem~B.5 of \cite{qep}]\label{thm:counter}
Let $L=2$, all matrices $1\times1$, $\sigma_1=\sigma_2=\mathrm{id}$, $\gamma_1=\gamma_2=1$, $W_1=W_2=1$, $X=1$, $\alpha_1=\alpha_2=1$, and $E=(\varepsilon,-\varepsilon)$ with $0<\varepsilon<\tfrac12$. Then all hypotheses of Theorem~B.5 hold, including invertibility of the Gram matrices and of all quantized weights, and
\begin{align*}
R_1^{\base}&=R_1^{\qep}=-\varepsilon, & \Xh_2^{\base}&=\Xh_2^{\qep}=1+\varepsilon,\\
R_2^{\base}&=\varepsilon^2, & R_2^{\qep}&=\varepsilon+\varepsilon^2,\\
U^{\base}&=\varepsilon+\varepsilon^2, & U^{\qep}&=2\varepsilon+\varepsilon^2,\\
\lvert X_3-\Xh_3^{\base}\rvert&=\varepsilon^2, & \lvert X_3-\Xh_3^{\qep}\rvert&=\varepsilon+\varepsilon^2.
\end{align*}
Hence $U^{\qep}-U^{\base}=\varepsilon>0$ and the QEP output error is strictly larger, for every $0<\varepsilon<\tfrac12$.
\end{theorem}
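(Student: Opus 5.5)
The proof is a direct scalar computation along the two trajectories, so I would carry it out in the order of the forward pass and then read off the bounds. First I will check the hypotheses of Theorem~B.5. With $X=1$ every Gram matrix is a nonzero scalar, $\Hh_1=1$ and $\Hh_2=\Xh_2^2$. The quantized weights are $1+\varepsilon$ at layer~1 and either $1-\varepsilon$ (BASE) or $W_2^*-\varepsilon$ (QEP) at layer~2. These are nonzero for $0<\varepsilon<\tfrac12$, once $W_2^*$ is computed below. The activations are the identity, so $\gamma_l=1$. The perturbations are the same fixed numbers in both schemes, which respects the shared-$E_l$ convention of \cite{qep}.

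Layer~1 is identical in both schemes. Since $\delta_1=0$, the correction vanishes and $W_1^*=W_1=1$. Hence $\Wh_1=1+\varepsilon$, $\Xh_2=1+\varepsilon$ and $R_1=W_1X-\Wh_1X=-\varepsilon$. The inherited error is $\delta_2=X_2-\Xh_2=-\varepsilon$.

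For layer~2 in BASE, $\Wh_2^{\base}=1-\varepsilon$, so $\Wh_2^{\base}\Xh_2=(1-\varepsilon)(1+\varepsilon)=1-\varepsilon^2$ and $R_2^{\base}=1-(1-\varepsilon^2)=\varepsilon^2$. For QEP with $\alpha_2=1$, the target is
\[
W_2^*=1+\delta_2\Xh_2\Hh_2^{-1}=1-\frac{\varepsilon}{1+\varepsilon}=\frac{1}{1+\varepsilon},
\]
so $W_2^*\Xh_2=1=W_2X_2$. Then $\Wh_2^{\qep}=\tfrac1{1+\varepsilon}-\varepsilon$, and $\Wh_2^{\qep}\Xh_2=1-\varepsilon-\varepsilon^2$, giving $R_2^{\qep}=\varepsilon+\varepsilon^2$. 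This is the cancellation mechanism of the geometric reading after Corollary~\ref{cor:iff}, with $AP=-\varepsilon$ and $B=-\varepsilon(1+\varepsilon)$. I also need $\Wh_2^{\qep}\neq0$, which amounts to $\varepsilon(1+\varepsilon)\neq1$ and holds for $\varepsilon<\tfrac12$.

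Finally, since the network is linear, the output errors equal the last-layer residuals: $X_3-\Xh_3^M=W_2X_2-\Wh_2^M\Xh_2^M=R_2^M$. With unit weights and $\gamma$'s, $U^M=\fro{R_1^M}+\fro{R_2^M}$. This gives $U^{\base}=\varepsilon+\varepsilon^2$ and $U^{\qep}=2\varepsilon+\varepsilon^2$, whose difference is $\varepsilon>0$, and the output errors are $\varepsilon^2$ and $\varepsilon+\varepsilon^2$. None of these steps is hard; the only point needing care is confirming that the invertibility hypotheses hold on the whole range $0<\varepsilon<\tfrac12$ rather than only for small $\varepsilon$. Because the strict gap is of first order in $\varepsilon$, the counterexample also survives the first-order convention of \cite{qep}.
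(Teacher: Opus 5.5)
Your proposal is correct and follows the paper's proof essentially step for step. You compute the shared first layer, then the BASE and QEP second-layer weights with $W_2^*=1/(1+\varepsilon)$, and read off $U^M=\lvert R_1^M\rvert+\lvert R_2^M\rvert$ and output error $\lvert R_2^M\rvert$ from the identity activations. You also check that the quantized weights are non-zero on $0<\varepsilon<\tfrac12$ (the paper notes they are in fact positive there).
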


\begin{proof}
In the first layer $\delta_1=0$, so the correction vanishes and both schemes use $\Wh_1=1+\varepsilon$, giving $\Xh_2=1+\varepsilon$ and $R_1=1-(1+\varepsilon)=-\varepsilon$. In the second layer $\delta_2=-\varepsilon$ and $\Hh_2=(1+\varepsilon)^2>0$.

\emph{BASE.} $\Wh_2=1-\varepsilon$, output $(1-\varepsilon)(1+\varepsilon)=1-\varepsilon^2$, so $R_2=\varepsilon^2$.

\emph{QEP.} $W_2^*=1+\delta_2\Xh_2/\Xh_2^2=1/(1+\varepsilon)$, $\Wh_2=1/(1+\varepsilon)-\varepsilon$, output $1-\varepsilon(1+\varepsilon)=1-\varepsilon-\varepsilon^2$, so $R_2=\varepsilon+\varepsilon^2$.

With $\gamma_s=\lVert W_s\rVert_2=1$, $U^M=\lvert R_1^M\rvert+\lvert R_2^M\rvert$. Since $\sigma_2$ is the identity, the output error equals $\lvert R_2^M\rvert$. For $0<\varepsilon<\tfrac12$ the quantized weights $1+\varepsilon$, $1-\varepsilon$ and $1/(1+\varepsilon)-\varepsilon$ are positive. (The algebraic identities hold for every $\varepsilon>0$; the QEP second-layer weight vanishes at $\varepsilon=(\sqrt5-1)/2$ and the BASE one at $\varepsilon=1$.)
\end{proof}

The gap is first order: $\max_k\lvert E_k\rvert=\varepsilon$ and $(U^{\qep}-U^{\base})/\varepsilon=1$ throughout, while the two bounds and the two output errors differ from their first-order expressions by exactly $\varepsilon^2$. The first-order convention of \cite{qep} therefore cannot absorb it. Both schemes see the same second-layer input, so the failure does not come from comparing different trajectories.

\begin{remark}[The sign matters]
With $E=(\varepsilon,\varepsilon)$ the same computation gives $U^{\base}=3\varepsilon+\varepsilon^2$, $U^{\qep}=2\varepsilon+\varepsilon^2$ and output errors $2\varepsilon+\varepsilon^2$ against $\varepsilon+\varepsilon^2$: QEP is better. The ordering depends on the relation between the inherited error and the new perturbation, which is what the identity of Section~\ref{sec:identity} makes precise.
\end{remark}

\begin{theorem}[Failure of monotonicity in the shared-perturbation model]\label{thm:alpha}
In the setting of Theorem~\ref{thm:counter}, keep $\alpha_1=1$ and let $\alpha_2=\alpha\in[0,1]$. Then the QEP output is $1-\alpha\varepsilon-\varepsilon^2$ and the output error is $\alpha\varepsilon+\varepsilon^2$. For $0\le\alpha<\beta\le1$ the error at $\beta$ exceeds the error at $\alpha$ by $(\beta-\alpha)\varepsilon>0$.
\end{theorem}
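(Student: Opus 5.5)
The plan is to redo the computation in the proof of Theorem~\ref{thm:counter} with the second-layer strength left as a parameter $\alpha$; the first layer does not depend on it. Since $\delta_1=0$, the correction at layer one vanishes for any $\alpha_1$. Hence, exactly as before, $\Wh_1=1+\varepsilon$, $\Xh_2=1+\varepsilon$ and $\delta_2=X_2-\Xh_2=-\varepsilon$. The Gram matrix $\Hh_2=(1+\varepsilon)^2$ is positive, so the correction is well defined for every $\alpha$.

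Next I compute the corrected second-layer target from the definition of $W_l^*(\alpha_l)$ in Section~\ref{sec:setting}:
\[
W_2^*(\alpha)=1+\alpha\cdot\frac{(-\varepsilon)(1+\varepsilon)}{(1+\varepsilon)^2}=1-\frac{\alpha\varepsilon}{1+\varepsilon}.
\]
The shared perturbation is $E_2=-\varepsilon$, so $\Wh_2^{\qep}=1-\alpha\varepsilon/(1+\varepsilon)-\varepsilon$. Multiplying by $\Xh_2=1+\varepsilon$, the output is $(1+\varepsilon)-\alpha\varepsilon-\varepsilon(1+\varepsilon)=1-\alpha\varepsilon-\varepsilon^2$. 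The teacher output is $X_3=1$ and $\sigma_2$ is the identity, so the output error is $\alpha\varepsilon+\varepsilon^2$. This quantity is nonnegative, so the absolute value causes no case split. As a check, $\alpha=0$ recovers $\varepsilon^2$ (BASE) and $\alpha=1$ recovers $\varepsilon+\varepsilon^2$, matching Theorem~\ref{thm:counter}.

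Strict monotonicity is then immediate. For $0\le\alpha<\beta\le1$ the difference of the errors is $(\beta-\alpha)\varepsilon>0$. Equivalently, Proposition~\ref{prop:identity} applies with $P=1$, $AP=-\varepsilon$ and $B=-\varepsilon-\varepsilon^2$. This gives $c=\varepsilon^2+\varepsilon^3>0$, so Corollary~\ref{cor:iff}(ii) already rules out monotonicity; the direct formula, however, gives the exact gap. One also has to confirm that the hypotheses of Theorem~\ref{thm:counter} persist for every $\alpha$, in particular that the quantized weight $1-\alpha\varepsilon/(1+\varepsilon)-\varepsilon$ is nonzero. It is at least $1/(1+\varepsilon)-\varepsilon>0$ for $\varepsilon<\tfrac12$. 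There is no real obstacle; the only care needed is to make sure the perturbation $E_2=-\varepsilon$ stays fixed independently of $\alpha$, as the shared-perturbation convention requires.
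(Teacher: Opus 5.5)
Your proposal is correct and follows the paper's proof: you compute $W_2^*(\alpha)=1-\alpha\varepsilon/(1+\varepsilon)$, subtract the shared $\varepsilon$, and multiply by $\Xh_2=1+\varepsilon$ to get the output $1-\alpha\varepsilon-\varepsilon^2$, exactly as the paper does. Your extra checks (the cross term $c=\varepsilon^2+\varepsilon^3>0$ via Corollary~\ref{cor:iff}(ii), and positivity of the quantized weight for every $\alpha$) are accurate and harmless additions.
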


\begin{proof}
$W_2^*(\alpha)=1-\alpha\varepsilon/(1+\varepsilon)$, so $\Wh_2=1-\alpha\varepsilon/(1+\varepsilon)-\varepsilon$ and the output is $(1+\varepsilon)\Wh_2=1-\alpha\varepsilon-\varepsilon^2$.
\end{proof}

This contradicts Proposition~5.4 of \cite{qep} read in the shared-perturbation model of its appendix. It does not contradict Corollary~B.6 of \cite{qep}: that corollary bounds the residual by $\fro{W_l\delta_l(I-\alpha P_l)}+\lVert E_l\rVert_2\fro{\Xh_l}$, which here equals $(2-\alpha)\varepsilon+\varepsilon^2$, is decreasing in $\alpha$, and remains a valid upper bound on $\alpha\varepsilon+\varepsilon^2$ for $\alpha\le1$. The bound is monotone; the error it bounds is not.

\paragraph{Where the proof breaks.}
Let $P_l=\Xh_l^\top\Hh_l^{-1}\Xh_l$, the orthogonal projection onto the row space of $\Xh_l$. The proof of Theorem~B.5 correctly derives
\[
R_l^{\qep}=W_l\delta_l(I-\alpha_lP_l)-E_l\Xh_l,\qquad
\widetilde R_l^{\base}:=W_l\delta_l-E_l\Xh_l,
\]
where $\widetilde R_l^{\base}$ is the BASE residual on the same $\Xh_l$, and correctly notes $\fro{W_l\delta_l(I-\alpha_lP_l)}\le\fro{W_l\delta_l}$. It then asserts, ``by the triangle inequality'',
\[
\fro{W_l\delta_l(I-\alpha_lP_l)-E_l\Xh_l}\le\fro{\widetilde R_l^{\base}}.
\]
This does not follow. Shrinking one term of a difference does not shrink the difference. In Theorem~\ref{thm:counter}, $W_2\delta_2=-\varepsilon$ and $E_2\Xh_2=-\varepsilon-\varepsilon^2$ nearly cancel in the BASE residual; QEP removes the first term and leaves the second intact.

The proof contains a second unjustified step: having fixed $\Xh_l=\Xh_l^{\qep}$, it concludes by ``taking $\Xh_l=\Xh_l^{\base}$'', although the two trajectories differ in general. Theorem~\ref{thm:counter} isolates the first gap, since there the trajectories coincide up to the layer in question. The second gap causes failure on its own, as Example~\ref{ex:traj} shows.

Three further facts about the shared-perturbation statements are established in Appendix~\ref{app:more}. The failure is not an artifact of $P=I$ or of scalar networks: it occurs with a proper projection, lifts to matrices, and occurs in a family that does not reduce to independent scalar channels (Example~\ref{ex:proj}). The substitution of one trajectory for the other fails on its own, in a three-layer network where the first step causes no harm (Example~\ref{ex:traj}). The composite inequality $\fro{\delta_{L+1}^{\qep}}\le U^{\base}$, also asserted in Theorem~B.5 of \cite{qep}, fails for finite perturbations (Example~\ref{ex:composite}); it holds once each factor $\lVert W_s\rVert_2$ in $U^{\base}$ is replaced by $\lVert W_s\rVert_2+\lVert E_s\rVert_2$, and therefore to first order.

\section{Actual nearest-codeword quantizers}\label{sec:actual}

The shared-perturbation model fixes $E_l$ independently of the target. Real quantizers do not behave this way.

\begin{remark}[The correction is piecewise constant]\label{rem:piecewise}
Let $Q$ be a deterministic nearest-codeword map with a fixed codebook and scale, and $\Wh(\alpha)=Q(W+\alpha K)$ with $K=W\delta\Xh^\top\Hh^{-1}$. With the layer input and all other layers held fixed, on any interval of $\alpha$ on which no entry of $W+\alpha K$ crosses a decision boundary, $\Wh(\alpha)$ is constant, so the quantized layer and its loss are constant. Within such an interval the change in the continuous target is cancelled exactly by the change in rounding error: for $\alpha,\alpha_0$ in the same interval, $B_\alpha=B_{\alpha_0}-(\alpha-\alpha_0)K\Xh$. The loss therefore changes only at boundary crossings, which are determined by the path of the target and by the codebook; the quadratic of Proposition~\ref{prop:identity}, which holds $B$ fixed, does not describe them.
\end{remark}

\begin{remark}[No local harm at full strength with exact minimization]\label{rem:local}
For any candidate $Q$ and common input $\Xh$,
\[
\fro{WX-Q\Xh}^2=\fro{W\delta(I-P)}^2+\fro{(W^*(1)-Q)\Xh}^2 .
\]
Hence an exact minimizer over the codebook of the QEP objective with $\alpha=1$ attains a layer residual no larger than any other codeword assignment on that input, including the one chosen by the uncorrected objective. This is the standard least-squares decomposition and is implicit in \cite[Eq.~(5)]{qep}. It shows that the shared-perturbation instance of Theorem~\ref{thm:counter} cannot arise from exact minimization of both objectives on a common input with a common codebook. It says nothing about the network output.
\end{remark}

The next result shows that output-level domination fails for exact minimizers.

\begin{theorem}[An actual finite-codebook family]\label{thm:actual}
Let the codebook be $\{0,1,2,3\}$ and consider a three-layer network with scalar weights $a,b,c>0$, ReLU activations and two calibration samples $X=(-1,1)$. Let each layer be quantized by exact minimization over the codebook, the uncorrected scheme using the objective of \cite[Eq.~(1)]{qep} with its own quantized inputs and QEP using \cite[Eq.~(3)]{qep} with $\alpha=1$ and no damping. Suppose
\[
\tfrac12<a<\tfrac32,\qquad \tfrac32<b<\tfrac52,\qquad \tfrac52<ab<3,\qquad \tfrac12<c<\tfrac32,\qquad \tfrac32<abc<\tfrac72 .
\]
Then every layer minimizer in both schemes is unique, the uncorrected scheme selects $(1,2,1)$ and QEP selects $(1,3,1)$, the continuous targets lie strictly inside $(0,3)$, and all input Gram matrices are positive. Writing $t=abc$ for the teacher output on the active sample, the output errors are $\lvert t-2\rvert$ and $\lvert t-3\rvert$, so
\[
\text{QEP is strictly worse}\iff t<\tfrac52,\qquad
\text{QEP is strictly better}\iff t>\tfrac52 .
\]
Moreover, the better of the two attains the minimum output error over all $64$ codeword triples.
\end{theorem}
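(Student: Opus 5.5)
The proof is a direct layer-by-layer computation. The positive weights and the ReLUs keep the first calibration sample at zero from the first layer on. Each layer objective therefore reduces to a one-dimensional nearest-point problem on the second sample, which the interval hypotheses decide uniquely. I would carry the teacher activations, the two quantized trajectories and the continuous QEP targets through the three layers, and then compare the outputs with every product of codewords.

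\emph{Layer 1.} Here $\delta_1=0$, so the two objectives coincide. The objective is $\fro{aX-qX}^2=2(a-q)^2$, minimized uniquely at $q_1=1$ because $\tfrac12<a<\tfrac32$. The target $a$ lies in $(0,3)$. The teacher activation is $X_2=(0,a)$ and both quantized schemes have $\Xh_2=(0,1)$, with Gram matrix $1>0$.

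\emph{Layer 2.} The uncorrected objective on its own input is $\fro{b\Xh_2-q\Xh_2}^2=(b-q)^2$, so $q_2=2$ uniquely by $\tfrac32<b<\tfrac52$. The QEP objective with $\alpha=1$ is $\fro{bX_2-q\Xh_2}^2=(ab-q)^2$, and its continuous target is $W^*(1)=bX_2\Xh_2^\top\Hh_2^{-1}=ab\in(0,3)$. Since $\tfrac52<ab<3$, the unique nearest codeword is $q_2=3$. The activations are then $\Xh_3^{\base}=(0,2)$ and $\Xh_3^{\qep}=(0,3)$, with Gram matrices $4$ and $9$, and the teacher gives $X_3=(0,ab)$.

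\emph{Layer 3.} The uncorrected objective is $4(c-q)^2$, so $q_3=1$ uniquely by $\tfrac12<c<\tfrac32$, and the target $c$ is in $(0,3)$. The QEP objective is $(t-3q)^2$ with $t=abc$. Its target is $t/3\in(\tfrac12,\tfrac76)\subset(0,3)$, so $q_3=1$ uniquely. The triples are therefore $(1,2,1)$ and $(1,3,1)$, the outputs on the active sample are $2$ and $3$, and the output on the inactive sample is $0$, equal to the teacher's. The two errors are $\lvert t-2\rvert$ and $\lvert t-3\rvert$, and the stated dichotomy at $t=\tfrac52$ follows at once.

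\emph{Optimality over all $64$ triples.} For any codeword triple $(q_1,q_2,q_3)$, the first sample gives $\mathrm{ReLU}(-q_1)=0$, and it stays $0$ through the later layers. The second sample gives $q_1q_2q_3$, because all factors are non-negative and the ReLUs act trivially. The achievable outputs are therefore the products of three elements of $\{0,1,2,3\}$, namely $\{0,1,2,3,4,6,8,9,12,18,27\}$. For $t\in(\tfrac32,\tfrac72)$ the nearest of these is $2$ or $3$, so $\min(\lvert t-2\rvert,\lvert t-3\rvert)$ is the minimum output error.

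The only point needing care, which I would flag as the main (mild) obstacle, is the treatment of the inactive sample. The QEP objective matches $W_lX_l$ in the pre-activation, and the quantity that enters is the teacher's post-ReLU activation. One must check that zeros on the first sample contribute nothing to either objective, so that the Gram matrices are the scalars $1,1,4,9$. One must also check that no tie arises at any endpoint, which is exactly what the strict inequalities in the hypotheses ensure.
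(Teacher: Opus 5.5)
Your proposal is correct and follows the paper's own proof sketch. The first sample is zeroed by the first ReLU, so each layer objective reduces to nearest-codeword rounding: of $a,b,c$ in the uncorrected scheme and of $a$, $ab$, $t/3$ in QEP, giving outputs $2$ and $3$. Global optimality then follows because every triple outputs $q_1q_2q_3\in\mathbb{Z}_{\ge0}$. Beyond the paper, you only spell out the objectives and Gram matrices, to whose list you could add $\Hh_1=XX^\top=2$, and you enumerate the attainable products explicitly, whereas the paper uses only that they are non-negative integers.
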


\begin{proof}[Proof sketch]
Since $a>0$, the first sample is mapped to $0$ by every layer and the second stays positive; the network acts on the second sample as the scalar chain $x\mapsto abc\,x$. The uncorrected scheme rounds $a,b,c$ to $1,2,1$. QEP rounds $a$ to $1$; its second-layer target is $ab\in(\tfrac52,3)$, which rounds to $3$; its third-layer target is $abc/3\in(\tfrac12,\tfrac76)$, which rounds to $1$. The outputs are $2$ and $3$. Global optimality follows because every codeword triple outputs a non-negative integer and $2$, respectively $3$, is the nearest integer to $t$ on the corresponding side of $\tfrac52$.
\end{proof}

We state plainly what this family is. It is a scalar chain: one sample is inactive throughout and the ReLU never acts on a sign change. The threshold $t=\tfrac52$ is the midpoint of the two attainable outputs. The content of the theorem is that exact layer-wise minimization, with interior targets, unique minimizers and no clipping, produces a QEP output that is strictly worse on an open set of parameters. In the second layer QEP's choice is locally better (the pre-activation $3$ is closer to the teacher's $ab$ than $2$ is), and the third layer's grid is too coarse to exploit it. For instance, $(a,b,c)=(\tfrac75,2,\tfrac{18}{25})$ gives teacher output $2.016$, uncorrected error $0.016$ and QEP error $0.984$; changing $c$ to $\tfrac{11}{10}$ gives teacher output $3.08$, and QEP is better.

The same instance shows non-monotonicity in the propagation strength for an actual quantizer. At $(\tfrac75,2,\tfrac{18}{25})$, keeping $\alpha_1=\alpha_3=1$, with second-layer strength $\alpha_2=\tfrac12$ the selected codewords are $(1,2,1)$ and the squared output error is $4/15625$; with $\alpha_2=\tfrac34$ they are $(1,3,1)$ and it is $15129/15625$. The decision boundary is at $\alpha_2=\tfrac58$. This instance is formalized, including the tie at the boundary, where the smaller codeword is selected.

\section{Numerical illustrations}\label{sec:numerics}

The statements of Sections~\ref{sec:network} and~\ref{sec:relu} are proved. The computations of this section show how large the quantities are in small synthetic networks. They are of three kinds. For Figure~\ref{fig:gain} and Table~\ref{tab:split} the actual algorithm is run: the corrected target is computed with the Gram inverse, the weights are rounded once per run with entrywise subtractive dither on an unbounded grid, and the rounded weights are applied to all calibration samples. For the left panel of Figure~\ref{fig:relu} the rounding errors are sampled from the uniform law that this dither produces, and for the right panel sign-valued errors are enumerated; no quantizer is called in these two. Models, strengths, sample sizes and reported quantities were fixed before running. Scripts and raw outputs accompany the paper.

\paragraph{How large the gain is.} Figure~\ref{fig:gain} concerns the class of Example~\ref{ex:iso} with $d=4$ and eight calibration samples, for depths up to $64$. The simulated normalized errors agree with the closed form for a fixed teacher and for random teachers: of the $168$ simulated values, $156$ lie within two standard errors of the prediction, and the largest deviation is $2.7$ standard errors. Values at different depths come from the same runs and are not independent, so these counts are descriptive. Without compensation the error grows with the depth as $(1+\nu)^L-1$; with full compensation it stays close to $\nu$, the rounding noise of the last layer; with $\alpha=\tfrac12$ it settles at an intermediate level. At fine resolution the gain follows $10\log_{10}L$ over the depths shown. At the coarser resolutions and the largest depths shown it exceeds this value, since the uncorrected error then grows geometrically: at depth $64$ the simulated gains are $18.1$, $18.2$, $19.4$ and $26.1$~dB for $\nu=10^{-4},10^{-3},10^{-2}$ and $5\times10^{-2}$, against $10\log_{10}64\approx18.1$.

\begin{figure}[t]
\centering
\begin{tikzpicture}
\begin{axis}[width=0.5\textwidth,height=5.4cm,xmode=log,ymode=log,log basis x=2,xlabel={depth $L$},ylabel={normalized error $q_L$},
legend style={font=\scriptsize,at={(0.03,0.97)},anchor=north west,draw=none,fill=none},xtick={1,2,4,8,16,32,64},xticklabels={1,2,4,8,16,32,64},tick label style={font=\scriptsize},label style={font=\small}]
\addplot[black,thick] coordinates {(1,0.01) (2,0.0201) (3,0.030301) (4,0.040604) (6,0.0615202) (8,0.0828567) (12,0.126825) (16,0.172579) (24,0.269735) (32,0.374941) (48,0.612226) (64,0.890462)};
\addlegendentry{no compensation}
\addplot[black,only marks,mark=o,mark size=2.2pt,forget plot,error bars/.cd,y dir=both,y explicit] coordinates {(1,0.0100114) +- (0,3.13e-05) (2,0.0200965) +- (0,8.53e-05) (4,0.0405473) +- (0,0.000195) (8,0.0828237) +- (0,0.000445) (16,0.171966) +- (0,0.00107) (32,0.374935) +- (0,0.00288) (64,0.895835) +- (0,0.00963)};
\addplot[black,only marks,mark=x,mark size=2.6pt,forget plot,error bars/.cd,y dir=both,y explicit] coordinates {(1,0.00996622) +- (0,4.62e-05) (2,0.0201029) +- (0,0.000102) (4,0.0406934) +- (0,0.000212) (8,0.0829073) +- (0,0.00047) (16,0.172594) +- (0,0.00106) (32,0.374725) +- (0,0.00306) (64,0.883554) +- (0,0.01)};
\addplot[blue,thick] coordinates {(1,0.01) (2,0.0126) (3,0.013276) (4,0.0134518) (6,0.0135093) (8,0.0135132) (12,0.0135135) (16,0.0135135) (24,0.0135135) (32,0.0135135) (48,0.0135135) (64,0.0135135)};
\addlegendentry{$\alpha=\tfrac12$}
\addplot[blue,only marks,mark=o,mark size=2.2pt,forget plot,error bars/.cd,y dir=both,y explicit] coordinates {(1,0.0100216) +- (0,3.17e-05) (2,0.0126212) +- (0,4.95e-05) (4,0.0134319) +- (0,5.67e-05) (8,0.0135095) +- (0,5.63e-05) (16,0.013545) +- (0,5.67e-05) (32,0.0135087) +- (0,5.68e-05) (64,0.0135269) +- (0,5.58e-05)};
\addplot[blue,only marks,mark=x,mark size=2.6pt,forget plot,error bars/.cd,y dir=both,y explicit] coordinates {(1,0.00994729) +- (0,4.76e-05) (2,0.0125954) +- (0,6.1e-05) (4,0.0134065) +- (0,6.98e-05) (8,0.0135752) +- (0,7.74e-05) (16,0.0134728) +- (0,7.76e-05) (32,0.0135408) +- (0,8.22e-05) (64,0.0135738) +- (0,8.47e-05)};
\addplot[red,thick] coordinates {(1,0.01) (2,0.0101) (3,0.010101) (4,0.010101) (6,0.010101) (8,0.010101) (12,0.010101) (16,0.010101) (24,0.010101) (32,0.010101) (48,0.010101) (64,0.010101)};
\addlegendentry{$\alpha=1$}
\addplot[red,only marks,mark=o,mark size=2.2pt,forget plot,error bars/.cd,y dir=both,y explicit] coordinates {(1,0.00999414) +- (0,3.17e-05) (2,0.0101162) +- (0,3.34e-05) (4,0.0100873) +- (0,3.35e-05) (8,0.0101175) +- (0,3.35e-05) (16,0.0101065) +- (0,3.35e-05) (32,0.0101146) +- (0,3.36e-05) (64,0.010097) +- (0,3.38e-05)};
\addplot[red,only marks,mark=x,mark size=2.6pt,forget plot,error bars/.cd,y dir=both,y explicit] coordinates {(1,0.00996923) +- (0,4.85e-05) (2,0.0101361) +- (0,4.64e-05) (4,0.0101182) +- (0,4.71e-05) (8,0.0101191) +- (0,4.52e-05) (16,0.010034) +- (0,5.07e-05) (32,0.0100701) +- (0,5.56e-05) (64,0.0101319) +- (0,4.82e-05)};
\end{axis}
\end{tikzpicture}\hfill
\begin{tikzpicture}
\begin{axis}[width=0.5\textwidth,height=5.4cm,xmode=log,log basis x=2,xlabel={depth $L$},ylabel={gain of full compensation (dB)},
legend style={font=\scriptsize,at={(0.03,0.97)},anchor=north west,draw=none,fill=none},xtick={1,2,4,8,16,32,64},xticklabels={1,2,4,8,16,32,64},tick label style={font=\scriptsize},label style={font=\small},ymin=-1]
\addplot[gray,dashed,thick] coordinates {(1,0) (2,3.0103) (3,4.77121) (4,6.0206) (6,7.78151) (8,9.0309) (12,10.7918) (16,12.0412) (24,13.8021) (32,15.0515) (48,16.8124) (64,18.0618)};
\addlegendentry{$10\log_{10}L$}
\addplot[black,thick] coordinates {(1,0) (2,3.01008) (3,4.77121) (4,6.02082) (6,7.78216) (8,9.03199) (12,10.7938) (16,12.044) (24,13.8067) (32,15.0578) (48,16.8222) (64,18.0751)};
\addlegendentry{$\nu=10^{-4}$}
\addplot[black,only marks,mark=x,mark size=2.6pt,forget plot,error bars/.cd,y dir=both,y explicit] coordinates {(1,-0.0246986) +- (0,0.0203) (2,3.01808) +- (0,0.0237) (4,6.03606) +- (0,0.0245) (8,9.04952) +- (0,0.0297) (16,12.0254) +- (0,0.0302) (32,15.0812) +- (0,0.026) (64,18.0805) +- (0,0.0389)};
\addplot[blue,thick] coordinates {(1,0) (2,3.00813) (3,4.77121) (4,6.02277) (6,7.78803) (8,9.04176) (12,10.8114) (16,12.0695) (24,13.8478) (32,15.1146) (48,16.9105) (64,18.1949)};
\addlegendentry{$\nu=10^{-3}$}
\addplot[blue,only marks,mark=x,mark size=2.6pt,forget plot,error bars/.cd,y dir=both,y explicit] coordinates {(1,0.00102925) +- (0,0.0187) (2,3.01757) +- (0,0.0245) (4,6.00769) +- (0,0.0255) (8,9.00679) +- (0,0.0265) (16,12.0452) +- (0,0.0308) (32,15.1101) +- (0,0.0325) (64,18.1993) +- (0,0.0363)};
\addplot[red,thick] coordinates {(1,0) (2,2.98875) (3,4.77093) (4,6.04204) (6,7.84653) (8,9.13963) (12,10.9884) (16,12.3262) (24,14.2657) (32,15.696) (48,17.8255) (64,19.4525)};
\addlegendentry{$\nu=10^{-2}$}
\addplot[red,only marks,mark=x,mark size=2.6pt,forget plot,error bars/.cd,y dir=both,y explicit] coordinates {(1,-0.00130742) +- (0,0.0206) (2,2.97385) +- (0,0.024) (4,6.04419) +- (0,0.0295) (8,9.13449) +- (0,0.0293) (16,12.3555) +- (0,0.0338) (32,15.7068) +- (0,0.0426) (64,19.4054) +- (0,0.0511)};
\addplot[teal,thick] coordinates {(1,0) (2,2.90565) (3,4.76433) (4,6.12216) (6,8.10355) (8,9.57686) (12,11.7959) (16,13.5169) (24,16.261) (32,18.5451) (48,22.5194) (64,26.1531)};
\addlegendentry{$\nu=0.05$}
\addplot[teal,only marks,mark=x,mark size=2.6pt,forget plot,error bars/.cd,y dir=both,y explicit] coordinates {(1,0.0106165) +- (0,0.0177) (2,2.89823) +- (0,0.0325) (4,6.09175) +- (0,0.0325) (8,9.55486) +- (0,0.042) (16,13.5196) +- (0,0.0558) (32,18.4774) +- (0,0.0823) (64,26.1215) +- (0,0.204)};
\end{axis}
\end{tikzpicture}
\caption{Gain as a function of depth in the class of Example~\ref{ex:iso}, $d=4$, with the actual algorithm and dithered rounding of the weights. Lines are the closed form; marks are simulations with two standard errors. Left: $\nu=10^{-2}$; circles, fixed teacher $W_l=I$ ($2\times10^4$ runs); crosses, random teachers ($100$ teachers, $200$ runs each, errors clustered by teacher). Right: ratio of the two errors, random teachers, with standard errors paired by teacher. At depth one the two schemes coincide.}\label{fig:gain}
\end{figure}

\paragraph{Where the gain comes from.} Table~\ref{tab:split} concerns a linear network with two bottlenecks, where the row spaces of the original and quantized activations differ and Corollary~\ref{cor:sched} does not apply. The gain predicted by Proposition~\ref{prop:gain}, from sources estimated on one batch of runs, agrees within two standard errors with the gain measured on an independent batch. In this network between $2\%$ and $3\%$ of the gain comes from the reduced excitation of later rounding noise, the part that a per-layer comparison does not see; the rest is error removed directly. The gain increases with the strength here, which Theorem~\ref{thm:linear} does not guarantee.

\begin{table}[t]
\centering\small
\begin{tabular}{@{}cccccc@{}}
\toprule
$\alpha$ & removed directly & through later noise & predicted gain & measured gain & error of QEP\\
\midrule
$0.25$ & $0.0882$ & $0.0017$ & $0.0898$ & $0.0897\pm0.0022$ & $0.2093$\\
$0.5$ & $0.1356$ & $0.0030$ & $0.1386$ & $0.1392\pm0.0024$ & $0.1598$\\
$0.75$ & $0.1579$ & $0.0040$ & $0.1619$ & $0.1621\pm0.0024$ & $0.1369$\\
$1$ & $0.1640$ & $0.0046$ & $0.1686$ & $0.1686\pm0.0024$ & $0.1305$\\
\bottomrule
\end{tabular}
\caption{The two parts of the gain in Proposition~\ref{prop:gain}, in a linear network $8\to6\to4\to2$ with dither of step $0.3$ and strengths $\alpha_2=\alpha_3=\alpha$. The prediction is computed from the recursions \eqref{eq:gainD}--\eqref{eq:gainL} with sources estimated on $10^4$ runs; the measured gain comes from $10^4$ independent runs and is given with two standard errors. The mean-square error without compensation is $0.2991$.}\label{tab:split}
\end{table}

\paragraph{ReLU networks.} The left panel of Figure~\ref{fig:relu} is Example~\ref{ex:noflip}. Over all runs no activation changed state, in either scheme and at either layer, and the excess of QEP over BASE divided by $t^2$ lies between $0.071$ and $0.076$, with a standard error of $0.002$, for $t$ from $5\times10^{-4}$ to $10^{-1}$, against the leading coefficient $\tfrac2{27}\approx0.074$. The harm is a second-order effect of constant relative size, present at error sizes well beyond the interval for which the lower bound is proved. The right panel is the bottleneck family described after Corollary~\ref{cor:hr}, with sign-valued errors and exact enumeration. The projections do not commute with the cells, the largest commutator norm decreases as $t$, and the bound of Theorem~\ref{thm:relu} on the excess decreases as $t^3$; relative to the uncorrected error it is about $10^{-5}$ at $t=10^{-4}$ and $3\times10^{-3}$ at $t=3\times10^{-2}$. In this family the correction helps: the error of QEP is about $28\%$ of that of BASE at every $t$, so the bound is far from attained. No activation changed state.

\begin{figure}[t]
\centering
\begin{tikzpicture}
\begin{axis}[width=0.5\textwidth,height=5.4cm,xmode=log,xlabel={error size $t$},ylabel={$(\text{excess of QEP})/t^2$},ymin=0,ymax=0.11,
legend style={font=\scriptsize,at={(0.97,0.05)},anchor=south east,draw=none,fill=none},tick label style={font=\scriptsize},label style={font=\small}]
\addplot[gray,dashed,thick,domain=4e-4:0.12,samples=2] {2/27};
\addlegendentry{$2/27$}
\addplot[black,only marks,mark=o,mark size=1.8pt,error bars/.cd,y dir=both,y explicit] coordinates {(0.0005,0.0750302) +- (0,0.00315) (0.001,0.0756053) +- (0,0.00315) (0.00166667,0.0711813) +- (0,0.00315) (0.005,0.0758429) +- (0,0.00315) (0.01,0.0747112) +- (0,0.00314) (0.02,0.0716695) +- (0,0.00315) (0.05,0.0710288) +- (0,0.00315) (0.1,0.0736577) +- (0,0.00317)};
\addlegendentry{simulation}
\addplot[red,thick] coordinates {(0.0005,0.0530636) (0.0007,0.0446535) (0.001,0.0320321) (0.0013,0.0194031) (0.00166667,0.00395741)};
\addlegendentry{proved lower bound}
\end{axis}
\end{tikzpicture}\hfill
\begin{tikzpicture}
\begin{axis}[width=0.5\textwidth,height=5.4cm,xmode=log,ymode=log,xlabel={error size $t$},ymax=1e6,
legend style={font=\scriptsize,at={(0.03,0.97)},anchor=north west,draw=none,fill=none,legend cell align=left},tick label style={font=\scriptsize},label style={font=\small}]
\addplot[black,thick,mark=o,mark size=1.6pt] coordinates {(0.0001,7.2565e-07) (0.0003,6.53085e-06) (0.001,7.2565e-05) (0.003,0.000653088) (0.01,0.0072569) (0.03,0.0653409)};
\addlegendentry{error without compensation}
\addplot[blue,thick,mark=square,mark size=1.6pt] coordinates {(0.0001,2.0065e-07) (0.0003,1.80585e-06) (0.001,2.0065e-05) (0.003,0.000180588) (0.01,0.0020069) (0.03,0.0180911)};
\addlegendentry{error of QEP}
\addplot[red,thick,mark=triangle,mark size=1.8pt] coordinates {(0.0001,7.01033e-12) (0.0003,1.89279e-10) (0.001,7.01034e-09) (0.003,1.8928e-07) (0.01,7.01075e-06) (0.03,0.000189381)};
\addlegendentry{bound $\sum_c\mathrm{tr}\,K$ on the excess}
\addplot[teal,thick,mark=x,mark size=2.2pt] coordinates {(0.0001,1.0002e-05) (0.0003,3.0018e-05) (0.001,0.0001002) (0.003,0.000301811) (0.01,0.00102041) (0.03,0.00319144)};
\addlegendentry{largest commutator norm}
\end{axis}
\end{tikzpicture}
\caption{ReLU networks. Left: Example~\ref{ex:noflip}, rounding errors sampled from the uniform law of subtractive dither, $10^6$ independent runs per value of $t$, two standard errors; no activation changed state in any run. Right: the bottleneck family after Corollary~\ref{cor:hr} with $\eta=\tfrac1{10}$ and sign-valued errors, exact enumeration; here the correction helps, the bound on the excess decreases as $t^3$ and the commutator as $t$.}\label{fig:relu}
\end{figure}

These are small networks chosen to exhibit the statements; they do not test whether trained networks satisfy the hypotheses. The Monte Carlo values were computed twice, with two independent implementations and random generators, one of them free of any numerical library. On identical inputs the two agree to rounding error. Run independently they agree overall: of the $168$ values of Figure~\ref{fig:gain}, $151$ differ by at most two combined standard errors and the largest difference is $3.06$; these values are correlated and the comparison is descriptive.

\section{Relation to experiments}\label{sec:empirical}

The comparisons of Sections~\ref{sec:onelayer}--\ref{sec:actual} hold the calibration input fixed; the randomness is that of rounding. A quantized model is used on other inputs, and whether an improvement obtained on calibration data carries over to them is a different question, on which those results say nothing. This section reports what we observed on both questions: published observations, a layer-wise diagnostic on two small models, and experiments on two 8B models. None of the experiments satisfies the hypotheses of our theorems, and they are not tests of them.

\subsection{Published observations}

Table~1 of \cite{qep} reports WikiText-2 perplexity for $60$ pairs (four base methods, five models, three bit widths) with and without QEP. In $53$ pairs QEP is better. In $7$ pairs it is worse, all at 2 bits: RTN on Llama-2-7B and Llama-2-13B, GPTQ on Llama-2-13B and Llama-3-8B, and AWQ on Llama-2-7B, Llama-2-70B and Mistral-7B; in all seven the perplexity is above $10^3$ both with and without QEP. These experiments use $\alpha_l=\tfrac12$ by default, except in the MLP layers of Llama-2-70B, where $\alpha_l=0$. \cite{watersic} uses the same propagated objective under the name activation drift correction, reports that it becomes ``increasingly unstable in deeper layers'' of larger models, and chooses a per-layer coefficient between drift-corrected and original calibration statistics by search; in most deeper layers of the Qwen3-8B configuration they report, the selected coefficient is close to the uncorrected end. In their reply of 30 July 2025 to reviewer pLsN in the same public discussion as above, under ``Propagation Strength'' and ``Minor Question~1'', the authors of \cite{qep} report perplexities for GPTQ at 3 bits: on Llama-2-7B, $10.881$ without QEP, $7.898$ at $\alpha=\tfrac12$ and $8.169$ at $\alpha=1$; on Llama-2-70B, $4.860$, $4.102$ and $4.771$, where the middle entry uses $\alpha=0$ in the MLP down-projection matrices and $\tfrac12$ elsewhere. The authors attribute the loss at full strength to overfitting. Perplexity is not the quantity addressed by Proposition~5.4 of \cite{qep}. We have not run these configurations and do not attribute any of these observations to the mechanism of Section~\ref{sec:identity}.

\subsection{A layer-wise diagnostic on two small models}

If, for actual rounding errors, the ratio $c/d$ of Section~\ref{sec:identity} is typically far below the threshold of Corollary~\ref{cor:iff}, the failure mode of Section~\ref{sec:counter} has little practical relevance. We measured this ratio on SmolLM2-135M \cite{smollm2} and Pythia-160M \cite{pythia}, each with two calibration texts, WikiText-2 and a public-domain novel. All linear maps inside the transformer blocks, $210$ and $48$ respectively, are first rounded to the nearest point of an unbounded uniform grid with step $\max_j\lvert W_{ij}\rvert/7$ in row $i$. On this common quantized trajectory, and for each linear map separately, we compare the layer residual with and without the correction at $\alpha=\tfrac12$, using $4096$ calibration tokens; the corrected layer is not propagated further. The correction is computed from a truncated singular value decomposition of the quantized input, keeping singular values above $10^{-10}$ times the largest, without damping; all linear maps have full effective rank except one of the $46$ maps of Pythia-160M with $d>0$, for each text. Here $c$ is computed with the rounding error of the uncorrected layer. The labels in Table~\ref{tab:diag} use the following numerical conventions, with all energies divided by the number of tokens. A linear map enters the ratios when $d>10^{-12}\max(\fro{W\delta}^2,10^{-30})$; three maps of SmolLM2-135M and two of Pythia-160M do not. With the rounding error held fixed, a map is counted as worse when $2c-(2-\alpha)d>10^{-8}\max(d,\,2\lvert c\rvert,\,\fro{B}^2,\,10^{-30})$, where $B$ is the rounding error of the uncorrected layer applied to the quantized input. After re-rounding, on calibration or held-out tokens, the comparison uses the squared Frobenius residual per token, $\fro{WX-Q\Xh}^2/N$ for the quantized weight $Q$ and $N$ tokens: a map is counted as worse when this quantity with the correction exceeds its value without the correction by more than $10^{-8}$ times the larger of the two values (or $10^{-30}$). On held-out tokens the correction fitted on the calibration tokens is reused without refitting. The protocol, the thresholds and the sample were fixed before any measurement.

\begin{table}[h]
\centering\small
\begin{tabular}{@{}lcccc@{}}
\toprule
 & \multicolumn{2}{c}{SmolLM2-135M} & \multicolumn{2}{c}{Pythia-160M}\\
 & WikiText-2 & novel & WikiText-2 & novel\\
\midrule
Linear maps with $d>0$ & $207$ & $207$ & $46$ & $46$\\
Range of $c/d$ & $[-0.063,\,0.070]$ & $[-0.065,\,0.065]$ & $[-0.012,\,0.025]$ & $[-0.038,\,0.023]$\\
Worse, same rounding error & $0$ & $0$ & $0$ & $0$\\
Worse, re-rounded, calibration & $0$ & $0$ & $0$ & $0$\\
Worse, re-rounded, held-out & $19$ & $0$ & $11$ & $12$\\
Weights outside row range & $0.19\%$ & $0.21\%$ & $9.1\%$ & $9.3\%$\\
\bottomrule
\end{tabular}
\caption{Layer-wise diagnostic at $\alpha=\tfrac12$ with deterministic rounding. ``Worse'' counts linear maps whose squared residual is larger with the correction than without it, up to the numerical tolerances stated in the text. By Corollary~\ref{cor:iff}, with the rounding error held fixed the correction is worse exactly when $c/d>\tfrac34$. The held-out row applies the correction fitted on the calibration tokens to $2048$ other tokens. The first five rows come from one measurement on the unbounded grid. The last row comes from a second measurement, fixed in advance separately, on the same weights, corrections and quantized trajectory: it is the fraction, pooled over all weights of all $210$ and $48$ linear maps, of corrected weights $W+\alpha C$, where $C=W\delta\Xh^\top\Hh^{-1}$ (with the truncated singular value inverse specified above), whose magnitude exceeds the largest magnitude of the original row.}\label{tab:diag}
\end{table}

The ratio $c/d$ never exceeds $0.07$ in absolute value and its median is below $10^{-3}$ in absolute value: the cross term is small relative to the energy of the projected inherited error, and the condition of Corollary~\ref{cor:iff} is far from violated. This ratio is not a correlation coefficient, and a small value does not mean that the two errors are uncorrelated. Re-rounding the corrected weights does not change the counts on the calibration tokens. With subtractive dither and $16$ seeds, over the $8096$ pairs of a linear map and a seed, $\lvert c/d\rvert$ stays below $0.20$; no pair is worse with the rounding error held fixed; two pairs, the same linear map of SmolLM2-135M on the novel, are worse after re-rounding on the calibration tokens; and on held-out tokens the number of linear maps that are worse ranges over the seeds from $17$ to $20$, from $0$ to $2$, and equals $11$ and $12$ in the four columns. On these models we did not observe a cancellation large enough to make the correction worse with the rounding error held fixed.

What we did observe is of the other kind. On held-out tokens the correction fitted on the calibration tokens increases the residual of a sizeable fraction of the linear maps, up to $12$ of the $48$ maps of Pythia-160M. In addition, the correction moves weights out of the range of the original row, for $9\%$ of the weights of Pythia-160M at $\alpha=\tfrac12$ and $17$--$18\%$ at $\alpha=1$, so a bounded codebook with a range set by the original weights would overload, which is the situation of Section~\ref{sec:dither}. This diagnostic intervenes on one layer at a time on a common trajectory and does not run the two schemes to the output.

\subsection{Choosing the strength on two 8B models}

\paragraph{Setting.} We quantized Qwen3-8B and Meta-Llama-3-8B with our own implementation of a QEP-style correction on top of round-to-nearest; it is not a reproduction of QEP, GPTQ, AWQ or QuIP\#. The seven linear maps of each transformer block are quantized to 4 bits with an asymmetric uniform codebook (integer codes $0$--$15$, a scale and zero point per row and group of $128$ input channels); embeddings, normalizations and the output head stay in BF16, and the dequantized weights are executed in BF16, so no speed is claimed. The maps are processed in dependency order in four groups (query--key--value, attention output, gate--up, down), maps that share an input being quantized together; each policy follows its own quantized trajectory, and the inputs of every group are taken from the actual modules. For a group with quantized input $Z$ and teacher input $X$, the correction is fitted by ridge least squares with $\lambda=\mathrm{tr}(ZZ^\top)/d$, the mean eigenvalue of the Gram matrix, and every corrected candidate is re-quantized with recomputed scales and rounded deterministically. The ridge matters for reading the results: the matrix $P_\lambda=Z^\top(ZZ^\top+\lambda I)^{-1}Z$ that replaces $P$ is not a projection, and a direction in which the Gram matrix has eigenvalue $s$ is compensated in proportion $\alpha s/(s+\lambda)$. ``Strength $1$'' below means external strength $1$ with the ridge retained; on the fitting inputs and before re-quantization, in a direction with $s=\lambda$ it compensates half of the inherited error.

For each model we drew, three times, twelve disjoint 512-token windows from WikiText-2 training text: four ($2048$ tokens) to fit the corrections and eight ($4096$ tokens) to choose strengths. Evaluation uses the first $16384$ tokens of the WikiText-2 test text and of \emph{Pride and Prejudice} (Project Gutenberg 1342, with the wrapper removed and the front matter kept), in non-overlapping 512-token windows with the context reset for each window; neither enters any decision. Each model uses its own tokenizer. The policies are: no correction (RTN); strength $\tfrac12$ in every group; strength $\tfrac12$ with the down projections uncorrected; and four rules that choose, for each group, a strength in $\{0,\tfrac12,1\}$ on the selection data, by the output error of the group (local), the error after the next block kept at RTN (block), the loss of the full network with the remaining blocks kept at RTN (network), or a first-order estimate of that loss from its gradient with respect to the block output, applied to the re-quantized candidates (gradient). The replication protocol, including all seven policies, was fixed after an exploratory round on validation text and before any of the reported evaluations.

\begin{table}[h]
\centering\small
\begin{tabular}{@{}llcccccc@{}}
\toprule
Model & Text & RTN & $\alpha=\tfrac12$ & local & block & network & gradient\\
\midrule
Qwen3-8B & WikiText-2 & $12.643$ & $12.320$ & $12.119$ & $12.199\ (1)$ & $11.970\ (2)$ & $11.870\ (3)$\\
 & novel & $12.875$ & $12.618$ & $12.665$ & $12.552\ (3)$ & $12.613\ (2)$ & $12.582\ (1)$\\
Llama-3-8B & WikiText-2 & $8.264$ & $8.031$ & $8.018$ & $8.017\ (2)$ & $8.075\ (0)$ & $8.028\ (0)$\\
 & novel & $5.786$ & $5.625$ & $5.586$ & $5.627\ (1)$ & $5.766\ (0)$ & $5.689\ (1)$\\
\bottomrule
\end{tabular}
\caption{Perplexity at 4 bits, mean over three calibration draws. In parentheses, the number of draws, out of three, in which the rule has a lower perplexity than the local rule. RTN does not depend on the draw.}\label{tab:llm}
\end{table}

\paragraph{What we observed.} Every compensating policy has a lower mean perplexity than RTN in each model and text (Table~\ref{tab:llm}), and fixed strength $\tfrac12$ is lower than RTN in each of the twelve combinations of model, text and draw; the network rule on the novel with Llama-3-8B is above RTN in one draw. No rule beats the local rule consistently. The gradient rule wins all three draws on Qwen3-8B with WikiText-2 and none on Llama-3-8B with WikiText-2. On Llama-3-8B the network rule loses to the local rule in all six comparisons, although, strength $0$ being always a candidate, none of its $816$ decisions over both models and three draws increases the full-network loss on the selection data relative to leaving that group uncorrected. These two facts compare different things, the chosen candidate with no correction in the current state and one complete policy with another; the next paragraph compares the two policies directly. The gradient rule took $4.1$--$4.8$ times the construction time of the local rule in our prototype.

\paragraph{More selection data.} To see whether the selection data were simply too few, we kept the $2048$ fitting tokens of two of the draws, enlarged the selection data in nested sets of $4096$, $16384$ and $32768$ tokens, added a separate $8192$-token diagnostic set from the same source that enters no decision, and evaluated on new windows (tokens $65536$ to $81920$) of both texts. These runs record the loss of both final policies on every data set, so the same pair can be compared with the same metric on different data. On Llama-3-8B with $4096$ selection tokens, averaged over the two draws, the network policy has a lower loss per token than the local policy on the selection data, by $0.0210$, and a higher one on the diagnostic data ($+0.0084$), on WikiText-2 ($+0.0069$) and on the novel ($+0.075$): the ordering of the two policies reverses when the data change. With more selection data the advantage on the selection data shrinks ($-0.0116$ at $16384$ tokens, $-0.0087$ at $32768$) and so does the deficit elsewhere, to $+0.0025$ on WikiText-2 and $+0.030$ on the novel, behind in every draw on the novel. On Qwen3-8B the network rule is ahead at every size, most at $16384$ tokens ($-0.033$ and $-0.018$) and less at $32768$. Selection by the output error of the current block does not help consistently either. The candidates themselves were always fitted on the same $2048$ tokens, so these results do not separate the quality of the candidates from the accuracy of choosing among them.

\paragraph{Where the overestimate lies.} Measuring each policy against RTN separates the two rules. For a policy, take its gain over RTN on the selection data minus its gain over RTN on the diagnostic data; the difference between the two policies' values is the change of their ordering reported above. With $4096$ selection tokens, averaged over the two draws, this overestimate is $+0.022$ on Llama-3-8B and $+0.034$ on Qwen3-8B for the network rule, and $-0.007$ and $-0.006$ for the local rule. A further run with the fitting data of the first draw, two new selection draws and $32$ source windows used by no policy gave the same ordering in all four cases, with $0.029$ to $0.049$ for the network rule and $0.008$ to $0.022$ for the local rule. The network rule chooses by the loss of the full network on the selection data, which is also the quantity reported on those data; the local rule chooses by a linear output error. A small standard error does not exclude the overestimate: on Llama-3-8B in the first draw the network rule is ahead of the local rule on the selection data by $0.025$, about six standard errors over the eight windows, and behind on the diagnostic data. The overestimate is comparable in the two models; the gains are not. On the selection data the network rule is ahead of the local rule by $0.052$ on Qwen3-8B and by $0.021$ on Llama-3-8B, and only the first survives. Fixed strength $\tfrac12$ makes no choice on the selection data and pays no such cost. Reusing the same eight windows for every decision is not the explanation. In the same run, giving each decision eight windows not used before in its path, with everything else fixed, did not reduce the overestimate of the network rule: summed over the decisions and measured on a fixed set of eight held-out source windows, it went from $0.031$--$0.043$ with reused windows to $0.070$--$0.083$ with fresh ones, and the mean final loss on held-out source windows did not improve. Fresh windows remove reuse across decisions, but not the choice, at each decision, of the candidate that scores best on the very windows on which its gain is then measured. This run used a protocol, predictions and a stopping rule fixed before inference; fresh windows also change the amount of text seen, so it does not isolate reuse at an equal data budget.

\paragraph{Changing the suffix.} Choosing group by group, each rule above scores a candidate with the later blocks still at RTN, and then changes those blocks. To see whether this explains the failures, we replayed $48$ recorded decisions (two draws per model; three blocks per draw, four groups each; the groups of a fourth, final block serve as a control) with the earlier state and the three candidates fixed, scoring them once with the RTN suffix and once with the corrected suffix reconstructed from the recorded decisions. In $11$ of the $48$ the preferred candidate on the selection data reverses by more than $10^{-4}$ in loss; none of these $11$ reversals recurs, for the same pair and in the same direction, on the diagnostic data. Dependence on the suffix exists, but we found no evidence that it accounts for the losses on new text.

\paragraph{The strength from the fixed-error quadratic.} Remark~\ref{rem:alphastar} warns that the minimizer of a quadratic with the rounding error held fixed is not by itself a rule for an actual quantizer. We measured how far it goes at 2 bits (codes $0$--$3$, same group structure), in $48$ slices (two draws per model, three blocks spread over the depth, four groups), each candidate re-quantized with recomputed scales and applied to a common RTN trajectory. With $S$ the output of the ridge correction, the strength minimizing $\fro{R_0-\alpha S}^2$ on the selection data, clipped to $[0,1]$, ranks the re-quantized strengths with a mean rank correlation of $0.977$ to $0.994$ across models and draws, and on the diagnostic data it lowers the normalized residual of the group by $6.4$ to $9.2\%$ relative to strength $\tfrac12$, on average within each model and draw (in $47$ of the $48$ slices). Seven of the $48$ values were above $1$ before clipping, the largest $1.16$; since the ridge has already shrunk the correction, these numbers say nothing about the best unshrunk strength. The fixed pair $\{\tfrac12,1\}$, chosen between on the selection data with strength $0$ retained as a fallback (as in the grid of ten non-zero strengths), already retains $98.2$ to $99.6\%$ of the diagnostic gain of a ten-point grid. These are linear residuals, not perplexities. The scale is recomputed for each candidate, which places this setting outside Remark~\ref{rem:piecewise}; a high rank correlation shows that the quadratic predicts the ordering, not that the loss is a smooth function of the strength.

\subsection{What the experiments do and do not show}

On a fixed calibration input, Sections~\ref{sec:network}--\ref{sec:actual} give conditions under which compensation cannot hurt and mechanisms by which it can. In the experiments, compensation helped on average relative to no compensation, and on the small models the cancellation of Section~\ref{sec:counter} was not observed with the rounding error held fixed; we did not measure the cross term on the 8B models, so we neither confirm nor exclude it there. The held-out comparisons reveal a further limitation: gains measured on fitting or selection data need not persist on other inputs. A correction that lowers a layer residual on calibration tokens raises it on others, and a policy that is better than another on the selection data can be worse on new text. This is separate from the two increases on calibration tokens observed after re-rounding with dither in the small-model diagnostic, and from the fixed-input failures of Sections~\ref{sec:counter} and~\ref{sec:actual}. We have not identified the cause of the held-out losses. The overestimate is larger for the rule that chooses by the reported quantity, and reusing the selection data across decisions does not explain it; the finite fitting data, the size and domain of the selection data, the ridge and the interaction between layers are not separated by these experiments. Calibrating quantized weights against the output of a whole block rather than a single layer is not new \cite{brecq,omniquant}; our network rule goes further and scores candidates by the loss of the full network. What these experiments add is a measurement, in one setting, of how far such choices carry over to new text. The scope is narrow: two models, two or three draws, two texts that may appear in pretraining data, windows rather than independent documents, our own RTN-backed implementation and dequantized execution.

\section{Formal verification}\label{sec:lean}

The statements below are checked in Lean~4 (version 4.34.1) with Mathlib at commit \texttt{d13f23b}. The accompanying artifact contains 96 modules with 969 explicitly named theorem declarations, auxiliary lemmas included; every one of them depends only on the axioms \texttt{propext}, \texttt{Classical.choice} and \texttt{Quot.sound}. The number of declarations is not a count of results. The artifact also contains positive controls and ten deliberately false statements that are required to fail. The table lists the eleven modules for the statements of Sections~\ref{sec:identity}, \ref{sec:counter} and~\ref{sec:actual}, with 153 declarations; the remaining modules are described after it.

\begin{center}\small
\begin{tabular}{@{}lp{10.2cm}r@{}}
\toprule
Module & Content & Decl. \\
\midrule
\texttt{QepSharedE} & Theorem~\ref{thm:counter}, the sign remark, arbitrarily small $\varepsilon$ & 15 \\
\texttt{QepSharedEExtended} & Exact parameter domain of Theorem~\ref{thm:counter}: the quantized weights are non-zero for every $\varepsilon>0$ except $\varepsilon=1$ and $\varepsilon^2+\varepsilon=1$ & 6 \\
\texttt{QepAlpha} & Theorem~\ref{thm:alpha}; validity and monotonicity of the bound of \cite[Cor.~B.6]{qep} & 8 \\
\texttt{QepProjectionExample} & First part of Example~\ref{ex:proj} & 7 \\
\texttt{QepMatrixFamily} & Second part of Example~\ref{ex:proj} for $D=\varepsilon I$ and general inputs, and the instance $X=[\,I_2\mid\mathbf 1\,]$ & 25 \\
\texttt{QepCorrected} & Proposition~\ref{prop:identity}, Corollary~\ref{cor:iff}(i), finite weighted form of Theorem~\ref{thm:meansq} & 10 \\
\texttt{QepCorollary} & Corollary~\ref{cor:iff}(ii)--(iv) & 12 \\
\texttt{QepGram} & $P=\Xh^\top\Hh^{-1}\Xh$ is a symmetric idempotent with $\Xh P=\Xh$ when $\Xh$ has full row rank; Corollary~\ref{cor:iff}(i) for the actual QEP residuals; the decomposition of Remark~\ref{rem:local} & 13 \\
\texttt{QepDitherFinite} & Proposition~\ref{prop:dither} for independent finitely supported errors with zero mean and equal second moments & 6 \\
\texttt{QepInterior} & Theorem~\ref{thm:actual} and its two numerical instances & 43 \\
\texttt{QepInteriorAlpha} & The $\alpha_2$ instance of Section~\ref{sec:actual}, with its switch at $\tfrac58$ & 8 \\
\bottomrule
\end{tabular}
\end{center}

In \texttt{QepSharedE} the residuals and the bound $U$ are computed from the recursion of Section~\ref{sec:setting}, not assumed. \texttt{QepInterior} defines both encoders by an explicit four-way minimization and derives the selected codewords.

The remaining modules were reviewed independently of their authors. They cover the following, each within the stated scope.
\begin{itemize}[leftmargin=*,itemsep=1pt]
\item Theorem~\ref{thm:linear} for finitely supported error laws, with the actual Gram matrices, the expectation step, the joint induction on the two moments, and layers of zero strength.
\item Example~\ref{ex:relu} in full: the recursion with continuous dither, both integrals, the strict inequality and the probability of a positive output.
\item For the actual continuous dither on a fixed finite grid: the distribution of the saturating quantizer, the two bounded-codebook examples of Section~\ref{sec:dither}, and per-layer energy and cross-term statements on a common input with an arbitrary history, under explicit integrability and rank assumptions.
\item Rounding with feedback, for the actual sequential process with continuous dither: with parameters fixed in advance, the relation between final errors and innovations, the covariance of the final error, the output energy on an unbounded grid and the overload term for a finite codebook; with measurable parameters depending on earlier randomness, the joint law of the standardized innovations only.
\item Examples~\ref{ex:traj} and~\ref{ex:composite} and the repaired composite bound; the non-commutative family of Example~\ref{ex:proj}, with the spectral norm of $D$.
\item For Theorem~\ref{thm:relu}, algebra only: the cell-wise identities; the two inequalities \eqref{eq:comm} and the two matrix bounds derived from them, under a bilinear bound on the commutator that is not identified with the spectral norm; and an order-propagation step with defect terms in which the per-layer inequalities are hypotheses. These are not linked to the actual Gram projections, the activations or the expectations.
\item Proposition~\ref{prop:commute}; the schedule example and the example for the bounds in Section~\ref{sec:network}.
\end{itemize}

Not formalized: Theorem~\ref{thm:meansq} as an abstract conditional-expectation statement (the finite weighted version and the fixed-grid versions above are formalized); Theorem~\ref{thm:linear} for general error laws, where the integrability argument near singular Gram matrices is a paper proof; Proposition~\ref{prop:gain}, Corollary~\ref{cor:sched} and Example~\ref{ex:iso}, of which the first two were checked by exact enumeration on small networks and all three by the simulations of Section~\ref{sec:numerics}; the distributional property of subtractive dither on an unbounded grid for a general lattice, which we cite, and Proposition~\ref{prop:dither} for continuous dither; Theorem~\ref{thm:relu} as a statement about the actual stochastic network, Corollary~\ref{cor:hr}, the perturbation bound for projections and the weighted form of Proposition~\ref{prop:commute}; the expectation in Example~\ref{ex:noflip}, which is proved on paper in Appendix~\ref{app:noflip}; the energy statements for rounding with feedback when the parameters depend on earlier randomness; Remark~\ref{rem:piecewise}; the consequence drawn in Remark~\ref{rem:local} for an exact discrete minimizer; and the matrix family of Example~\ref{ex:proj} for general diagonal $D$. Verification consists of standard elaboration and kernel checking with printed axiom dependencies; no independent proof checker was run.

\section{Limitations and open questions}\label{sec:open}

\begin{itemize}[leftmargin=*,itemsep=2pt]
\item The theorems compare the two schemes on a fixed calibration input. They say nothing about new inputs, where the largest losses we observed occur (Section~\ref{sec:empirical}); the overestimate we measured is larger for the rule that selects by full-network loss, but reuse of the selection data does not explain it and we have not established the cause. The cross term $c/d$ was measured only on two small models with single-layer interventions; how often $2c>(2-\alpha)d$ occurs in larger models, at lower bit widths or along each scheme's own trajectory is not known to us.
\item The composite inequality $\fro{\delta_{L+1}^{\qep}}\le U^{\base}$, also asserted in \cite[Theorem~B.5]{qep}, fails as an exact statement (Example~\ref{ex:composite}) and holds to first order, in the repaired form given in Appendix~\ref{app:more}.
\item At the network level, Theorem~\ref{thm:linear} covers linear networks and Example~\ref{ex:relu} shows that the mean-square comparison fails for ReLU networks. Theorem~\ref{thm:relu} bounds the cost of non-commuting projections for ReLU networks but requires stable activations, which we have not established for any trained network; a statement that tolerates a small probability of activation changes is open, and other activations are not treated. How $\E(U^{\qep})^2$ and $\E(U^{\base})^2$ compare for linear networks under uniform dither is not settled in this paper.
\item We have not analyzed damping, although the experiments of Section~\ref{sec:empirical} use it. With a ridge term, $T-X=(Z-X)(I-\alpha P_\lambda)$ still contracts the inherited error on the fitting inputs, before re-quantization, but the inequality for the activation moment in the proof of Theorem~\ref{thm:linear} does not carry over as stated, and a network comparison is not established. With a ridge term the matrix playing the role of $P$ is not idempotent and Proposition~\ref{prop:identity} must be replaced by the general expansion $\fro{A-B-\alpha S}^2-\fro{A-B}^2=\alpha^2\fro{S}^2-2\alpha\ip{A-B}{S}$ for the correction direction $S$.
\item The counterexamples to the shared-perturbation theorem are small. Example~\ref{ex:proj} gives a matrix family and a non-commutative family; we have not looked for counterexamples with large or structured weights.
\end{itemize}

\section{Related work}

RTN, GPTQ \cite{gptq}, AWQ \cite{awq} and QuIP \cite{quip} are the layer-wise methods to which QEP is applied in \cite{qep}; QuIP\# \cite{quipsharp} extends QuIP with lattice codebooks. GPTAQ \cite{gptaq} adopts an asymmetric calibration objective of the same form and is the method from which the authors of \cite{qep} distinguish their work by the guarantee examined here. Qronos \cite{qronos}, NestQuant \cite{nestquant} and WaterSIC \cite{watersic} use the same propagated objective, and \cite{rescomp} studies the residual introduced by compensation itself. Error analyses that account for the quantization of earlier layers exist: SPFQ \cite{spfq} gives full-network error bounds for a stochastic path-following quantizer, and \cite{qronostheory} gives error bounds for OPTQ and Qronos and compares the two bounds. These are high-probability or worst-case bounds for specified algorithms; they do not assert that, realization by realization or for the whole network, correction is never worse than no correction, which is the statement examined here. For a single layer, the information-theoretic limits of quantizing for a matrix product are characterized in \cite{optqmm}, with nested-lattice constructions, and under an i.i.d.\ Gaussian weight model at high rate \cite{watersic} reports a rate gap of about $0.255$ bit to the information-theoretic limit. The independence of dithered quantization error from the input goes back to \cite{schuchman}; see \cite{graystockham,zamirfeder}. Subtractive dither has been applied to the compression of pretrained network weights \cite{choi}, and stochastic rounding to low-precision training \cite{gupta}; neither work compares corrected and uncorrected layer-wise quantization. Treating active and inactive ReLU samples differently in layer-wise reconstruction, with a cascade analysis, goes back to Net-Trim \cite{nettrim}. Quantizing a residual after predicting from an already quantized past is the structure of predictive coding and noise shaping, where the effect of noisy prediction and the limits on feedback gain are classical \cite{zke,oe}. Closer still are the classical analyses of finite-word-length effects. In roundoff-noise analysis of digital filters the output noise is computed through Gramians and minimized over realizations \cite{mullisroberts}, and error feedback is designed to reduce it; the noise gain is then a quadratic function of the feedback, with a gain term and a cross term and an optimal strength \cite{hinamoto}. The per-layer identity of Section~\ref{sec:identity} has this form and is elementary. The second-moment recursion of the uncorrected scheme is that of a linear system with multiplicative noise, and the propagation of orderings by positive maps is standard in that theory \cite{bamiehfilo}. The theory of dither is surveyed in \cite{lipshitz}. What is specific here is that the correction is a projection determined by the realized activations of the corrected scheme, so that it is neither a fixed feedback filter nor a fixed state transition, and that two trajectories are compared; the step to be proved is that this correction reduces both the error moment and the moment that excites later noise. That error feedback can amplify correlated errors is therefore not new; what we establish is a joint comparison of error and activation moments along two realized trajectories, its exact gain, and the limits imposed by nonlinear geometry and finite codebooks.

\section{Conclusion}

Under the hypotheses of Theorem~\ref{thm:linear}, compensating inherited quantization error with strengths fixed in advance does not increase the mean-square output error of a linear network. The proof compares the schemes along their own trajectories through both error and activation moments; the exact gain separates direct error removal from reduced excitation of later rounding noise. Stronger compensation is no worse under additional row-space conditions, with a closed-form depth gain in a specified model. Outside that setting we exhibit several ways in which the comparison fails. For ReLU networks, stable activations alone are insufficient; commutators between calibration projections and activation-pattern partitions bound the excess. For fixed perturbations, the per-layer comparison depends on the error cross term. With actual finite codebooks, even unique exact layer-wise minimizers without overload can yield a larger output error. Local improvement is therefore not a network guarantee. All of this holds the calibration input fixed; on two 8B models, the harder part was carrying an improvement on calibration data over to new text.

\paragraph{Use of AI tools.} Claude, ChatGPT and Codex assisted in developing counterexamples, parts of the analysis, numerical and Lean code, and manuscript revisions. The mathematical arguments are given in the paper and its appendices; Section~\ref{sec:lean} states the scope and limitations of the formal checks.

\appendix
\section{Proof for Example~\ref{ex:noflip}}\label{app:noflip}

% Requires amsmath, amssymb, amsthm; \fro{.} is the Frobenius norm, e.g. \newcommand{\fro}[1]{\|#1\|_F}
\paragraph{Setting.}
Columns are samples. Let
\[
X_1=\begin{pmatrix}1&-1&1\\-1&1&1\end{pmatrix},\qquad W_1=I_2,\qquad
W_2=\begin{pmatrix}1&-2\\-1&-1\end{pmatrix},
\]
so that $X_2=\mathrm{ReLU}(W_1X_1)=\begin{pmatrix}1&0&1\\0&1&1\end{pmatrix}$,
$W_2X_2=\begin{pmatrix}1&-2&-1\\-1&-1&-2\end{pmatrix}$ and
$X_3=\mathrm{ReLU}(W_2X_2)=\begin{pmatrix}1&0&0\\0&0&0\end{pmatrix}$.
Fix $t>0$ and let $E_1=tH$, $E_2=tG$, where the eight entries of $H=(h_{ij})$, $G=(g_{ij})$ are independent and uniform on $[-1,1]$.
Put $Q_1=W_1+E_1$, $Z_2=\mathrm{ReLU}(Q_1X_1)$ and
\[
Q_2^{\mathrm B}=W_2+E_2,\qquad
Q_2^{\mathrm Q}=W_2+W_2(X_2-Z_2)Z_2^\top(Z_2Z_2^\top)^{-1}+E_2 .
\]
For either scheme $Y=\mathrm{ReLU}(Q_2Z_2)$ and $L=\fro{Y-X_3}^2$. Let
$P_t=Z_2^\top(Z_2Z_2^\top)^{-1}Z_2$, $w=(1,-2)$, $e_1=(1,0,0)^\top$,
\[
K=\begin{pmatrix}h_{11}-h_{12}&0&h_{11}+h_{12}\\0&-h_{21}+h_{22}&h_{21}+h_{22}\end{pmatrix},
\]
\[
b=h_{11}-h_{12},\qquad c_t=wK(I-P_t)e_1,\qquad c_0=\tfrac13(-2h_{12}+4h_{21}).
\]

\begin{theorem}\label{thm:relu-counterexample}
\begin{enumerate}
\item[(a)] For $0<t\le\frac1{10}$ and every realization of $(H,G)$: all layer-1 pre-activations and all layer-2 pre-activations of both schemes have the strict signs of the teacher's; $Z_2=X_2+tK$; $\sigma_{\min}(Z_2)\ge 1-\sqrt8\,t\ge 1-3t>0$; and $Q_1$, $Q_2^{\mathrm B}$, $Q_2^{\mathrm Q}$ are invertible.
\item[(b)] For $0<t\le\frac1{10}$, $\ \mathbb E L_{\mathrm{QEP}}-\mathbb E L_{\mathrm{BASE}}=t^2\,\mathbb E(c_t^2-b^2)$, with $\mathbb E b^2=\frac23$ and $\mathbb E c_0^2=\frac{20}{27}$.
\item[(c)] For $0<t\le\frac1{10}$ and every $H$: $\|P_t-P_0\|_2\le 4t(1+t)$, $|c_t|\le 6$, $|c_0|\le 2$, $|c_t-c_0|\le 24t(1+t)$.
\item[(d)] For $0<t\le\frac1{10}$,
\[
\mathbb E L_{\mathrm{QEP}}-\mathbb E L_{\mathrm{BASE}}\;\ge\;t^2\Bigl(\tfrac{2}{27}-42\,t(1+t)\Bigr),
\]
and the right-hand side is strictly positive for $0<t\le\frac1{600}$.
\end{enumerate}
\end{theorem}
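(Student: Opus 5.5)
The plan is to reduce everything to the single active output entry, the first output unit on the first sample. Once part (a) fixes every activation state, the loss is an explicit quadratic in the errors on that entry.

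\emph{Part (a).} I would check sign stability directly. The layer-1 pre-activations are $X_1+tHX_1$, and every entry of $HX_1$ has modulus at most $2$. So for $t\le\frac1{10}$ each entry keeps the sign of the corresponding $\pm1$ entry of $X_1$. Hence $Z_2=X_2+tK$, where $K$ collects the first-order terms on the positive entries.

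Next I would bound the smallest singular value. Since $X_2X_2^\top=\bigl(\begin{smallmatrix}2&1\\1&2\end{smallmatrix}\bigr)$, we have $\sigma_{\min}(X_2)=1$. Also $\lVert tK\rVert_2\le t\fro{K}\le\sqrt8\,t$, and Weyl's inequality gives $\sigma_{\min}(Z_2)\ge1-\sqrt8\,t$.

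For layer 2 the computation in both schemes is the same. Using $(X_2-Z_2)Z_2^\top(Z_2Z_2^\top)^{-1}Z_2=(X_2-Z_2)P_t$, the QEP pre-activation is
\[
Q_2^{\mathrm Q}Z_2=W_2X_2+tW_2K(I-P_t)+tGZ_2 ,
\]
and the BASE pre-activation is $W_2X_2+tW_2K+tGZ_2$. All teacher layer-2 pre-activations have modulus at least $1$. The perturbations are $O(t)$ with explicit constants from $\lVert K\rVert$, $\lVert G\rVert$ and $\lVert Z_2\rVert$, so $t\le\frac1{10}$ preserves every sign.

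Invertibility of $Q_1$ and $Q_2^{\mathrm B}$ follows from $\det W_1=1$, $\det W_2=-3$ and a perturbation bound. For $Q_2^{\mathrm Q}$ the correction term has norm at most $\lVert W_2\rVert_2\,\sqrt8\,t/(1-\sqrt8\,t)$, and the same perturbation bound applies.

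\emph{Part (b).} By (a), only entry $(1,1)$ of $Y$ is nonzero in either scheme, and $X_3$ has the value $1$ there. Reading off the first row $w$ of $W_2$ and the first row $g_1$ of $G$, the two errors on that entry are $t\,b+t\,g_1Z_2e_1$ for BASE and $t\,c_t+t\,g_1Z_2e_1$ for QEP, since $wKe_1=h_{11}-h_{12}=b$.

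The $G$-term is identical in both schemes. Because $G$ is independent of $H$ and has mean zero, its cross terms with $b$ and $c_t$ vanish in expectation. The squared $G$-terms cancel in the difference, leaving
\[
\E L_{\mathrm{QEP}}-\E L_{\mathrm{BASE}}=t^2\,\E(c_t^2-b^2).
\]

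For $t=0$ I would compute $P_0$ explicitly: it is the projection onto the row space of $X_2$, and $(I-P_0)e_1=\tfrac13(1,1,-1)^\top$. This gives $c_0=\tfrac13(-2h_{12}+4h_{21})$. With $\E h^2=\frac13$ one gets $\E b^2=\frac23$ and $\E c_0^2=\frac13\cdot\frac{4+16}{9}=\frac{20}{27}$.

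\emph{Part (c).} I would apply the perturbation bound for row-space projections stated before Corollary~\ref{cor:hr}, or a Wedin-type $\sin\Theta$ bound with $\eta=\sqrt8\,t$ and $s=1$. Tightening the constant should reach $4t(1+t)$; this uses $1/(1-\sqrt8\,t)\le1+cst\cdot t$ for $t\le\frac1{10}$.

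The bounds $\lvert c_t\rvert\le6$ and $\lvert c_0\rvert\le2$ come from $\lvert h_{ij}\rvert\le1$. Writing $wK=(b,\;2h_{21}-2h_{22},\;h_{11}+h_{12}-2h_{21}-2h_{22})$ bounds its norm by a constant. For $c_0$ the explicit formula gives $\lvert c_0\rvert\le2$. For $c_t-c_0=-wK(P_t-P_0)e_1$ one gets a constant times $4t(1+t)$.

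\emph{Part (d).} I would write
\[
\E(c_t^2-b^2)=\tfrac2{27}+\E\bigl[(c_t-c_0)(c_t+c_0)\bigr]
\]
and bound the error term below by $-\E\bigl[\lvert c_t-c_0\rvert\,\lvert c_t+c_0\rvert\bigr]$. The positivity threshold is then arithmetic: $42\,t(1+t)<\frac2{27}$ for $t\le\frac1{600}$.

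The main obstacle is the constant $42$ in (d). The crude product of the sup bounds in (c) gives $24\cdot8=192$, which is too large. So I would keep the bound inside the expectation. First, bound $\lvert c_t-c_0\rvert$ by $\lvert wK\rvert\cdot\lVert P_t-P_0\rVert_2\cdot\lVert e_1\rVert$ pointwise, and $\lvert c_t+c_0\rvert$ by $2\lvert c_0\rvert+\lvert c_t-c_0\rvert$. Then take expectations using $\E\lvert wK\rvert^2$ and $\E\bigl[\lvert wK\rvert\,\lvert c_0\rvert\bigr]$, evaluated via Cauchy--Schwarz and the uniform second moments. If that still exceeds $42$, I would sharpen the projection perturbation using the structure of $K$, namely that only the $h$ entries enter, before accepting a worse constant.
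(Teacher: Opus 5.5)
Parts (a) and (b) of your plan follow the paper: entrywise sign stability at layer~1, $Z_2=X_2+tK$, Weyl's inequality, reduction of the loss to the $(1,1)$ entry, cancellation of the $g_{11}a$ term by independence, and $c_0=\tfrac13(-2h_{12}+4h_{21})$. One caution in (a): at $t=\tfrac1{10}$ the margins are thin, and the operator-norm estimate you suggest for the layer-2 gates does not close. There $t\lVert G\rVert_2\lVert Z_2\rVert_2$ can be about $0.40$ and the $K$-term about $0.60$, against a teacher margin of $1$. Bound entrywise instead: $\lvert(GZ_2)_{ij}\rvert\le(Z_2)_{1j}+(Z_2)_{2j}\le2+4t$ and $\lvert(W_2K(I-P_t))_{ij}\rvert\le\lVert e_i^\top W_2K\rVert_2$. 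Likewise, invertibility of $Q_2^{\mathrm Q}$ needs the exact singular values of $W_2$ (eigenvalues $\tfrac12(7\pm\sqrt{13})$ of $W_2W_2^\top$); estimates through $\fro{W_2}=\sqrt7$ are not sufficient.

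The genuine gap is in (c): the constants $6$ and $24$ are not established.
\begin{itemize}
\item Bounding the three components of $wK$ separately gives only $\lVert wK\rVert_2\le\sqrt{4+16+36}=\sqrt{56}$. That yields neither $\lvert c_t\rvert\le6$ nor $\lvert c_t-c_0\rvert\le24t(1+t)$. What is needed is the exact maximum. $\lVert wK\rVert_2^2$ is convex in $H$, so it is maximal at a vertex of $[-1,1]^4$. Enumeration gives $36$, attained at $h_{11}=h_{12}=1$, $h_{21}=h_{22}=-1$.
\item For $\lVert P_t-P_0\rVert_2$, only your second option works. The bound $2\eta/(s-\eta)$ quoted before Corollary~\ref{cor:hr} has leading constant $2\sqrt8\approx5.66>4$, so no tightening of $1/(1-\sqrt8\,t)$ brings it below $4t(1+t)$ for small $t$.
\item The direct Wedin-type argument does work. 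From $(I-P_0)Z_2^\top=t(I-P_0)K^\top$ you get $\lVert(I-P_0)P_t\rVert_2\le\sqrt8\,t/(1-\sqrt8\,t)$, and likewise $\lVert(I-P_t)P_0\rVert_2\le\sqrt8\,t$. Then $\lVert P_t-P_0\rVert_2$ is the larger of the two. Finally $\sqrt8/(1-\sqrt8\,t)\le4(1+t)$ on $[0,\tfrac1{10}]$, because the left side is convex and the inequality holds at both endpoints.
\end{itemize}
The paper argues differently: the complement of the row space of $Z_2$ is a line spanned by the cross product $m$ of the two rows. It then bounds $\sin\angle(m,m_0)\le\lVert m-m_0\rVert_2/\sqrt3$ coordinate by coordinate, with $m_0=(1,1,-1)^\top$. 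That uses the $2\times3$ shape; your route is general.

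In (d) your route genuinely differs from the paper's, and it closes, so your fallback is unnecessary. Put $\delta=c_t-c_0$. Then $\E\lVert wK\rVert_2^2=\tfrac23+\tfrac83+\tfrac{10}3=\tfrac{20}3$, so Cauchy--Schwarz gives $\E[\lVert wK\rVert_2\lvert c_0\rvert]\le\tfrac{20}9$. Together with $\lvert\delta\rvert\le4t(1+t)\lVert wK\rVert_2$, this yields $\E(c_t^2-c_0^2)\ge-\tfrac{160}9t(1+t)-\tfrac{320}3t^2(1+t)^2\ge-42\,t(1+t)$ for $t\le\tfrac1{10}$.

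The paper is shorter. It writes $c_t^2-c_0^2=2c_0\delta+\delta^2\ge-2\lvert c_0\rvert\lvert\delta\rvert$, discarding the non-negative $\delta^2$ that your estimate $\lvert c_t+c_0\rvert\le2\lvert c_0\rvert+\lvert\delta\rvert$ turns into a penalty. It then combines the sup bound $\lvert\delta\rvert\le24t(1+t)$ with $\E\lvert c_0\rvert\le\sqrt{20/27}\le\tfrac78$, which is where $42=48\cdot\tfrac78$ comes from.

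Each route buys something. Averaging $\lVert wK\rVert_2$ instead of taking its supremum gives you a leading constant of about $17.8$ and makes (d) independent of the vertex maximization. The paper's route needs that maximization but no second-moment computation.
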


\begin{proof}
\emph{Step 1 (layer-1 gates).} $Q_1X_1=X_1+tHX_1$ and row $i$ of $HX_1$ is $(h_{i1}-h_{i2},\,-h_{i1}+h_{i2},\,h_{i1}+h_{i2})$, with entries of modulus at most $2$. The entries of $X_1$ are $\pm1$, so no sign changes when $2t<1$, and $Z_2=X_2+tK$. Write
$k_{11}=h_{11}-h_{12}$, $k_{13}=h_{11}+h_{12}$, $k_{22}=h_{22}-h_{21}$, $k_{23}=h_{21}+h_{22}$ (so $|k_{ij}|\le2$) and
$a=1+tk_{11}$, $c=1+tk_{13}$, $b'=1+tk_{22}$, $d=1+tk_{23}$; then
$Z_2=\begin{pmatrix}a&0&c\\0&b'&d\end{pmatrix}$ with $a,b',c,d\in[1-2t,1+2t]$.

\emph{Step 2 (Gram matrix).} $\fro{K}^2=2\sum_{i,j}h_{ij}^2\le 8$. Since $X_2X_2^\top=\begin{pmatrix}2&1\\1&2\end{pmatrix}$ has eigenvalues $1,3$, $\sigma_{\min}(X_2)=1$, and Weyl's inequality gives $\sigma_{\min}(Z_2)\ge1-t\|K\|_2\ge1-\sqrt8\,t$. Hence $Z_2Z_2^\top$ is invertible and $\|Z_2^\top(Z_2Z_2^\top)^{-1}\|_2\le(1-\sqrt8\,t)^{-1}$.

\emph{Step 3 (layer-2 pre-activations).} $Q_2^{\mathrm B}Z_2=W_2X_2+tW_2K+tGZ_2$, and
\[
Q_2^{\mathrm Q}Z_2=W_2Z_2+W_2(X_2-Z_2)P_t+tGZ_2=W_2X_2+tW_2K(I-P_t)+tGZ_2 .
\]

\emph{Step 4 (rows of $W_2K$).} The rows are $r_1=wK=(k_{11},-2k_{22},k_{13}-2k_{23})$ and $r_2=(-k_{11},\allowbreak-k_{22},\allowbreak-k_{13}-k_{23})$. The functions $\|r_i\|_2^2$ are convex in $H$, so they are maximized at vertices of the cube, where $(k_{11},k_{13})\in\{(\pm2,0),(0,\pm2)\}$ and independently $(k_{22},k_{23})\in\{(\pm2,0),(0,\pm2)\}$. Enumerating the sign choices gives $\|r_1\|_2^2\le36$ and $\|r_2\|_2^2\le16$ at every vertex, so
\[
\|wK\|_2\le 6,\qquad \|r_2\|_2\le4 .
\]

\emph{Step 5 (layer-2 gates).} The $(i,j)$ entry of $W_2K(I-P_t)$ is $r_i(I-P_t)e_j$, of modulus at most $\|r_i\|_2\le6$; the same bound holds for $W_2K$. The $(i,j)$ entry of $GZ_2$ has modulus at most $z_{1j}+z_{2j}\le2+4t$. Thus both pre-activations differ from $W_2X_2$ entrywise by at most $t(8+4t)\le0.84<1$ for $t\le\frac1{10}$, whereas every entry of $W_2X_2$ has modulus at least $1$.

\emph{Step 6 (invertibility).} $\sigma_{\min}(Q_1)\ge1-t\fro{H}\ge1-2t>0$. The eigenvalues of $W_2W_2^\top=\begin{pmatrix}5&1\\1&2\end{pmatrix}$ are $\frac12(7\pm\sqrt{13})$, so $\sigma_{\min}(W_2)\ge1.3$ and $\|W_2\|_2\le2.305$; hence $\sigma_{\min}(Q_2^{\mathrm B})\ge1.3-2t>0$. Finally $Q_2^{\mathrm Q}=W_2-tW_2KZ_2^\top(Z_2Z_2^\top)^{-1}+tG$, and by Step~2 the perturbation has norm at most $t\bigl(2.305\sqrt8/(1-\sqrt8\,t)+2\bigr)\le1.11<1.3$ for $t\le\frac1{10}$. This proves (a).

\emph{Step 7 (the loss is one scalar).} By Step~5, $Y$ has the support of $X_3$, so $L=(Y_{11}-1)^2$ with $Y_{11}$ the $(1,1)$ pre-activation. As $z_{21}=0$ and $z_{11}=a$, Step~3 gives
\[
Y^{\mathrm B}_{11}-1=t\,(b+g_{11}a),\qquad Y^{\mathrm Q}_{11}-1=t\,(c_t+g_{11}a).
\]

\emph{Step 8 (cancellation of the last-layer error).} $b$, $c_t$, $a$ depend on $H$ only and $g_{11}$ is independent of $H$ with mean zero, so for $\xi\in\{b,c_t\}$,
$\mathbb E(\xi+g_{11}a)^2=\mathbb E\xi^2+\tfrac13\mathbb E a^2$. Subtracting gives the identity in (b).

\emph{Step 9 ($t=0$).} The orthogonal complement of the row space of $X_2$ is spanned by $m_0=(1,1,-1)^\top$, so $I-P_0=\frac13m_0m_0^\top$ and
$c_0=\frac13(k_{11}-2k_{22}-k_{13}+2k_{23})=\frac13(-2h_{12}+4h_{21})$. With $\mathbb Eh_{ij}^2=\frac13$: $\mathbb Eb^2=\frac23$, $\mathbb Ec_0^2=\frac{4+16}{27}=\frac{20}{27}$, and $|c_0|\le2$.

\emph{Step 10 (projection bound).} $m=(cb',\,ad,\,-ab')^\top$ is nonzero and orthogonal to both rows of $Z_2$, so $I-P_t=mm^\top/\|m\|_2^2$. For unit vectors $u,v$ spanning lines at angle $\theta\in[0,\frac\pi2]$, $\|uu^\top-vv^\top\|_2=\sin\theta$; therefore
\[
\|P_t-P_0\|_2=\sin\angle(m,m_0)=\frac{\operatorname{dist}(m_0,\operatorname{span}\{m\})}{\|m_0\|_2}\le\frac{\|m-m_0\|_2}{\sqrt3}.
\]
Each coordinate of $m-m_0$ is $\pm\bigl((1+t\alpha)(1+t\beta)-1\bigr)$ with $|\alpha|,|\beta|\le2$, of modulus at most $4t+4t^2$. Hence $\|P_t-P_0\|_2\le4t(1+t)$.

\emph{Step 11.} $|c_t|\le\|wK\|_2\,\|(I-P_t)e_1\|_2\le6$ and $|c_t-c_0|=|wK(P_0-P_t)e_1|\le6\cdot4t(1+t)=24t(1+t)$. This proves (c).

\emph{Step 12 (lower bound).} With $\delta=c_t-c_0$, $c_t^2-c_0^2=2c_0\delta+\delta^2\ge-2|c_0||\delta|$, so
\[
\mathbb Ec_t^2-\mathbb Ec_0^2\ge-48\,t(1+t)\,\mathbb E|c_0|\ge-48\,t(1+t)\sqrt{20/27}\ge-42\,t(1+t),
\]
because $\sqrt{20/27}\le\frac78$. By Steps~8--9,
$\mathbb EL_{\mathrm{QEP}}-\mathbb EL_{\mathrm{BASE}}=t^2\bigl(\frac2{27}+\mathbb Ec_t^2-\mathbb Ec_0^2\bigr)\ge t^2\bigl(\frac2{27}-42t(1+t)\bigr)$.
Since $t\mapsto42t(1+t)$ is increasing and equals $\frac{42\cdot601}{360000}<0.0702<\frac2{27}$ at $t=\frac1{600}$, the bound is positive on $(0,\frac1{600}]$.
\end{proof}

\begin{remark}
(i) Since $4t(1+t)\le12t$ and $6\le\sqrt{40}$, the cruder estimates $\|P_t-P_0\|_2\le12t$, $|c_t^2-c_0^2|\le960t$ and positivity for $0<t\le10^{-5}$ also hold.
(ii) Apart from Steps~8 and~9 the argument is deterministic and uses only $|h_{ij}|,|g_{ij}|\le1$; Step~8 uses that $G$ is independent of $H$ with $\mathbb Eg_{11}=0$, and Step~9 uses the second moments of the $h_{ij}$. If the $h_{ij}$ are i.i.d.\ with mean zero and second moment $s$, the leading coefficient is $\frac{20s}{9}-2s=\frac{2s}{9}$ (so $\frac29$ for symmetric signs $\pm1$).
(iii) $c_t(H)=-c_{-t}(-H)$, so for a symmetric law $t\mapsto\mathbb Ec_t^2$ is even and the true correction is $O(t^2)$; numerically $\mathbb E(c_t^2-b^2)\approx\frac2{27}-0.283\,t^2$ and stays positive on all of $(0,\frac1{10}]$; this is a numerical observation. The interval $(0,\frac1{600}]$ is the proved one and is not sharp.
\end{remark}

\section{Further counterexamples in the shared-perturbation model}\label{app:more}

\begin{example}[Not an artifact of $P=I$]\label{ex:proj}
Take one output, one input dimension and two calibration samples: $W=1$, $\Xh=(1,0)$, $X=(2,1)$, $E=1$, $\alpha=1$. Then $A:=W\delta=(1,1)$, $P=\mathrm{diag}(1,0)$, $B:=E\Xh=(1,0)$. The QEP residual is $A(I-P)-B=(-1,1)$ with squared norm $2$; the BASE residual on the same input is $A-B=(0,1)$ with squared norm $1$.

The counterexample of Theorem~\ref{thm:counter} also lifts to matrices. Let $d\ge2$, $L=2$, identity activations, $W_1=W_2=I_d$, $X=[\,I_d\mid\mathbf 1\,]\in\R^{d\times(d+1)}$, $\alpha_1=\alpha_2=1$, and $E_1=D$, $E_2=-D$ with $D=\mathrm{diag}(\varepsilon_1,\dots,\varepsilon_d)$, $0<\varepsilon_i<\tfrac12$. Both schemes use $\Wh_1=I+D$ and $\Xh_2=(I+D)X$, which has full row rank, and $P_2\neq I_{d+1}$. The correction is $-D(I+D)^{-1}$, so $\Wh_2^{\base}=I-D$ and $\Wh_2^{\qep}=(I+D)^{-1}-D$, all invertible, and
\[
R_1=-DX,\qquad R_2^{\base}=D^2X,\qquad R_2^{\qep}=(D+D^2)X .
\]
Hence $U^{\qep}>U^{\base}$; for $D=\varepsilon I$ the gap is exactly $\varepsilon\fro{X}$. The mechanism is the scalar one applied coordinate by coordinate.

A family that does not reduce to independent scalar channels: let $N=\bigl(\begin{smallmatrix}0&1\\0&0\end{smallmatrix}\bigr)$, $D=\mathrm{diag}(a,b)$ with $a>2b>0$, $W_1=I$, $W_2=D$, $X=I$, $\alpha=(1,1)$, and $E_1=tN$, $E_2=t(ND-DN)=(b-a)tN$ with $t>0$. Both schemes have $\Xh_2=I+tN$. The outputs are $D+btN$ for BASE and $D+(b-a)tN$ for QEP, so the output errors are $bt$ and $(a-b)t$, and $U^{\base}=(a+b)t$, $U^{\qep}=(2a-b)t$. All weights and Gram matrices are invertible for every $t$. Here $E_2$ is a commutator; when $a=b$ it vanishes, and QEP is exact while BASE has error $bt$.
\end{example}

\begin{example}[The trajectory substitution fails independently]\label{ex:traj}
Take $L=3$, scalars, identity activations, $W=(1,1,1)$, $X=1$, $\alpha=(1,1,1)$ and $E(t)=t\,(\tfrac12,-\tfrac1{10},-\tfrac7{27})$ with $0<t\le1$. All quantized weights are positive. On QEP's own inputs, the QEP residual is no larger in magnitude than the BASE residual evaluated on the same inputs, at every layer: the per-layer comparison on a common input holds throughout. Yet on each scheme's own trajectory the third-layer residuals satisfy
\[
\lvert R_3^{\qep}\rvert-\lvert R_3^{\base}\rvert=\frac{t\,(64+69t-14t^2)}{540}>0 ,
\]
with a positive first-order coefficient $16/135$. At $t=1$ the BASE residual is $0$ and the QEP residual is $119/540$. A layer-wise comparison on common inputs therefore does not transfer to the two actual trajectories, even to first order.
\end{example}

Theorem~B.5 of \cite{qep} also asserts the composite inequality $\fro{\delta_{L+1}^{\qep}}\le U^{\base}$, which compares the actual QEP output error with the BASE bound. In Theorem~\ref{thm:counter} it holds with equality. As an exact statement it fails as well, although not to first order in the example we know.

\begin{example}[The composite inequality fails for finite perturbations]\label{ex:composite}
Take $L=3$, scalars, identity activations, $W=(1,1,1)$, $X=1$, $\alpha=(1,1,1)$, and $E=(-\tfrac12,-\tfrac12,3)$. The BASE weights are $(\tfrac12,\tfrac12,4)$, with output $1$ and residuals $(\tfrac12,\tfrac34,0)$, so $U^{\base}=\tfrac54$. The QEP weights are $(\tfrac12,\tfrac32,\tfrac{13}{3})$, with output $\tfrac{13}{4}$ and output error $\tfrac94>U^{\base}$. Along the ray $E(t)=t(-\tfrac12,-\tfrac12,3)$, $0<t\le1$, the difference between the QEP output error and $U^{\base}$ equals $t(3t-1)/2$, which is positive only for $t>\tfrac13$; the first-order coefficients are $3$ and $\tfrac72$, so the inequality holds to first order along this ray.
\end{example}

\paragraph{A repaired composite bound.}
In the shared-perturbation model, put $w_l=\lVert W_l\rVert_2$ and $e_l=\lVert E_l\rVert_2$, and let
\[
\widetilde U^{\base}=\sum_{k=1}^{L}\gamma_k\fro{R_k^{\base}}
\prod_{s=k+1}^{L}\gamma_s(w_s+e_s).
\]
For Lipschitz activations and strengths in $[0,1]$, whenever the corrected recursion is defined,
\[
\fro{\delta_{L+1}^{\qep}}\le\widetilde U^{\base}.
\]
Only the Gram inverses at positive-strength layers are required; the weights need not be invertible. For a fixed finite teacher and input, if $e_l(t)\le t f_l$ with fixed finite $f_l$, then, uniformly over defined corrected trajectories and strengths,
\[
\begin{gathered}0\le\widetilde U^{\base}(t)-U^{\base}(t)\le Ct^2,\\
\fro{\delta_{L+1}^{\qep}(t)}\le U^{\base}(t)+Ct^2.\end{gathered}
\]
for some finite $C$ and $0<t\le t_0$. This is an additive remainder, not an exact comparison with the original bound.

\begin{proof}[Proof of the repaired composite bound]
Write $Z_l=\Xh_l^{\qep}$, $Y_l=\Xh_l^{\base}$,
$B_l=X_l-Y_l$, $D_l=Z_l-Y_l$, and
$b_l=\fro{B_l}$, $d_l=\fro{D_l}$, $h_l=b_l+d_l$.
Then $X_l-Z_l=B_l-D_l$, $h_1=0$ and $\fro{X_l-Z_l}\le h_l$.
Let $P_l$ be the row-space projection of $Z_l$ when $\alpha_l>0$, and set $P_l=0$ otherwise. Both $P_l$ and $I-\alpha_lP_l$ have norm at most one. The exact difference of the pre-activations is
\[
\Wh_l^{\qep}Z_l-\Wh_l^{\base}Y_l
=W_lD_l(I-\alpha_lP_l)+E_lD_l+\alpha_lW_lB_lP_l.
\]
The Lipschitz property therefore gives
\begin{align*}
d_{l+1}&\le\gamma_l[(w_l+e_l)d_l+\alpha_lw_lb_l],\\
b_{l+1}&\le\gamma_l\fro{R_l^{\base}}.
\end{align*}
Since $\alpha_lw_l\le w_l+e_l$, addition yields
\[
h_{l+1}\le\gamma_l\fro{R_l^{\base}}+\gamma_l(w_l+e_l)h_l.
\]
Unrolling from $h_1=0$ proves the claimed bound. The residuals and the projection have stayed on their respective trajectories throughout.

For the remainder, on the uncorrected trajectory
$R_l^{\base}(t)=(W_l+E_l(t))(X_l-Y_l(t))-E_l(t)X_l$, so
\[
\fro{R_l^{\base}(t)}\le(w_l+t f_l)b_l(t)+t f_l\fro{X_l}.
\]
Together with $b_{l+1}(t)\le\gamma_l\fro{R_l^{\base}(t)}$ and $b_1=0$, finite induction gives $b_l(t)=O(t)$ and $\fro{R_l^{\base}(t)}=O(t)$ uniformly. Each downstream product changes by $O(t)$ when its factors $w_s$ are replaced by $w_s+e_s(t)$: expand the finite product, with $0\le e_s(t)\le t f_s$. This does not divide by $w_s$ and also covers zero weights. Multiplying the product differences by the $O(t)$ residuals and summing proves the nonnegative $O(t^2)$ remainder.
\end{proof}

Thus the composite inequality of \cite{qep} is valid to first order in this precise sense, unlike the ordering of the two original bounds. For Example~\ref{ex:composite}, $\widetilde U^{\base}=6$.

\end{document}